\newif\ifisarxiv
 \title{Exact expressions for double descent
 and implicit regularization
 \\
 via surrogate random design}
   \author{%
           \textbf{Micha{\l } Derezi\'{n}ski} \\
   Department of Statistics\\
   University of California, Berkeley\\
   \texttt{mderezin@berkeley.edu}\\
   \and
   \textbf{Feynman Liang} \\
   Department of Statistics\\
   University of California, Berkeley\\
   \texttt{feynman@berkeley.edu}
   \and
    \textbf{Michael W. Mahoney}\\
   ICSI and Department of Statistics\\
   University of California, Berkeley\\
   \texttt{mmahoney@stat.berkeley.edu}
 }
   \providecommand{\BlackBox}{\rule{1.5ex}{1.5ex}}
   \renewenvironment{proof}%
   {%
    \par\noindent{\bfseries\upshape Proof\ }%
   }%
 {\hfill\BlackBox\par\bigskip}
\newcommand{\pdet}{{\mathrm{pdet}}}
\newcommand{\MSE}[1] {{\mathrm{MSE}\big[#1\big]}}
\def\Poisson{{\operatorname{Poisson}}}
\def\Ic{\mathcal{I}}
\def\Jc{\mathcal{J}}
\def\Mc{\mathcal M}
\def\ktd{{k^{\underline{d}}}}
\def\Det{{\mathrm{Det}}}
\newif\ifDRAFT
\newcommand{\marrow}{\marginpar[\hfill$\longrightarrow$]{$\longleftarrow$}}
\newcommand{\niceremark}[3]
   {\textcolor{red}{\textsc{#1 #2:} \marrow\textsf{#3}}}
\newcommand{\ken}[2][says]{\niceremark{Ken}{#1}{#2}}
\newcommand{\michael}[2][says]{\niceremark{Michael}{#1}{#2}}
\newcommand{\michal}[2][says]{\niceremark{Michal}{#1}{#2}}
\newcommand{\feynman}[2][says]{\niceremark{Feynman}{#1}{#2}}
\newcommand{\ken}[1]{}
\newcommand{\michael}[1]{}
\newcommand{\michal}[1]{}
\newcommand{\feynman}[1]{}
\def\ee{\mathrm{e}}
\newenvironment{proofof}[2]{\par\vspace{2mm}\noindent\textbf{Proof of {#1} {#2}}\ }{\hfill\BlackBox}
\DeclareMathOperator{\adj}{\textnormal{adj}}
\def\xib{\boldsymbol\xi}
\def\Sigmab{\mathbf{\Sigma}}
\def\S{\mathbf{S}}
\def\Xt{\widetilde{X}}
\def\Xb{{\bar{\X}}}
\def\ybb{\overline{\y}}
\def\f{{\mathbf{f}}}
\def\W{\mathbf W}
\def\Pc{\mathcal{P}}
\def\Nc{\mathcal{N}}
\def\Wc{\mathcal{W}}
\def\Q{\mathbf Q}
\newcommand{\BlackBox}{\rule{1.5ex}{1.5ex}}  % end of proof
\DeclareMathOperator*{\argmin}{\mathop{\mathrm{argmin}}}
\def\x{\mathbf x}
\def\y{\mathbf y}
\def\ybb{\bar{\mathbf y}}
\def\xbb{\bar{\mathbf x}}
\def\yb{{\bar y}}
\def\z{\mathbf z}
\def\w{\mathbf w}
\def\v{\mathbf v}
\def\wbh{\widehat{\mathbf w}}
\def\e{\mathbf e}
\def\zero{\mathbf 0}
\def\one{\mathbf 1}
\def\u{\mathbf u}
\def\f{\mathbf f}
\def\X{\mathbf X}
\def\B{\mathbf B}
\def\A{\mathbf A}
\def\D{\mathbf D}
\def\V{\mathbf V}
\def\Vc{\mathcal{V}}
\def\Bc{\mathcal{B}}
\def\Z{\mathbf Z}
\def\I{\mathbf I}
\def\Ic{\mathcal I}
\def\A{\mathbf A}
\def\Xt{\widetilde{\mathbf{X}}}
\def\E{\mathbb E}
\def\R{\mathbb R}
\def\Pr{\mathrm{Pr}}
\def\tr{\mathrm{tr}}
\def\rank{\mathrm{rank}}
\def\Var{\mathrm{Var}}
\def\pdet{\mathrm{pdet}}
\let\origtop\top
\renewcommand\top{{\scriptscriptstyle{\origtop}}} % this makes transpose not so big
\definecolor{silver}{cmyk}{0,0,0,0.3}
\definecolor{yellow}{cmyk}{0,0,0.9,0.0}
\definecolor{reddishyellow}{cmyk}{0,0.22,1.0,0.0}
\definecolor{black}{cmyk}{0,0,0.0,1.0}
\definecolor{darkYellow}{cmyk}{0.2,0.4,1.0,0}
\definecolor{orange}{cmyk}{0.0,0.7,0.9,0}
\definecolor{darkSilver}{cmyk}{0,0,0,0.1}
\definecolor{grey}{cmyk}{0,0,0,0.5}
\definecolor{darkgreen}{cmyk}{0.6,0,0.8,0}
\newenvironment{proof}{\par\noindent{\bf Proof\ }}{\hfill\BlackBox\\[2mm]}
\newtheorem{theorem}{Theorem}
\newtheorem{example}{Example}
\newtheorem{condition}{Condition}
\newtheorem{lemma}{Lemma}
\newtheorem{definition}{Definition}
\newtheorem{conjecture}{Conjecture}
\newtheorem{assumption}{Assumption}
\begin{document}
\maketitle

\begin{abstract}
Double descent refers to the phase transition that is exhibited by
the generalization error of unregularized learning models when varying the ratio
between the number of parameters and the number of training
samples. The recent success of highly over-parameterized machine learning
models such as deep neural networks has motivated a theoretical analysis of
the double descent phenomenon in classical models such as linear
regression which can also generalize well in the over-parameterized
regime. We provide the first exact non-asymptotic
expressions for double descent of the minimum norm linear
estimator. Our approach involves constructing a special
determinantal point process  which we call surrogate random
design, to replace the standard i.i.d.~design of the training
sample. This surrogate design admits exact expressions for the mean
squared error of the estimator while preserving the key properties
of the standard design. We also establish an exact implicit
regularization result for over-parameterized training samples. In
particular, we show that, for the surrogate design, the implicit bias
of the unregularized minimum norm estimator precisely corresponds to
solving a ridge-regularized least squares problem on the population
distribution. In our analysis we introduce a new mathematical tool of
independent interest: the class of random matrices for which
determinant commutes with expectation. 
\end{abstract}

\section{Introduction}

Classical statistical learning theory asserts that to achieve generalization
one must use training sample size that sufficiently exceeds the complexity of
the learning model, where the latter is typically represented by the number of
parameters \citep[or some related structural parameter; see][]{HFT09}.  In particular,
this seems to suggest the conventional wisdom that one should not use models
that fit the training data exactly.  However, modern machine learning practice
often seems to go against this intuition, using models with so many parameters
that the training data can be perfectly interpolated, in which case the
training error vanishes. It has been shown that models such as deep neural
networks, as well as certain so-called interpolating kernels and decision
trees, can generalize well in this regime. In particular,
\cite{BHMM19} empirically demonstrated a phase transition in generalization
performance of learning models which occurs at an \emph{interpolation
thershold}, i.e., a point where training error goes to zero (as one varies the
ratio between the model complexity and the sample size). Moving away from this
threshold in either direction tends to reduce the generalization error, leading
to the so-called \emph{double descent} curve.

To understand this
surprising phenomenon, in perhaps the simplest possible setting, we
study it in the context of linear or least squares regression.
Consider a full rank $n\times d$ data matrix $\X$ and a vector $\y$ of
responses corresponding to each of the $n$ data points (the rows of $\X$), where we wish to
find the best linear model $\X\w\approx \y$, parameterized by a
$d$-dimensional vector $\w$.
The simplest example of an estimator that has been shown to exhibit
the double descent phenomenon \citep{belkin2019two} is the
Moore-Penrose estimator, $\wbh=\X^\dagger\y$:
in the so-called over-determined regime, i.e., when $n>d$, it corresponds to the
least squares solution, i.e., $\argmin_{\w} \|\X\w-\y\|^2$; and in the
under-determined regime (also known as
over-parameterized or interpolating), i.e., when $n<d$, it
corresponds to the minimum norm solution to the linear system $\X\w=\y$.
Given the ubiquity of linear regression and the Moore-Penrose
solution, e.g., in kernel-based machine learning, studying the
performance of this estimator can shed some light on the effects of
over-parameterization/interpolation in machine learning more generally.
Of particular interest are results that are exact (i.e., not upper/lower bounds) and
non-asymptotic (i.e., for large but still finite $n$ and~$d$).

We build on methods from Randomized Numerical Linear Algebra (RandNLA) in order
to obtain \emph{exact non-asymptotic expressions} for the mean squared error
(MSE) of the Moore-Penrose estimator (see Theorem~\ref{t:mse}).  This provides
a precise characterization of the double descent phenomenon for the
linear regression problem. In obtaining these results,
we are able to provide precise formulas for the \emph{implicit regularization}
induced by minimum norm solutions of under-determined training samples,
relating it to classical ridge regularization (see Theorem~\ref{t:unbiased}).
To obtain our precise results, we use a somewhat non-standard random
design, based on a specially chosen determinantal point process (DPP), which we
term surrogate random design. DPPs are a family of non-i.i.d.~sampling
distributions which are typically used to induce diversity in the
produced samples \citep{dpp-ml}. Our aim in using a DPP as a surrogate
design is very different: namely, to make certain quantities (such as the MSE)
analytically tractable, while accurately \emph{preserving} the underlying
properties of the original data distribution. This strategy might seem
counter-intuitive since DPPs are typically found most useful when they
\emph{differ} from the data distribution. However, we show both theoretically
(Theorem~\ref{t:asymptotic}) and empirically
(Section~\ref{sec:asymp-conj-details}), that for many commonly studied data
distributions, such as multivariate Gaussians, our DPP-based surrogate
design accurately preserves the key properties of the standard i.i.d.~design
(such as the MSE), and even matches it exactly in the high-dimensional
asymptotic limit. In our analysis of the surrogate design, we
introduce the concept of \emph{determinant preserving
 random matrices} (Section \ref{s:dp}), a class of random matrices for which determinant
commutes with expectation, which should be of independent interest.

\subsection{Main results: double descent and implicit regularization}

As the performance metric in our analysis, we use the \emph{mean
  squared error} (MSE), defined as
$\mathrm{MSE}[\wbh]=\E\big[\|\wbh-\w^*\|^2\big]$, where $\w^*$ is a fixed
underlying linear model of the responses.
In analyzing the MSE, we make the following standard assumption that
the response noise is homoscedastic.
\begin{assumption}[Homoscedastic noise]\label{a:linear}
The noise $\xi =y(\x)- \x^\top\w^*$ has mean $0$ and variance $\sigma^2$.
\end{assumption}

\noindent
Our main result provides an exact expression for the MSE of the
Moore-Penrose estimator under our surrogate design denoted $\Xb\sim S_\mu^n$, where
$\mu$ is the $d$-variate distribution of the row vector $\x^\top$ and $n$ is the sample
size. This surrogate is
used in place of the standard $n\times d$ random design $\X\sim\mu^n$, where $n$
data points (the rows of $\X$) are sampled independently from
$\mu$. We form the surrogate by constructing a determinantal point
process with $\mu$ as the background measure, so that $S_\mu^n(\X)\propto
\pdet(\X\X^\top)\mu(\X)$, where $\pdet(\cdot)$ denotes the pseudo-determinant
(details in Section~\ref{s:determinantal}).  Unlike for the standard
design, our MSE formula is fully expressible as a function of the covariance
matrix $\Sigmab_\mu=\E_\mu[\x\x^\top]$. To state our main result, we
need an additional minor assumption on $\mu$ which is satisfied by
most standard continuous distributions (e.g., multivariate Gaussians).
\begin{assumption}[General position]\label{a:general-position}
For $1\leq n \leq d$, if $\X\sim\mu^n$, then $\rank(\X)=n$ almost surely.
\end{assumption}

\noindent
Under Assumptions~\ref{a:linear} and~\ref{a:general-position}, we can
establish our first main result, stated as the following theorem, where
we use $\X^\dagger$ to denote the Moore-Penrose inverse of $\X$.

\begin{theorem}[Exact non-asymptotic MSE]
\label{t:mse}
  If the response noise is homoscedastic % with variance~$\sigma^2$
  (Assumption~\ref{a:linear}) and $\mu$ is in
  general position (Assumption~\ref{a:general-position}), then for
  $\Xb\sim S_\mu^n$ (Definition \ref{d:surrogate}) and
  $\yb_i=y(\xbb_i)$, %we have
\vspace{-2mm}
  \begin{align*}
    \MSE{\Xb^\dagger\ybb} =
    \begin{cases}
    \sigma^2\,\tr\big((\Sigmab_\mu+\lambda_n\I)^{-1}\big)\cdot
    \frac{1-\alpha_n}{d-n}\ +\
\frac{\w^{*\top}(\Sigmab_\mu+\lambda_n\I)^{-1}\w^*}
{\tr((\Sigmab_\mu+\lambda_n\I)^{-1})}\cdot (d-n),
&\text{for }n<d,\\
\sigma^2\, \tr(\Sigmab_\mu^{-1}),& \text{for }n=d,\\
\sigma^2\,\tr(\Sigmab_\mu^{-1})\cdot\frac{1-\beta_n}{n-d},&\text{for
}n>d,
\end{cases}
  \end{align*}  
with $\lambda_n\geq 0$ defined by
  $n=\tr(\Sigmab_\mu (\Sigmab_\mu+\lambda_n\I)^{-1})$, \
  $\alpha_n=\det(\Sigmab_\mu(\Sigmab_\mu+\lambda_n\I)^{-1})$
  and $\beta_n=\ee^{d-n}$.
\end{theorem}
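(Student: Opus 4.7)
The plan is to start from a bias--variance decomposition of the MSE and then reduce all the resulting random-matrix expectations under $S_\mu^n$ to deterministic functionals of $\Sigmab_\mu$, exploiting two ingredients: (i) the DPP reweighting $S_\mu^n(\X)\propto\pdet(\X\X^\top)\mu^n(\X)$, which pairs nicely with expressions involving $(\Xb\Xb^\top)^{-1}$ or $(\Xb^\top\Xb)^{-1}$ via the adjugate identity $\adj(\A)=\det(\A)\A^{-1}$; and (ii) the \emph{determinant preserving} matrix framework of Section~\ref{s:dp}, which converts the resulting polynomial-in-the-entries expressions into expressions in $\Sigmab_\mu$.

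Writing $\ybb=\Xb\w^*+\xib$ with $\E[\xib\mid\Xb]=\zero$ and $\E[\xib\xib^\top\mid\Xb]=\sigma^2\I$ (Assumption~\ref{a:linear}), decomposing $\Xb^\dagger\ybb-\w^*=(\Xb^\dagger\Xb-\I)\w^*+\Xb^\dagger\xib$, and taking squared norms yields
\[
\MSE{\Xb^\dagger\ybb}=\E\!\big[\|(\I-\Xb^\dagger\Xb)\w^*\|^2\big]+\sigma^2\,\E\!\big[\tr(\Xb^\dagger(\Xb^\dagger)^\top)\big],
\]
the cross term vanishing by the conditional independence of $\xib$ and $\Xb$. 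Under Assumption~\ref{a:general-position} the three regimes separate cleanly: for $n\geq d$, $\Xb$ has full column rank almost surely, so $\Xb^\dagger\Xb=\I$ (zero bias) and $\tr(\Xb^\dagger(\Xb^\dagger)^\top)=\tr((\Xb^\top\Xb)^{-1})$; for $n<d$, $\Xb\Xb^\top$ is a.s.\ invertible, $\tr(\Xb^\dagger(\Xb^\dagger)^\top)=\tr((\Xb\Xb^\top)^{-1})$, and $\Xb^\dagger\Xb$ is a rank-$n$ orthogonal projection so the bias collapses to $\w^{*\top}\E[\I-\Xb^\dagger\Xb]\w^*$.

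For the under-determined case $n<d$, the DPP density weights by $\det(\Xb\Xb^\top)$, which cancels one inverse from $\Xb^\top(\Xb\Xb^\top)^{-1}\Xb$ and allows invoking Theorem~\ref{t:unbiased} to conclude
\[
\E_{S_\mu^n}\!\big[\Xb^\top(\Xb\Xb^\top)^{-1}\Xb\big]=\Sigmab_\mu(\Sigmab_\mu+\lambda_n\I)^{-1},
\]
with $\lambda_n$ pinned down by the trace constraint $n=\tr(\Sigmab_\mu(\Sigmab_\mu+\lambda_n\I)^{-1})$. Then $\I-\E[\Xb^\dagger\Xb]=\lambda_n(\Sigmab_\mu+\lambda_n\I)^{-1}$, and rewriting $\lambda_n=(d-n)/\tr((\Sigmab_\mu+\lambda_n\I)^{-1})$ from the constraint reproduces the stated bias term exactly. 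The variance term $\sigma^2\E_{S_\mu^n}[\tr((\Xb\Xb^\top)^{-1})]$ is then handled by the same machinery: writing $\tr((\X\X^\top)^{-1})\det(\X\X^\top)=\tr(\adj(\X\X^\top))$ and evaluating numerator and denominator via the determinant-preserving identities, I expect the factor $\alpha_n=\det(\Sigmab_\mu(\Sigmab_\mu+\lambda_n\I)^{-1})$ to emerge as the determinantal counterpart of the trace constraint, most naturally by differentiating the partition function of the reweighting in a ridge parameter. The case $n=d$ is then a direct specialization, while for $n>d$ a Cauchy--Binet expansion of $\det(\X^\top\X)$ over $d$-subsets of rows reduces the variance to a ratio of two determinant-preserving expectations.

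The main obstacle I foresee is the transcendental factor $\beta_n=\ee^{d-n}$ in the over-determined case. Being non-polynomial in $n$ and $d$, it cannot arise from any finite combinatorial identity, so I would expect its derivation to route either through an auxiliary Poisson-sized DPP (a construction frequently used to decouple the size of a fixed-size DPP) or through a generating-function argument in which the exponential appears as a limit of ratios of ridge-perturbed partition functions. Together with the $1-\alpha_n$ correction on the under-determined side, these finite-size factors encode all the nontrivial spectral content of the DPP reweighting beyond the leading ridge structure, and making them fully explicit is where the determinant-preserving framework of Section~\ref{s:dp} is deployed most heavily.
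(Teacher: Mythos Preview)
Your bias--variance decomposition and your treatment of the bias term for $n<d$ are exactly what the paper does: invoke the projection identity (equivalently Theorem~\ref{t:unbiased}/Lemma~\ref{l:proj}) to get $\E[\I-\Xb^\dagger\Xb]=\lambda_n(\Sigmab_\mu+\lambda_n\I)^{-1}$ and then rewrite $\lambda_n$ via the trace constraint.

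The genuine gap is in how you view the surrogate design itself. You treat $S_\mu^n$ as a fixed-size DPP on $n$ points and regard the Poisson as an \emph{auxiliary} device to be introduced later so that the transcendental $\beta_n=\ee^{d-n}$ can appear. But look again at Definition~\ref{d:surrogate}: for $n\neq d$ the sample size is \emph{already} $K\sim\Poisson(\gamma_n)$, with $\gamma_n=n-d$ when $n>d$. The exponential is therefore not a mysterious limit of partition-function ratios; it is simply $\ee^{-\gamma_n}$ from the Poisson mass function. Concretely, for $n>d$ the paper writes $\det(\X^\top\X)(\X^\top\X)^\dagger=\adj(\X^\top\X)$, notes that $\X^\top\X$ is determinant preserving by Lemma~\ref{l:poisson} (so $\E[\adj(\X^\top\X)]=\adj(\gamma_n\Sigmab_\mu)$), and then subtracts off the single defective term at $K=d-1$ where the adjugate identity fails; that correction is exactly where the $1-\ee^{-\gamma_n}=1-\beta_n$ factor comes from (Lemma~\ref{l:sqinv-over}). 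No Cauchy--Binet over $d$-subsets is needed.

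Similarly, for the under-determined variance your adjugate observation $\det(\X\X^\top)\tr((\X\X^\top)^{-1})=\tr(\adj(\X\X^\top))$ is the right starting point, but the paper does \emph{not} differentiate a partition function to extract $\alpha_n$. Instead it expands $\tr(\adj(\X\X^\top))=\sum_{i=1}^K\det(\X_{-i}\X_{-i}^\top)$ and uses the Poisson identity $\E\big[\sum_{i=1}^K f(\X_{-i})\big]=\gamma_n\,\E[f(\X)]$ (i.e., reindex $K\to K-1$) to reduce the numerator to $\gamma_n$ times a truncated expectation of $\det(\X\X^\top)$. Applying Lemma~\ref{l:normalization} once for the full sum and Lemma~\ref{l:cb} once for the missing $K=d$ term yields $\gamma_n\big(\ee^{-\gamma_n}\det(\I+\gamma_n\Sigmab_\mu)-\ee^{-\gamma_n}\det(\gamma_n\Sigmab_\mu)\big)$, and dividing by the normalizer produces $\gamma_n(1-\alpha_n)$ directly (Lemma~\ref{l:sqinv-all}). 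So the ``finite-size factors'' you flag as the hard part are in fact immediate once you use the Poisson structure already built into $S_\mu^n$; the heavy lifting in Section~\ref{s:dp} is only to justify commuting determinant with expectation at the two spots indicated.
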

\begin{definition}\label{d:expression}
  We will use $\Mc = \Mc(\Sigmab_\mu, \w^*,\sigma^2,n)$ to denote the above expressions
  for $\MSE{\Xb^\dagger\ybb}$.
\end{definition}

\begin{figure}[t]
\centering
\subfigure[%
  Surrogate MSE expressions (Theorem \ref{t:mse}) closely match
  numerical estimates even for non-isotropic
  features. Eigenvalue decay leads to a steeper
  descent curve in the under-determined regime ($n<d$).]{%
    \includegraphics[width=0.48\textwidth]{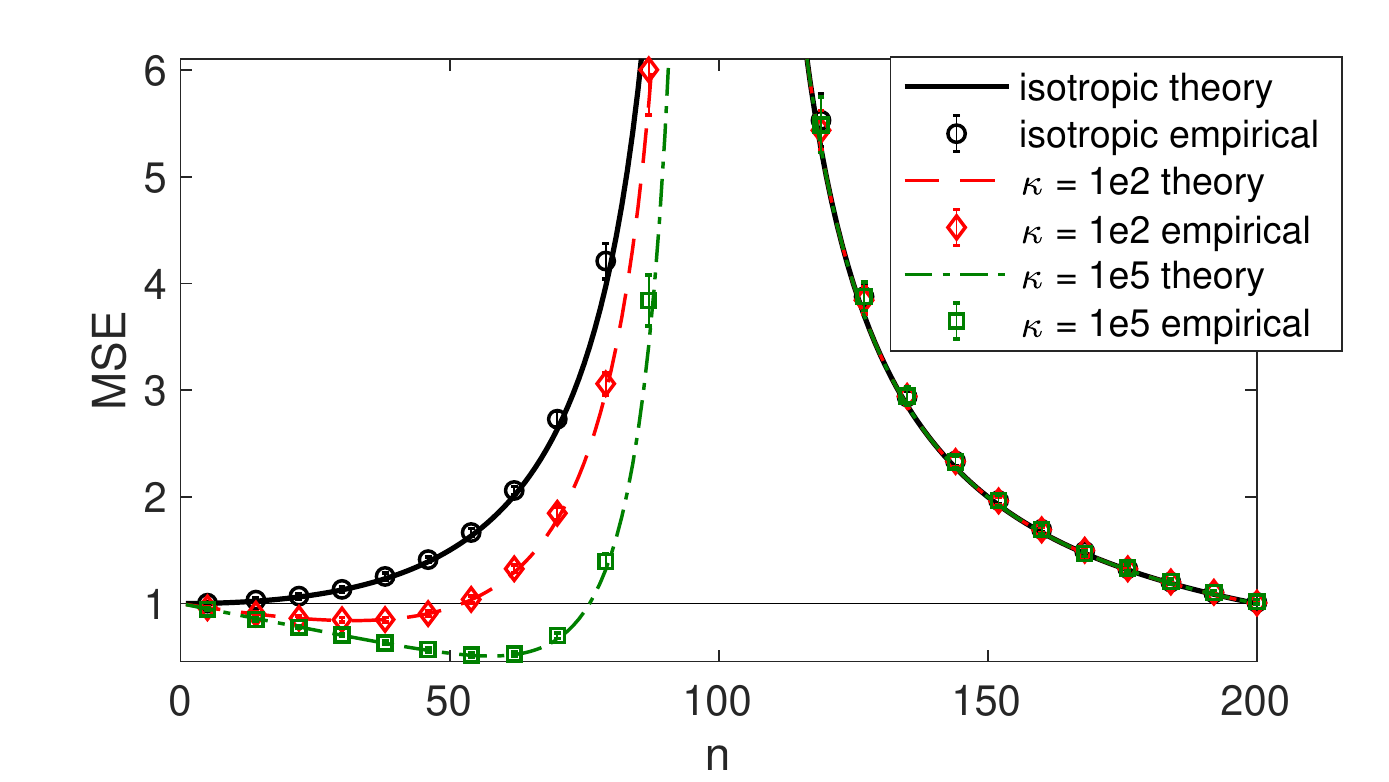}
  }
\hfill
\subfigure[%
    The mean of the estimator $\X^\dagger\y$ exhibits
    shrinkage which closely matches the shrinkage of a
    ridge-regularized least squares optimum (theory lines), as characterized by
    Theorem \ref{t:unbiased}.]{%
      \includegraphics[width=0.48\textwidth]{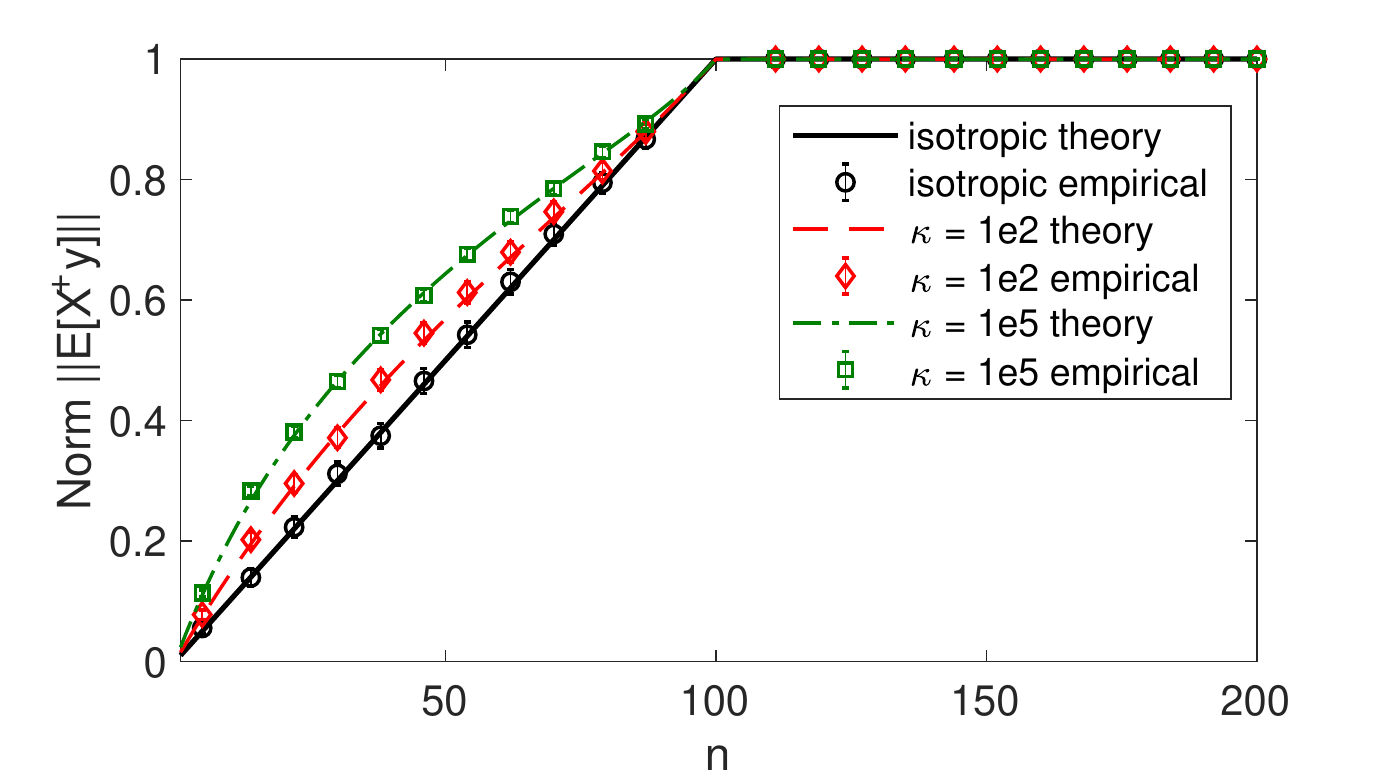}
    }
\caption{Illustration of the main results for $d=100$ and
$\mu=\Nc(\zero,\Sigmab)$ where $\Sigmab$ is diagonal with
eigenvalues decaying exponentially and scaled so that
$\tr(\Sigmab^{-1})=d$. We use our surrogate
formulas to plot (a) the MSE (Theorem \ref{t:mse}) and (b) the norm of the expectation (Theorem
\ref{t:unbiased}) of the Moore-Penrose estimator (\emph{theory}
lines), accompanied by the empirical estimates based on the standard
i.i.d.~design (error bars are three times the standard error of the
mean). We consider three different condition numbers $\kappa$ of
$\Sigmab$, with \emph{isotropic} corresponding to $\kappa=1$,
i.e., $\Sigmab=\I$. We use $\sigma^2=1$ and
$\w^*=\frac1{\sqrt{d}}\one$.}
%\vspace{-5mm}
\label{f:intro}
\end{figure}

\noindent
Proof of Theorem \ref{t:mse} is given in Appendix \ref{a:mse-proof}.
For illustration, we plot the MSE expressions in Figure~\ref{f:intro}a,
comparing them with empirical estimates of the true MSE under the
i.i.d.~design for a multivariate Gaussian distribution
$\mu=\Nc(\zero,\Sigmab)$ with several different covariance matrices $\Sigmab$. We keep the number of features $d$ fixed to
$100$ and vary the number of samples $n$, observing a double descent
peak at $n=d$. We observe that our theory aligns well with
the empirical estimates, whereas
previously, no such theory was available except for special
cases such as $\Sigmab=\I$ (more details in Theorem~\ref{t:asymptotic}
and Section~\ref{sec:asymp-conj-details}). The plots
show that varying the spectral decay of $\Sigmab$ has a significant effect on the
shape of the curve in the under-determined regime. We use the
horizontal line to denote the MSE of the null estimator
$\mathrm{MSE}[\zero]=\|\w^*\|^2=1$. When the eigenvalues of $\Sigmab$
decay rapidly, then the Moore-Penrose estimator suffers less error
than the null estimator for some values of $n<d$, and the curve
exhibits a local optimum in this regime.

One important aspect of Theorem~\ref{t:mse} comes from the relationship between $n$ and the parameter $\lambda_n$, which together satisfy $n=\tr(\Sigmab_\mu (\Sigmab_\mu+\lambda_n\I)^{-1})$.
This expression is precisely the classical notion of \emph{effective
  dimension} for ridge regression regularized with
$\lambda_n$~\citep{ridge-leverage-scores}, and it arises here even though there is
no explicit ridge regularization in the problem being considered in
Theorem~\ref{t:mse}. 
The global solution to the ridge regression task (i.e., $\ell_2$-regularized
least squares) with parameter $\lambda$ is defined as:
\begin{align*}
\argmin_\w \Big\{\E_{\mu,y}\big[\big(\x^\top\w-y(\x)\big)^2\big]
    + \lambda\|\w\|^2\Big\}\ =\ (\Sigmab_\mu +
  \lambda\I)^{-1}\v_{\mu,y},\quad\text{where } \v_{\mu,y}=\E_{\mu,y}[y(\x)\,\x ].
\end{align*}
When Assumption \ref{a:linear} holds, then
$\v_{\mu,y}=\Sigmab_\mu\w^*$, however ridge-regularized least squares
is well-defined for much more general response models.
Our second result makes a direct connection between the (expectation
of the) unregularized minimum norm solution on the sample
and the global ridge-regularized solution.
While the under-determined regime (i.e., $n<d$) is of primary interest to us,
for completeness we state this result for arbitrary values of $n$ and $d$.
Note that, just like the definition of regularized least squares, this
theorem applies more generally than Theorem~\ref{t:mse}, in that it
does \emph{not} require the responses to follow any linear model as in
Assumption~\ref{a:linear} (proof in Appendix~\ref{s:unbiased-proof}). 
\begin{theorem}[Implicit regularization of Moore-Penrose estimator]
\label{t:unbiased}
For $\mu$ satisfying
Assumption~\ref{a:general-position} and
  $y(\cdot)$ s.t.~$\v_{\mu,y}=\E_{\mu,y}[y(\x)\,\x]$ is well-defined,
  $\Xb\sim S_\mu^n$ (Definition \ref{d:surrogate}) and $\yb_i=y(\xbb_i)$,
  \begin{align*}
    \E\big[\Xb^\dagger\ybb\big] =
    \begin{cases}
       (\Sigmab_\mu + \lambda_n\I)^{-1}\v_{\mu,y} &\text{for }n<d,\\
        \Sigmab_\mu^{-1}\v_{\mu,y} &\text{for }n \ge d,
    \end{cases}
  \end{align*}
  where, as in Theorem \ref{t:mse}, $\lambda_n$ is such that the effective dimension
  $\tr(\Sigmab_\mu(\Sigmab_\mu+\lambda_n\I)^{-1})$ equals $n$.
\end{theorem}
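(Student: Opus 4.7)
My plan is to start from the definition of the surrogate law, which reduces the expectation to a ratio of unweighted $\mu^n$-expectations,
\begin{equation*}
\E\bigl[\Xb^\dagger\ybb\bigr] \;=\; \frac{\E_{\X\sim\mu^n}\!\bigl[\pdet(\X\X^\top)\,\X^\dagger\y\bigr]}{\E_{\X\sim\mu^n}\!\bigl[\pdet(\X\X^\top)\bigr]},
\end{equation*}
and then to evaluate the numerator and denominator separately using the determinant-preserving-matrix calculus developed in Section~\ref{s:dp}. By Assumption~\ref{a:general-position}, $\pdet(\X\X^\top)$ equals $\det(\X\X^\top)$ when $n\le d$ and $\det(\X^\top\X)$ when $n\ge d$, so in both regimes the weighting is an ordinary determinant, which is exactly what those tools are built for.

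For the over-determined case $n\ge d$, I would use $\X^\dagger=(\X^\top\X)^{-1}\X^\top$ so that $\pdet(\X\X^\top)\,\X^\dagger = \adj(\X^\top\X)\,\X^\top$. Expanding $\X^\top\y = \sum_i y(\x_i)\x_i$ and invoking row exchangeability reduces the numerator to $n\,\E\bigl[y(\x_1)\,\adj(\X^\top\X)\,\x_1\bigr]$. Writing $\X^\top\X = \X_{-1}^\top\X_{-1}+\x_1\x_1^\top$ and applying the matrix-determinant lemma produces the rank-one cancellation $\adj(\X^\top\X)\,\x_1 = \adj(\X_{-1}^\top\X_{-1})\,\x_1$, after which independence of $\x_1$ from $\X_{-1}$ factors the expectation as $n\,\E[\adj(\X_{-1}^\top\X_{-1})]\,\v_{\mu,y}$. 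Whitening ($\x_i\mapsto\Sigmab_\mu^{-1/2}\x_i$) forces both $\E[\adj(\X^\top\X)]$ and $\E[\det(\X^\top\X)]$ to be scalar multiples of $\adj(\Sigmab_\mu)$ and $\det(\Sigmab_\mu)$ respectively, and the determinant-preserving identities pin down these scalars so that the numerator-to-denominator ratio collapses to $\Sigmab_\mu^{-1}\v_{\mu,y}$.

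For the under-determined case $n<d$ I would use $\X^\dagger = \X^\top(\X\X^\top)^{-1}$, giving $\pdet(\X\X^\top)\,\X^\dagger\y = \X^\top\,\adj(\X\X^\top)\,\y$. To bring in a ridge parameter I would introduce $\lambda>0$ and use Sylvester's identity $\det(\lambda\I_n+\X\X^\top) = \lambda^{n-d}\det(\lambda\I_d+\X^\top\X)$ together with the push-through identity $\X^\top(\X\X^\top+\lambda\I)^{-1} = (\X^\top\X+\lambda\I)^{-1}\X^\top$. In the $\lambda\to 0^+$ limit, these rewrite both numerator and denominator as scalings of polynomial expressions in $\lambda\I+\X^\top\X$, to which the determinant-preserving identities apply directly. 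Passing to expectations and forming the ratio produces an expression of the form $(\Sigmab_\mu+\lambda_n\I)^{-1}\v_{\mu,y}$, where $\lambda_n$ is determined by the surviving normalization constraint; this constraint should simplify to the effective-dimension equation $n=\tr(\Sigmab_\mu(\Sigmab_\mu+\lambda_n\I)^{-1})$.

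The main obstacle is this last identification of the implicit regularizer $\lambda_n$ in the $n<d$ regime. After the determinant-preserving machinery converts the numerator and denominator into polynomials in $\lambda$ whose coefficients are elementary symmetric polynomials of the eigenvalues of $\Sigmab_\mu$, one must show that the $\lambda\to 0$ limit of their ratio reassembles into a single matrix inverse $(\Sigmab_\mu+\lambda_n\I)^{-1}$ with $\lambda_n$ given by the effective-dimension equation. This collapse from many symmetric-polynomial terms into a clean ridge inverse is the conceptual heart of the theorem, and is precisely what the determinant-preserving calculus of Section~\ref{s:dp} is designed to enable; the naive Cauchy--Binet expansion of $\pdet(\X\X^\top)$ does not reveal the ridge structure directly.
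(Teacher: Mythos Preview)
Your starting identity
\[
\E\bigl[\Xb^\dagger\ybb\bigr] \;=\; \frac{\E_{\X\sim\mu^n}\!\bigl[\pdet(\X\X^\top)\,\X^\dagger\y\bigr]}{\E_{\X\sim\mu^n}\!\bigl[\pdet(\X\X^\top)\bigr]}
\]
misreads Definition~\ref{d:surrogate}. The surrogate design $S_\mu^n$ is \emph{not} $\Det(\mu,n)$ with a deterministic sample of size~$n$; except at $n=d$ it is $\Det(\mu,K)$ with $K$ drawn from a (truncated) Poisson law of parameter~$\gamma_n$, and in the under-determined regime $\gamma_n$ is \emph{defined} by the effective-dimension equation $n=\tr\bigl(\Sigmab_\mu(\Sigmab_\mu+\gamma_n^{-1}\I)^{-1}\bigr)$. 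The paper says explicitly (just after Definition~\ref{d:surrogate}) that the fixed choice $K=n$ does not yield closed forms. So your plan to let $\lambda_n$ emerge as ``the surviving normalization constraint'' from a $\lambda\to 0^+$ limit is off target: $\lambda_n=1/\gamma_n$ is an input to the construction of $S_\mu^n$, not an output of the calculation.

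This is not a cosmetic issue, because the determinant-preserving calculus you want to invoke (Lemmas~\ref{l:poisson} and~\ref{l:normalization}) \emph{requires} the Poisson randomization of the row count. For fixed $K=n$, Lemma~\ref{l:cb} gives $\E[\det(\A^\top\B)]=\frac{n!}{(n-d)!}\,n^{-d}\det(\E[\A^\top\B])$ with a prefactor $\neq 1$, so $\X^\top\X$ is not d.p.\ and the collapse you anticipate does not occur. The paper's actual argument (Lemmas~\ref{l:ridge-under} and~\ref{l:ridge-over}) works with $\X\sim\mu^K$, $K\sim\Poisson(\gamma_n)$, and uses the coordinate identity
\[
\det(\X\X^\top)\,(\X^\dagger\y)_j \;=\; \det\!\bigl([\X,\y]\,[\X,\f_j]^\top\bigr) - \det(\X\X^\top),
\]
where $\f_j$ is the $j$th column of $\X$. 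Lemma~\ref{l:normalization} then converts each expectation into $\ee^{-\gamma_n}\det(\I+\gamma_n\,\E[\cdot])$, and the ridge inverse $(\Sigmab_\mu+\tfrac{1}{\gamma_n}\I)^{-1}$ falls out of the $\I+\gamma_n\Sigmab_\mu$ block by a single rank-one determinant expansion---no auxiliary $\lambda\to 0$ limit and no reassembly of symmetric polynomials. Your adjugate rank-one cancellation $\adj(\X^\top\X)\,\x_1=\adj(\X_{-1}^\top\X_{-1})\,\x_1$ in the over-determined case is a valid identity, but the expectations only close under the Poisson design, so that step would also have to be redone with random~$K$.
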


\noindent
That is, when $n < d$, the Moore-Penrose estimator (which itself is
not regularized), computed on the
random training sample, in expectation equals the global ridge-regularized least
squares solution of the underlying regression
problem. Moreover, $\lambda_n$, i.e., the amount
of implicit $\ell_2$-regularization, is controlled by the degree of
over-parameterization in such a way as to ensure that $n$ becomes the ridge effective dimension
(a.k.a.~the effective degrees of freedom).

We illustrate this result in Figure
\ref{f:intro}b, plotting the norm of the expectation of the
Moore-Penrose estimator. As for the MSE, our surrogate theory aligns
well with the empirical estimates for i.i.d.~Gaussian designs, showing
that the shrinkage of the unregularized estimator in the
under-determined regime matches the implicit
ridge-regularization characterized by Theorem \ref{t:unbiased}. While the shrinkage
is a linear function of the sample size $n$ for isotropic features
(i.e., $\Sigmab=\I$), it
exhibits a non-linear behavior for other spectral decays.
Such \emph{implicit regularization} has been studied
previously~\citep[see, e.g.,][]{MO11-implementing, %PM11, GM14_ICML,
Mah12}; it has
been observed empirically for RandNLA sampling
algorithms~\citep{MMY15}; and it has also received attention more
generally within the context of neural networks~\citep{Ney17_TR}. While our implicit regularization result
is limited to the Moore-Penrose estimator, this new connection (and
others, described below) between the minimum norm solution of an unregularized
under-determined system and a ridge-regularized least squares solution
offers a simple interpretation for the implicit regularization
observed in modern machine learning architectures.

Our exact non-asymptotic expressions in Theorem~\ref{t:mse} and
our exact implicit regularization results in Theorem~\ref{t:unbiased}
are derived for the surrogate design, which is a
non-i.i.d.~distribution based on a determinantal point process. However, Figure \ref{f:intro}
suggests that those expressions accurately describe the MSE (up to lower order
terms) also under the standard i.i.d.~design $\X\sim\mu^n$ when $\mu$ is a multivariate Gaussian.
As a third result, we verify that the surrogate expressions for the
MSE are asymptotically consistent with the MSE of an i.i.d.~design,
for a wide class of distributions which include multivariate Gaussians.

\begin{theorem}[Asymptotic consistency of surrogate design]
  \label{t:asymptotic}
  Let $\X \in \mathbb{R}^{n \times d}$ have i.i.d.~rows $\x_i^\top = \z_i^\top\Sigmab^{\frac12}$
  where $\z_i$ has independent zero mean and unit variance
  sub-Gaussian entries, and suppose that Assumptions~\ref{a:linear}
  and \ref{a:general-position} are satisfied. Furthermore, suppose
  that there
  exist $c, C, C^* \in \mathbb{R}_{> 0}$ such that 
  $C \I \succeq \Sigmab \succeq c \I \succ 0$ and $\| \w^* \| \le
  C^*$. Then 
  \begin{align*}
   \MSE{\X^\dagger\y} - \Mc(\Sigmab, \w^*,\sigma^2,n) \to 0
  \end{align*}
  with probability one as $d,n \to \infty$ with $n/d \to \bar c \in (0,\infty) \setminus \{1\}$.
\end{theorem}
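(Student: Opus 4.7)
The plan is to treat the two regimes $n<d$ and $n>d$ separately: in each case I would condition on $\X$, reduce the (conditional) MSE to a combination of traces and quadratic forms in the resolvent of the sample Gram matrix, invoke deterministic-equivalent results from random matrix theory for sub-Gaussian sample covariance matrices to identify each such random quantity with an explicit function of $\Sigmab$, $\w^*$, and a self-consistent regularization parameter, and then match the result algebraically to the surrogate expression $\Mc$ from Definition~\ref{d:expression}. In the over-determined regime ($n>d$), Assumption~\ref{a:general-position} together with sub-Gaussian concentration of the smallest singular value gives $\X$ full column rank almost surely, so $\X^\dagger\y=\w^*+(\X^\top\X)^{-1}\X^\top\xib$ and the conditional MSE collapses to $\sigma^2\tr((\X^\top\X)^{-1})$. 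In the under-determined regime ($n<d$), writing $\X^\dagger=\X^\top(\X\X^\top)^{-1}$ yields
\begin{align*}
\E_{\xib}\!\big[\|\X^\dagger\y-\w^*\|^2\,\big|\,\X\big] \;=\; \big\|(\I-\X^\top(\X\X^\top)^{-1}\X)\w^*\big\|^2 \;+\; \sigma^2\,\tr\big((\X\X^\top)^{-1}\big),
\end{align*}
with the bias term equal to the squared norm of the projection residual of $\w^*$ onto the row span of $\X$.

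Next I would invoke standard trace and anisotropic deterministic-equivalent results for sample covariance matrices $\tfrac{1}{n}\X^\top\X$ with independent sub-Gaussian rows. Under $c\I\preceq\Sigmab\preceq C\I$ and $n/d\to\bar c\ne 1$, the extreme singular values of $\X$ are bounded away from zero and infinity with high probability, which legitimizes setting the spectral parameter to $0$. For any fixed deterministic $\A$ of bounded operator norm and any fixed $\u,\v$ with $\|\u\|,\|\v\|\le 1$, one then has
\begin{align*}
\tr\!\big(\A(\X^\top\X)^{-1}\big) \;-\; \tfrac{1}{n-d}\tr\!\big(\A\Sigmab^{-1}\big) \;\to\; 0,\qquad \u^\top(\I-\X^\top(\X\X^\top)^{-1}\X)\v \;-\; \u^\top\lambda_n(\Sigmab+\lambda_n\I)^{-1}\v \;\to\; 0,
\end{align*}
where $\lambda_n\ge 0$ is the Marchenko--Pastur self-consistent regularization, which satisfies exactly the fixed-point relation $\tr(\Sigmab(\Sigmab+\lambda_n\I)^{-1})=n$ appearing in Theorem~\ref{t:mse}. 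Substituting these equivalents into the expressions from Step~1, with $\A=\I$ and $\u=\v=\w^*/\|\w^*\|$, yields $\sigma^2\tr(\Sigmab^{-1})/(n-d)$ in the over-determined regime and $(d-n)\,\w^{*\top}(\Sigmab+\lambda_n\I)^{-1}\w^*/\tr((\Sigmab+\lambda_n\I)^{-1}) + \sigma^2\tr((\Sigmab+\lambda_n\I)^{-1})/(d-n)$ in the under-determined regime; the correction factors $\alpha_n=\det(\Sigmab(\Sigmab+\lambda_n\I)^{-1})$ and $\beta_n=e^{d-n}$ in $\Mc$ are exponentially small in $|d-n|$ and so contribute $o(1)$ whenever $n/d\to\bar c\ne 1$ forces $|d-n|\to\infty$.

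Finally, almost-sure (rather than in-probability) convergence follows by a Borel--Cantelli argument using the polynomial or sub-Gaussian tail bounds in the quantitative local laws, with $c\I\preceq\Sigmab\preceq C\I$ and $\|\w^*\|\le C^*$ ensuring the implicit constants are uniform in $n$ and $d$. The main obstacle will be the bias term in the under-determined regime: trace-level deterministic equivalents already control $\tr((\X\X^\top)^{-1})$ directly, but $\|(\I-\X^\top(\X\X^\top)^{-1}\X)\w^*\|^2$ is a quadratic form against the fixed direction $\w^*$ and so demands an anisotropic local law (or an equivalent leave-one-out / Schur-complement expansion) that resolves how $\w^*$ interacts with the eigenvectors of the random projector $\X^\top(\X\X^\top)^{-1}\X$; verifying that the resulting deterministic limit matches exactly the surrogate pre-factor $(d-n)/\tr((\Sigmab+\lambda_n\I)^{-1})$ is where the bulk of the algebraic bookkeeping will lie.
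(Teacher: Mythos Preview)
Your proposal is essentially the paper's own proof: the same bias--variance decomposition \eqref{eq:mse-derivation}, the same identification of $\lambda_n$ via the Marchenko--Pastur fixed point, the same observation that $\alpha_n,\beta_n\to 0$ when $|n-d|\to\infty$, and the same recognition that the bias quadratic form requires an anisotropic (bilinear-form) deterministic equivalent rather than a trace-level one. The paper cites \cite{silverstein1995empirical} and \cite{bai1998no} for the trace terms and \cite{hachem2013bilinear} for the quadratic form, and handles the passage to spectral parameter $z=0$ explicitly via Arzel\`a--Ascoli and Moore--Osgood (using Bai--Yin for the singular-value gap), whereas you package the same content as ``anisotropic local laws'' with a Borel--Cantelli step; these are interchangeable routes to the same conclusion.
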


\noindent
The above result is particularly remarkable since our surrogate design
is a determinantal point process. DPPs are commonly used in ML to
ensure that the data points in a sample are well spread-out. However,
if the data distribution is sufficiently regular (e.g., a multivariate
Gaussian), then the i.i.d.~samples are already spread-out reasonably
well, so rescaling the distribution by a determinant has a negligible
effect that vanishes in the high-dimensional regime.
Furthermore, our empirical estimates (Figure~\ref{f:intro}) suggest that the surrogate expressions
are accurate not only in the asymptotic limit, but even for moderately large
dimensions. Based on a detailed empirical analysis described in
Section \ref{sec:asymp-conj-details}, 
we conjecture that the convergence described in
Theorem~\ref{t:asymptotic} has the rate of~$O(1/d)$.

\section{Related work}
\label{s:related-work}

There is a large body of related work, which for simplicity we cluster into three groups.

\textbf{Double descent.}
The double descent phenomenon has been observed empirically
in a number of learning models, including neural networks
\citep{BHMM19,GJSx19_TR}, kernel methods \citep{BMM18_TR,BRT18_TR},
nearest neighbor models \citep{BHM18_TR}, and decision trees \citep{BHMM19}. The
theoretical analysis of double descent, and more broadly the generalization
properties of interpolating estimators, have primarily focused on various forms of
linear regression \citep{BLLT19_TR,LR18_TR,HMRT19_TR,MVSS19_TR}. Note
that while we analyze the classical mean squared error, many 
works focus on the squared prediction error.
Also, unlike in our work, some of the literature on double descent deals with linear regression in the
so-called \emph{misspecified} setting, where the set of observed
features does not match the feature space in
which the response model is linear
\citep{belkin2019two,HMRT19_TR,Mit19_TR,MM19_TR}, e.g., when the
learner observes a random subset of $d$ features from a larger
population.

\begin{wrapfigure}{r}{\ifisarxiv 0.5\else 0.44\fi\textwidth}
 \ifisarxiv\else \vspace{-.2cm}\fi
  \centering
 \includegraphics[width=.43\textwidth]{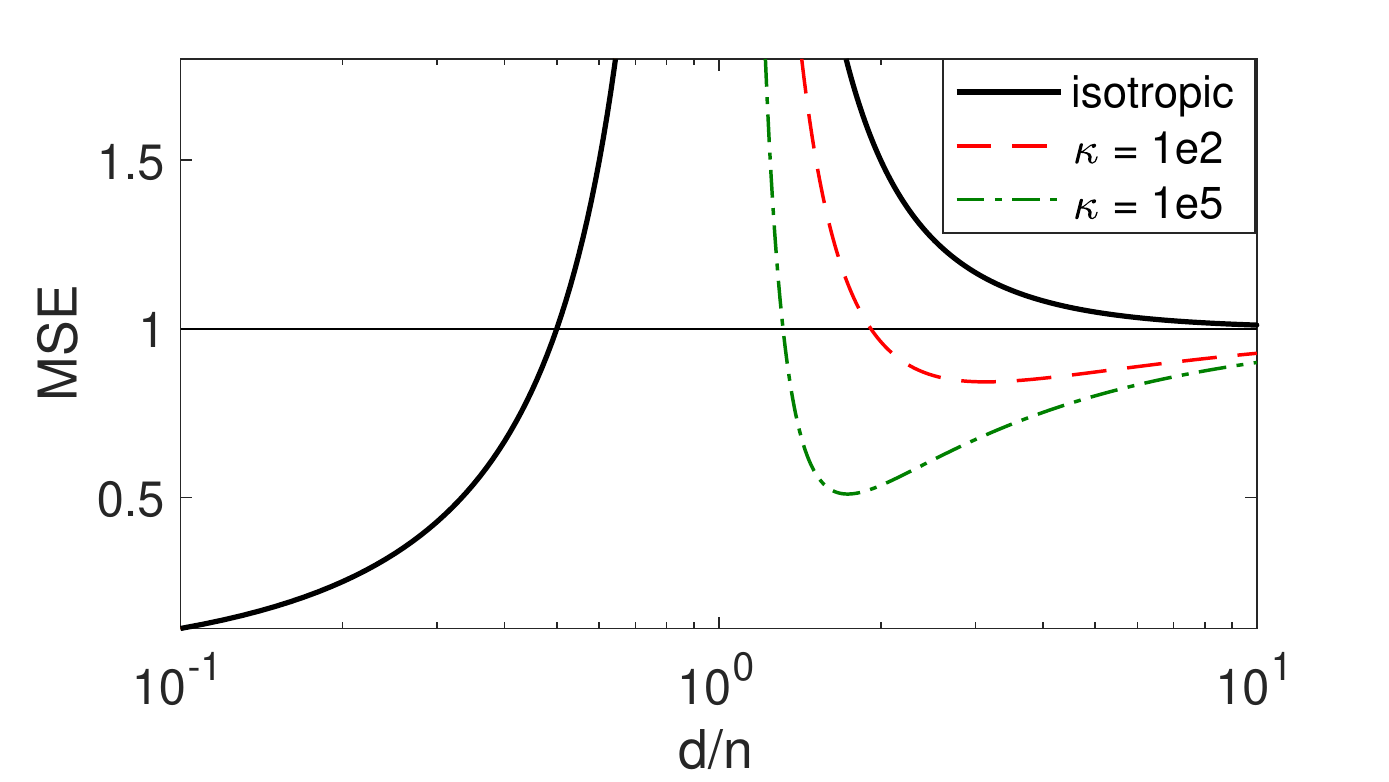}
\ifisarxiv\else \vspace{-3mm}\fi
  \caption{
Surrogate MSE as a function of $d/n$, with $n$
fixed to $100$ and varying $d$, for signal-to-noise ratio $\mathrm{SNR}=1$.}
  \label{f:model}
\ifisarxiv\else  \vspace{-.5cm}\fi
\end{wrapfigure}

The most directly comparable to our setting is the recent work of
\cite{HMRT19_TR}. They study how
varying the feature dimension affects the (asymptotic) generalization
error for linear regression, % for both well-specified and misspecified tasks
however their analysis is limited to certain special settings such as an
isotropic data distribution. As an additional
point of comparison, in Figure~\ref{f:model} we plot the MSE
expressions of Theorem~\ref{t:mse} when varying the
feature dimension~$d$ (the setup is the same as in
Figure~\ref{f:intro}). Our plots follow the  
trends outlined by \cite{HMRT19_TR} for the isotropic case (see their Figure~2),
but the spectral decay of the covariance (captured by our new
MSE expressions) has a significant effect on the descent curve.
This leads to generalization in the under-determined regime even when
the signal-to-noise ratio ($\mathrm{SNR}=\|\w^*\|^2/\sigma^2$) is 1,
unlike suggested by \cite{HMRT19_TR}.

\textbf{RandNLA and DPPs.}
Randomized Numerical Linear Algebra
\citep{DM16_CACM,RandNLA_PCMIchapter_TR} has
traditionally focused on obtaining purely algorithmic improvements for tasks
such as least squares regression, %  and low-rank approximation via techniques that include
% sketching \citep{sarlos-sketching} and i.i.d.~leverage score sampling
% \citep{Drineas2006sampling}. However,
but there has been growing
interest in understanding the statistical properties of these
randomized methods \citep{MMY15,GarveshMahoney_JMLR}.
Determinantal point processes \citep{dpp-ml} have been recently shown to combine
strong worst-case regression guarantees with elegant statistical
properties \citep{unbiased-estimates}.
However, these results are limited to the over-determined setting \citep{leveraged-volume-sampling,correcting-bias,minimax-experimental-design} and
ridge regression
\citep{regularized-volume-sampling,bayesian-experimental-design}.
Our results are also related to recent work on using DPPs 
to analyze the expectation of the 
inverse~\citep{determinantal-averaging} and generalized
inverse~\citep{MDK19_TR} of a subsampled matrix.

\textbf{Implicit regularization.}
The term implicit regularization typically refers to the notion that approximate
computation % (e.g., rather than exactly minimizing a function $f$,
% instead running an approximation algorithm to get an approximately
% optimal solution)
can implicitly lead to
statistical regularization. 
See~\cite{MO11-implementing, PM11, GM14_ICML} and references therein
for early work on the topic; and see~\cite{Mah12} for an overview.
More recently, often motivated by neural networks, there has been work
on implicit regularization that typically considered SGD-based
optimization algorithms.
See, e.g., theoretical results~\citep{NTS14_TR,Ney17_TR,SHNx17_TR,GWBNx17,ACHL19,KBMM19_TR}
as well as extensive empirical studies~\citep{MM18_TR,MM19_HTSR_ICML}.
The implicit regularization observed by us is different in that it is
not caused by an inexact approximation algorithm (such as SGD) but rather by the
selection of one out of many exact solutions (e.g., the minimum norm
solution). In this context, most relevant are the
asymptotic results of \cite{KLS18_TR} and \cite{LJB19_TR}.

\section{Surrogate random designs}
\label{s:determinantal}

In this section, we provide the definition of our surrogate random
design $S_\mu^n$, where $\mu$ is a $d$-variate probability measure and
$n$ is the sample size. This distribution is used in place
of the standard random design $\mu^n$ consisting of $n$ row vectors drawn
independently from $\mu$.

\textbf{Preliminaries.}
For an $n\times n$ matrix $\A$, we use $\pdet(\A)$ to denote the pseudo-determinant of $\A$,
which is the product of non-zero eigenvalues.
For index subsets $\Ic$
and $\Jc$, we use $\A_{\Ic,\Jc}$ to denote the submatrix of $\A$ with
rows indexed by $\Ic$ and columns indexed by $\Jc$. We may write
$\A_{\Ic,*}$ to indicate that we take a subset of rows.
We let $\X\sim\mu^k$ denote a $k\times d$ random matrix with rows
drawn i.i.d.~according to $\mu$, and the $i$th row is denoted as $\x_i^\top$.
We also let $\Sigmab_\mu=\E_{\mu}[\x\x^\top]$, where $\E_{\mu}$ refers to
the expectation with respect to $\x^\top\!\sim\mu$, assuming throughout that
$\Sigmab_\mu$ is well-defined and positive definite.
We use $\Poisson(\gamma)_{\leq a}$ as the Poisson distribution
restricted to $[0,a]$, whereas $\Poisson(\gamma)_{\geq a}$ is restricted
to $[a,\infty)$.
We also let $\#(\X)$ denote the number of rows of $\X$.

\begin{definition}\label{d:det}
  Let $\mu$ satisfy Assumption~\ref{a:general-position} and let $K$ be
  a random variable over $\mathbb{Z}_{\geq 0}$. A determinantal design
    $\Xb\sim \Det(\mu,K)$ is a
distribution with the same domain as $\X\sim\mu^K$ such that~for any
event $E$ measurable w.r.t.~$\X$, we have\vspace{-2mm}
\begin{align*}
\Pr\big\{\Xb\in E\big\}\ = \frac{\E[\pdet(\X\X^\top)\one_{[\X\in E]}]}{\E[\pdet(\X\X^\top)]}.
\end{align*}
\end{definition}\vspace{-2mm}

\noindent
The above definition can be
interpreted as rescaling the density function of $\mu^K$ by the
pseudo-determinant, and then renormalizing it. 
We now construct our surrogate design $S_\mu^n$ by appropriately
selecting the random variable $K$.
The obvious choice of $K=n$ does \emph{not} result in simple closed
form expressions for the MSE in the under-determined regime (i.e.,
$n<d$), which is the regime of primary interest to us. 
Instead, we derive our random variables $K$ from the Poisson
distribution. 
\begin{definition}\label{d:surrogate}
For $\mu$ satisfying Assumption~\ref{a:general-position},
define surrogate design $S_\mu^n$ as $\Det(\mu,K)$ where:
\vspace{-2mm}
    \begin{enumerate}
\item if $n<d$, then $K\sim \Poisson(\gamma_n)_{\leq d}$ with
 $\gamma_n$ as the solution of
$n=\tr(\Sigmab_\mu(\Sigmab_\mu+\frac1{\gamma_n}\I)^{-1})$,
 \vspace{-2mm}
\item if $n=d$, then we simply let $K=d$,
  \vspace{-2mm}
\item if $n>d$, then $K\sim\Poisson(\gamma_n)_{\geq d}$ with $\gamma_n=n-d$.
\end{enumerate}
\end{definition}

\noindent
Note that the under-determined case, i.e., $n<d$, is restricted to $K\leq d$ so that, under Assumption~\ref{a:general-position}, $\pdet(\X\X^\top)=\det(\X\X^\top)$ with probability 1.
On the other hand in the over-determined case, i.e., $n>d$, we have
$K\geq d$ so that $\pdet(\X\X^\top)=\det(\X^\top\X)$. In the special case
of $n=d=K$ both of these equations are satisfied: $\pdet(\X\X^\top)=\det(\X^\top\X)=\det(\X\X^\top)=\det(\X)^2$.

The first non-trivial property of the surrogate design $S_\mu^n$ is
that the expected sample size is in fact always equal to $n$, which we
prove in Appendix \ref{appx: proof-of-l-size}.
\begin{lemma} \label{l:size}
Let $\Xb\sim S_\mu^n$ for any $n>0$.
 Then, we have $\E[\#(\Xb)] = n$.
\end{lemma}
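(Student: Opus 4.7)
The plan is to directly compute the induced distribution of $\#(\Xb)$ from Definitions~\ref{d:det} and~\ref{d:surrogate}. The case $n=d$ is immediate since $K \equiv d$ is deterministic. The remaining two cases hinge on one identity and its companion, both consequences of the determinant-preserving structure to be developed in Section~\ref{s:dp}: for $k \le d$,
\begin{align*}
\E_{\X \sim \mu^k}\!\big[\det(\X\X^\top)\big] \ =\ k!\, e_k(\Sigmab_\mu),
\end{align*}
where $e_k$ denotes the $k$-th elementary symmetric polynomial in the eigenvalues of $\Sigmab_\mu$ (equivalently, $\sum_{|S|=k}\det(\Sigmab_{\mu,S,S})$); for $k \ge d$, a Cauchy--Binet decomposition of $\det(\X^\top\X)$ into $\binom{k}{d}$ sub-Gram-determinants of $d$ rows yields the companion $\E_{\X \sim \mu^k}[\det(\X^\top\X)] = \frac{k!}{(k-d)!}\det(\Sigmab_\mu)$. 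Intuitively, the antisymmetry of the Leibniz expansion combined with row independence kills all contributions involving higher moments of $\mu$, leaving an expression in $\Sigmab_\mu$ alone.

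For $n < d$, Assumption~\ref{a:general-position} gives $\pdet(\X\X^\top) = \det(\X\X^\top)$ almost surely when $k \le d$, so combining the restricted Poisson weight $\Pr(K=k) \propto \gamma_n^k/k!$ with the first identity yields
\begin{align*}
\Pr\big(\#(\Xb) = k\big) \ \propto\ \gamma_n^k\, e_k(\Sigmab_\mu), \qquad k = 0, 1, \ldots, d.
\end{align*}
Recognizing the characteristic polynomial $\sum_{k=0}^d \gamma^k e_k(\Sigmab_\mu) = \det(\I + \gamma\Sigmab_\mu)$ and differentiating gives $\sum_k k\gamma^k e_k(\Sigmab_\mu) = \gamma\,\det(\I+\gamma\Sigmab_\mu)\cdot\tr(\Sigmab_\mu(\I+\gamma\Sigmab_\mu)^{-1})$, so
\begin{align*}
\E[\#(\Xb)] \ =\ \gamma_n\cdot\tr\!\big(\Sigmab_\mu(\I+\gamma_n\Sigmab_\mu)^{-1}\big) \ =\ \tr\!\big(\Sigmab_\mu(\Sigmab_\mu+\tfrac{1}{\gamma_n}\I)^{-1}\big),
\end{align*}
which equals $n$ by the defining equation for $\gamma_n$ in Definition~\ref{d:surrogate}.

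For $n > d$, general position gives $\pdet(\X\X^\top) = \det(\X^\top\X)$ almost surely when $k \ge d$, so combining $\Pr(K=k) \propto \gamma_n^k/k!$ on $\{d, d+1, \ldots\}$ with the second identity produces
\begin{align*}
\Pr\big(\#(\Xb) = d + j\big) \ \propto\ \frac{\gamma_n^{d+j}}{(d+j)!} \cdot \frac{(d+j)!}{j!}\det(\Sigmab_\mu) \ \propto\ \frac{\gamma_n^j}{j!},\qquad j = 0, 1, \ldots,
\end{align*}
so $\#(\Xb) - d$ is exactly $\Poisson(\gamma_n)$, giving $\E[\#(\Xb)] = d + \gamma_n = n$. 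The only substantive obstacle in this plan is the expected determinant identity itself, which I would invoke from the determinant-preserving machinery of Section~\ref{s:dp}; everything else reduces to recognizing the generating function $\det(\I+\gamma\Sigmab_\mu)$ and taking a logarithmic derivative.
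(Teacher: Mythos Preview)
Your proof is correct but takes a different route from the paper. The paper argues indirectly: for $n>d$ it invokes Lemma~\ref{l:decomposition}, which shows that $S_\mu^{d+\gamma}$ is obtained by appending $\Poisson(\gamma)$ many i.i.d.\ rows to a sample from $S_\mu^d$, whence $\E[\#(\Xb)]=d+\gamma_n$; for $n<d$ it invokes Lemma~\ref{l:proj}, the projection formula $\E[\I-\Xb^\dagger\Xb]=(\gamma_n\Sigmab_\mu+\I)^{-1}$, combined with the rank identity $\#(\Xb)=\tr(\Xb^\dagger\Xb)$. Since both of those lemmas are needed anyway for the MSE and implicit-regularization results, Lemma~\ref{l:size} comes essentially for free in the paper. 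Your approach instead computes the law of $\#(\Xb)$ directly from the fixed-$k$ expected-determinant identities (these are Lemma~\ref{l:cb} together with a Cauchy--Binet reduction, rather than the Poisson/d.p.\ machinery per se) and then reads off the mean from the generating function $\det(\I+\gamma\Sigmab_\mu)$. This is more self-contained and in fact yields more than the paper states: you recover the full distribution of $\#(\Xb)$, namely $d+\Poisson(\gamma_n)$ for $n>d$, and for $n<d$ a sum of independent Bernoulli variables with parameters $\gamma_n\lambda_i/(1+\gamma_n\lambda_i)$ over the eigenvalues $\lambda_i$ of $\Sigmab_\mu$. The paper's route is more economical given its other goals; yours is more elementary and more informative as a standalone argument.
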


\noindent
 Our general template for computing expectations under
 a surrogate design $\Xb\sim\S_\mu^n$ is to use the following expressions based on the
i.i.d.~random design $\X\sim\mu^K$:
\begin{align}
  \E[F(\Xb)] &=\begin{cases}
    \frac{\E[\det(\X\X^\top)F(\X)]}{\E[\det(\X\X^\top)]}
   \quad K\sim\Poisson(\gamma_n)&\text{for }n<d,\\[2mm]
    \frac{\E[\det(\X)^2F(\X)]}{\E[\det(\X)^2]}\hspace{3mm}
    \quad K=d&\text{for }n=d,\\[2mm]
    \frac{\E[\det(\X^\top\X)F(\X)]}{\E[\det(\X^\top\X)]}
   \quad K\sim\Poisson(\gamma_n)&\text{for }n>d.
  \end{cases}\label{eq:cases}
\end{align}
These formulas follow from Definitions \ref{d:det} and
\ref{d:surrogate} because the determinants $\det(\X\X^\top)$ and
$\det(\X^\top\X)$ are non-zero precisely in the regimes $n\leq d$ and
$n\geq d$, respectively, which is why we can drop the restrictions on the
range of the Poisson distribution.
We compute the normalization constants
by introducing the concept of determinant preserving random matrices,
discussed in Section \ref{s:dp}.

\paragraph{Proof sketch of Theorem \ref{t:mse}} 
We focus here on the under-determined regime (i.e., $n<d$),
highlighting the key new expectation formulas we develop to derive the
MSE expressions for surrogate designs. A standard decomposition of the MSE yields:
\begin{align}
  \MSE{\Xb^\dagger\ybb}
  &= \E\big[\|\Xb^\dagger(\Xb\w^*+\xib)-\w^*\|^2\big]
  =\sigma^2\E\big[\tr\big((\Xb^\top\Xb)^{\dagger}\big)\big] +
    \w^{*\top}\E\big[\I-\Xb^\dagger\Xb\big]\w^*.\label{eq:mse-derivation}
\end{align}
Thus, our task is to find closed form expressions for the two
expectations above. The latter, which is the expected projection onto
the complement of the row-span of $\Xb$, is proven in
Appendix \ref{s:unbiased-proof}.
\begin{lemma}\label{l:proj}
If  $\Xb\sim S_\mu^n$ and $n<d$, then we have:
$\E\big[\I-\Xb^\dagger\Xb\big] = (\gamma_n\Sigmab_\mu+\I)^{-1}$.
\end{lemma}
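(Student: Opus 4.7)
The plan is to evaluate $\E[\I-\Xb^\dagger\Xb]$ by expressing it, via \eqref{eq:cases} with $K\sim\Poisson(\gamma_n)$ and $\X\sim\mu^K$, as the ratio
$$\E[\I-\Xb^\dagger\Xb] = \frac{\E[\det(\X\X^\top)(\I-\X^\dagger\X)]}{\E[\det(\X\X^\top)]},$$
and then computing numerator and denominator separately via a combinatorial/generating-function argument engineered so that the Poisson factorials cancel cleanly. To handle the matrix-valued numerator I would test against an arbitrary $\u\in\R^d$: appending $\u^\top$ as an extra row to form $\X' = \begin{pmatrix}\X\\ \u^\top\end{pmatrix}$, the Schur complement identity (valid a.s.\ when $K\le d$ under Assumption~\ref{a:general-position}) gives
$$\det(\X'\X'^\top) = \det(\X\X^\top)\cdot\u^\top(\I-\X^\dagger\X)\u,$$
so it suffices to evaluate $\E[\det(\X'\X'^\top)]$ and $\E[\det(\X\X^\top)]$ as scalars.

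Conditional on $K$, I would expand both determinants via Cauchy-Binet over column subsets of sizes $K+1$ and $K$ respectively, and additionally Laplace-expand $\det(\X'_{*,S})$ along the $\u$-row. Taking expectations then reduces everything to the identity $\E_{\X\sim\mu^K}[\det(\X_{*,A})\det(\X_{*,B})] = K!\,\det((\Sigmab_\mu)_{A,B})$ for $|A|=|B|=K$, which follows from Leibniz expansion and row independence. The resulting conditional expectations are
$$\E[\det(\X\X^\top)\mid K] = K!\,e_K(\Sigmab_\mu), \qquad \E[\det(\X'\X'^\top)\mid K] = K!\!\!\sum_{|S|=K+1}\!\u_S^\top\adj((\Sigmab_\mu)_{S,S})\u_S.$$
The key algebraic step is then to identify the $\u$-weighted subset sum on the right as the coefficient of $t^K$ in $\u^\top\adj(\I+t\Sigmab_\mu)\u$; this follows by comparing two expansions of $\det(\I+t\Sigmab_\mu+\epsilon\u\u^\top)$ --- one via the matrix determinant lemma on the full $d\times d$ matrix, and the other via $\det(\I+tM) = \sum_k t^k e_k(M)$ combined with the matrix determinant lemma applied to each principal $k\times k$ minor of $\Sigmab_\mu+(\epsilon/t)\u\u^\top$.

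With this identity in hand, averaging over $K\sim\Poisson(\gamma_n)$ cancels each $K!$ against the $K!$ in the Poisson pmf, yielding $\E[\det(\X'\X'^\top)] = e^{-\gamma_n}\u^\top\adj(\I+\gamma_n\Sigmab_\mu)\u$ and $\E[\det(\X\X^\top)] = e^{-\gamma_n}\det(\I+\gamma_n\Sigmab_\mu)$. Dividing and using $\adj(M)/\det(M) = M^{-1}$ gives $\u^\top\E[\I-\Xb^\dagger\Xb]\u = \u^\top(\I+\gamma_n\Sigmab_\mu)^{-1}\u$ for all $\u$, which proves the lemma. I expect the main obstacle to be verifying the generating-function identity linking the sum of $\u$-weighted principal-submatrix adjugates to the single adjugate $\adj(\I+t\Sigmab_\mu)$; once that identity is established, the Poisson summation supplies precisely the factorial cancellation that is the essence of the determinant-preserving framework developed in Section~\ref{s:dp}.
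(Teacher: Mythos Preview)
Your proposal is correct, and the route is genuinely different from the paper's. The paper obtains Lemma~\ref{l:proj} as an immediate corollary of the more general Lemma~\ref{l:ridge-under} (computing $\E[\Xb^\dagger\ybb]$ for an arbitrary response $y$) by setting $y(\x)=x_j$ for each coordinate $j$; Lemma~\ref{l:ridge-under} in turn is proved via the rank-one identity $\det(\X\X^\top)(\X^\dagger\y)_j=\det([\X,\y][\X,\f_j]^\top)-\det(\X\X^\top)$ and two direct applications of Lemma~\ref{l:normalization} from the determinant-preserving framework. Your argument instead appends a \emph{row} rather than columns, uses the Schur complement to reduce $\u^\top(\I-\Xb^\dagger\Xb)\u$ to a single scalar determinant $\det(\X'\X'^\top)$, and then computes this expectation by a hands-on Cauchy--Binet/Leibniz expansion together with the generating-function identity linking $\sum_{|S|=K+1}\u_S^\top\adj((\Sigmab_\mu)_{S,S})\u_S$ to the $t^K$-coefficient of $\u^\top\adj(\I+t\Sigmab_\mu)\u$. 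The paper's approach is shorter once the d.p.\ machinery of Section~\ref{s:dp} is available and yields the stronger Theorem~\ref{t:unbiased} essentially for free; your approach is more self-contained, bypassing Lemmas~\ref{l:poisson} and~\ref{l:normalization} and instead re-deriving the relevant Poisson factorial cancellation by hand for this specific quantity. The generating-function step you flag as the main obstacle is indeed the crux, and your sketch via two expansions of $\det(\I+t\Sigmab_\mu+\epsilon\u\u^\top)$ (matrix determinant lemma at the top level versus principal-minor expansion plus matrix determinant lemma on each minor) goes through without difficulty.
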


\noindent
No such expectation formula is known for i.i.d.~designs, except when
$\mu$ is an isotropic Gaussian. In Appendix \ref{s:unbiased-proof}, we
also prove a generalization of Lemma \ref{l:proj} which is then used
to establish our implicit regularization result
(Theorem~\ref{t:unbiased}). We next give an expectation formula 
for the trace of the Moore-Penrose inverse of the covariance
matrix for a surrogate design (proof in Appendix \ref{a:mse-proof}).
\begin{lemma}\label{l:sqinv-all}
If  $\Xb\sim S_\mu^n$ and $n<d$, then:
$\E\big[\tr\big((\Xb^\top\Xb)^{\dagger}\big)\big]
=\gamma_n\big(1-
\det\!\big((\tfrac1{\gamma_n}\I+\Sigmab_\mu)^{-1}\Sigmab_\mu\big)\big)$.
\end{lemma}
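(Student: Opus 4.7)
The plan is to evaluate $\E[\tr((\Xb^\top\Xb)^\dagger)]$ directly from the ratio representation in Equation~\eqref{eq:cases},
\begin{align*}
\E[\tr((\Xb^\top\Xb)^\dagger)] = \frac{\E_{K,\X}[\det(\X\X^\top)\tr((\X^\top\X)^\dagger)]}{\E_{K,\X}[\det(\X\X^\top)]},
\end{align*}
with $K\sim\Poisson(\gamma_n)$ and $\X\sim\mu^K$. The essential observation is that $\det(\X\X^\top)$ vanishes almost surely whenever $K>d$, so only the values $K\le d$ contribute to either expectation, and on that event Assumption~\ref{a:general-position} forces $\rank(\X)=K$ and hence $\tr((\X^\top\X)^\dagger)=\tr((\X\X^\top)^{-1})$.

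Restricted to $K\le d$, I would apply the adjugate identity $\det(A)\tr(A^{-1})=\sum_{i=1}^K\det(A_{-i,-i})$ (now legitimate because $\X\X^\top$ is invertible) to rewrite the integrand as $\sum_{i=1}^K\det(\X_{-i}\X_{-i}^\top)$, where $\X_{-i}$ deletes the $i$-th row. Exchangeability of the i.i.d.~rows collapses the inner $\X$-expectation to $K\cdot Z_{K-1}$, where $Z_j\defeq\E_{\X\sim\mu^j}[\det(\X\X^\top)]$. Taking the outer Poisson expectation and shifting the index by one, the numerator becomes $N=\gamma_n e^{-\gamma_n}\sum_{j=0}^{d-1}\gamma_n^j Z_j/j!$, where crucially the sum is truncated at $j=d-1$ (since the largest contributing value of $k$ is $d$, corresponding to $j=k-1=d-1$); the denominator is the simpler $D=e^{-\gamma_n}\sum_{k=0}^d\gamma_n^k Z_k/k!$.

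The final step is to invoke the Poisson-mixing identity $\sum_{k=0}^d\gamma^k Z_k/k!=\det(\I+\gamma\Sigmab_\mu)$, which is the main output of the determinant-preserving machinery of Section~\ref{s:dp} (it encodes the fact that a Poisson sum of rank-one outer products $\x_i\x_i^\top$ is determinant preserving, combined with a Cauchy--Binet calculation yielding $Z_k=k!\,e_k(\Sigmab_\mu)$). Substituting into $D$ directly, and into $N$ after explicitly subtracting the missing $k=d$ term $\gamma_n^d\det(\Sigmab_\mu)$, the $e^{-\gamma_n}$ factors cancel and the ratio simplifies to
\begin{align*}
N/D=\gamma_n\Big(1-\frac{\gamma_n^d\det(\Sigmab_\mu)}{\det(\I+\gamma_n\Sigmab_\mu)}\Big)=\gamma_n\Big(1-\det\!\big((\tfrac1{\gamma_n}\I+\Sigmab_\mu)^{-1}\Sigmab_\mu\big)\Big),
\end{align*}
after absorbing $\gamma_n^d$ into the two determinants. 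The main obstacle is the Poisson-mixing identity itself: once it is available from Section~\ref{s:dp}, the entire nontrivial $\det$-correction in the answer is explained by the single boundary term $k=d$ that survives in $D$ but is absent from $N$, and the rest of the derivation is routine bookkeeping around the adjugate identity, exchangeability, and the $k\le d$ truncation.
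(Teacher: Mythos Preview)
Your proposal is correct and follows essentially the same route as the paper's proof: both start from the ratio in~\eqref{eq:cases}, use the adjugate identity to rewrite $\det(\X\X^\top)\tr((\X\X^\top)^{-1})$ as $\sum_{i=1}^K\det(\X_{-i}\X_{-i}^\top)$, exploit exchangeability to reduce to $K\cdot Z_{K-1}$, shift the Poisson index, and then recognize the resulting truncated sum as the full normalization constant from Lemma~\ref{l:normalization} minus the single boundary term at $k=d$ (which the paper evaluates via Lemma~\ref{l:cb}, equivalent to your $Z_d=d!\det(\Sigmab_\mu)$). The only cosmetic difference is that you name $Z_j$ explicitly and phrase the key input as a ``Poisson-mixing identity,'' whereas the paper invokes Lemma~\ref{l:normalization} directly.
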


\noindent
Note the implicit regularization term which appears in both formulas, 
given by $\lambda_n=\frac1{\gamma_n}$. Since $n =
\tr(\Sigmab_\mu(\Sigmab_\mu+\lambda_n\I)^{-1})=d-\lambda_n\tr((\Sigmab_\mu+\lambda_n\I)^{-1})$,
it follows that $\lambda_n=(d-n)/\tr((\Sigmab_\mu+\lambda_n\I)^{-1})$.
Combining this with Lemmas \ref{l:proj} and \ref{l:sqinv-all}, we
recover the surrogate MSE expression in Theorem~\ref{t:mse}.

\section{Determinant preserving random matrices}
\label{s:dp}

In this section, we introduce the key tool for computing expectation formulas of matrix determinants.
It is used in our analysis of the surrogate design, and it should be of independent interest.

The key question motivating the following definition is: \textit{When does taking expectation commute with computing a determinant for a square random matrix?}
\begin{definition}\label{d:main}
A random $d\times d$ matrix $\A$ is called determinant
  preserving (d.p.), if
\begin{align*}
  \E\big[\!\det(\A_{\Ic,\Jc})\big] =
  \det\!\big(\E[\A_{\Ic,\Jc}]\big)\quad \text{for all }\Ic,\Jc\subseteq
  [d]\text{ s.t. }|\Ic|=|\Jc|.
\end{align*}
\end{definition}

\noindent
We next give a few simple examples to provide some intuition. First, note
that every $1\times 1$ random matrix is determinant preserving simply
because taking a determinant is an identity transfomation in one
dimension. Similarly, every fixed matrix is determinant preserving because
in this case taking the expectation is an identity
transformation. In all other cases, however, Definition \ref{d:main}
has to be verified more carefully. Further examples (positive and
negative) follow.
\begin{example}
If $\A$ has i.i.d. Gaussian entries $a_{ij}\sim\Nc(0,1)$, then
$\A$ is d.p.~because $\E[\det(\A)]=0$.
\end{example}

\noindent
In fact, it can be shown that all random matrices with independent entries
are determinant preserving. However, this is not a necessary condition.
\begin{example}\label{e:rank-1}
Let $\A = s\,\Z$, where $\Z$ is fixed with $\rank(\Z) = r$, and $s$
is a scalar random variable. Then for $|\Ic|=|\Jc|=r$ we have
\begin{align*}
  \E\big[\det(s\,\Z_{\Ic,\Jc})\big] &= \E[s^r]\det(\Z_{\Ic,\Jc})
                                      =\det\Big(\big(\E[s^r]\big)^{\frac1r}\,\Z_{\Ic,\Jc}\Big),
\end{align*}
  so if $r=1$ then $\A$ is determinant preserving, whereas if $r>1$
  and $\Var[s]>0$ then it is not.
\end{example}

\noindent
To construct more complex examples, we show that determinant preserving random matrices are
closed under addition and multiplication. The proof of this result is
an extension of an existing argument, given by
\cite{determinantal-averaging} in the proof of Lemma~7, for computing
the expected determinant of the sum of rank-1 random matrices (proof in Appendix \ref{a:dp}). 
\begin{lemma}[Closure properties]\label{t:ring}
  If $\A$ and $\B$ are independent and determinant preserving, then:
\vspace{-1mm}
  \begin{enumerate}
  \item $\A+\B$ is determinant preserving,
    \vspace{-2mm}
  \item $\A\B$ is determinant preserving.
    \end{enumerate}
\end{lemma}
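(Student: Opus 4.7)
The plan is to reduce both closure properties to two classical determinantal identities applied \emph{submatrix-wise}, together with independence of $\A$ and $\B$: the Cauchy--Binet formula for products, and a Laplace-type expansion for sums. The key observation is that the definition of determinant preserving requires the determinant--expectation equality to hold on \emph{all} square submatrices, which is exactly what these two identities consume.

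For the product case, I would fix $\Ic,\Jc\subseteq[d]$ with $|\Ic|=|\Jc|=k$ and write $(\A\B)_{\Ic,\Jc}=\A_{\Ic,[d]}\,\B_{[d],\Jc}$. Cauchy--Binet gives
\begin{align*}
\det\!\big((\A\B)_{\Ic,\Jc}\big) = \sum_{\Kc\subseteq[d],\,|\Kc|=k}\det(\A_{\Ic,\Kc})\,\det(\B_{\Kc,\Jc}).
\end{align*}
Taking expectations and using independence of $\A$ and $\B$ inside each summand, then invoking the d.p.\ property of $\A$ and $\B$ on the submatrices $\A_{\Ic,\Kc}$ and $\B_{\Kc,\Jc}$, the sum collapses back by Cauchy--Binet (applied to $\E[\A]_{\Ic,[d]}\,\E[\B]_{[d],\Jc}$) to $\det\!\big(\E[\A]\E[\B]\big)_{\Ic,\Jc}=\det\!\big(\E[\A\B]_{\Ic,\Jc}\big)$, where the last step uses independence once more at the level of the full matrices. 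This gives (2).

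For the sum case, I would use multilinearity of the determinant in the columns to expand, for any square submatrix of size $k$ indexed by $\Ic$ (rows) and $\Jc$ (columns),
\begin{align*}
\det\!\big((\A+\B)_{\Ic,\Jc}\big) = \sum_{T\subseteq \Jc}\sum_{S\subseteq \Ic,\,|S|=|T|}\epsilon(S,T;\Ic,\Jc)\,\det(\A_{S,T})\,\det(\B_{\Ic\setminus S,\,\Jc\setminus T}),
\end{align*}
where the inner sum comes from Laplace expansion along the $|T|$ ``$\A$-columns'' of the hybrid matrix whose column $j$ is drawn from $\A$ if $j\in T$ and from $\B$ otherwise, and $\epsilon(S,T;\Ic,\Jc)\in\{\pm1\}$ is the corresponding Laplace sign. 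Taking expectations, factoring via independence, and applying the d.p.\ property to each of $\A_{S,T}$ and $\B_{\Ic\setminus S,\Jc\setminus T}$ replaces $\E\det$ by $\det\E$ term by term; then running the same expansion backwards on the matrix $\E[\A+\B]=\E[\A]+\E[\B]$ recovers $\det\!\big(\E[\A+\B]_{\Ic,\Jc}\big)$. This gives (1).

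The main obstacle will be bookkeeping rather than ideas: one needs to be careful that the Laplace signs $\epsilon(S,T;\Ic,\Jc)$ appearing in the forward expansion are identical to those appearing when the expansion is run backwards on the expectation matrix, so that the two sides really match term by term. The authors indicate that the argument used by \cite{determinantal-averaging} in their Lemma~7 (for the sum of rank-one matrices) already handles the sum case with the correct signs, so I would essentially import that sign calculation and then combine it with the straightforward Cauchy--Binet argument above for the product.
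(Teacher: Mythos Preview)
Your Cauchy--Binet argument for the product is exactly what the paper does. For the sum, your approach is correct but takes a different route: the paper does not use the full multilinear/Laplace expansion. Instead it first shows that $\A+\u\v^\top$ is d.p.\ for any \emph{fixed} $\u,\v$ via the rank-one identity $\det(\A+\u\v^\top)=\det(\A)+\v^\top\adj(\A)\u$, combined with the observation (immediate from Definition~\ref{d:main}) that adjugate commutes with expectation for d.p.\ matrices; iterating over a rank-one decomposition yields $\A+\Z$ d.p.\ for any fixed $\Z$, and then conditioning on $\B$ finishes the job. Your direct expansion is arguably more symmetric and avoids both the rank-one iteration and the conditioning step; the sign worry you flag is in fact a non-issue, since you are applying the \emph{same} deterministic identity once to $(\A,\B)$ and once to $(\E[\A],\E[\B])$, so the signs are identical by construction. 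The paper's route, on the other hand, isolates the adjugate--expectation commutation as a standalone tool and makes the role of independence very explicit through the conditioning step.
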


\noindent
Next, we introduce another important class of d.p.~matrices:
a sum of i.i.d.~rank-1 random matrices with the number of
i.i.d.~samples being a Poisson random variable. Our use of the Poisson
distribution is crucial for the below result to hold. It is an
extension of an expectation formula given by \cite{dpp-intermediate}
for sampling from discrete distributions (proof in Appendix \ref{a:dp}).
\begin{lemma}\label{l:poisson}
If $K$ is a Poisson random variable and $\A,\B$ are random $K\times d$
matrices whose rows  are sampled as an i.i.d.~sequence of joint pairs of
random vectors, then $\A^\top\B$ is d.p., and so:
\begin{align*}
  \E\big[\det(\A^\top\B)\big] &= \det\!\big(\E[\A^\top\B]\big).
  \end{align*}
\end{lemma}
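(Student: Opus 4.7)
}
The plan is to verify Definition \ref{d:main} directly: fix index sets $\Ic,\Jc\subseteq[d]$ with $|\Ic|=|\Jc|=m$, and show $\E[\det((\A^\top\B)_{\Ic,\Jc})]=\det(\E[\A^\top\B]_{\Ic,\Jc})$. The key identity on the right is $\E[\A^\top\B]=\E[K]\,\E[\a\b^\top]=\gamma\,M$ where $\gamma$ is the Poisson parameter and $M=\E[\a\b^\top]$, so the target quantity equals $\gamma^m\det(M_{\Ic,\Jc})$. The whole task is therefore to show that expectation on the left also produces $\gamma^m\det(M_{\Ic,\Jc})$.

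First I would apply the Cauchy--Binet formula to the $m\times m$ submatrix $(\A^\top\B)_{\Ic,\Jc}=(\A^\top)_{\Ic,*}\B_{*,\Jc}$, obtaining
\begin{align*}
\det\!\big((\A^\top\B)_{\Ic,\Jc}\big)\ =\ \sum_{S\in\binom{[K]}{m}}\det(\A_{S,\Ic})\det(\B_{S,\Jc}),
\end{align*}
where the identification $(\A^\top)_{\Ic,S}=(\A_{S,\Ic})^\top$ is used to pull the transpose outside the determinant. Next, conditioning on $K=k$ and using that the rows $(\a_i,\b_i)$ are i.i.d.\ pairs, every term $\E[\det(\A_{S,\Ic})\det(\B_{S,\Jc})]$ equals a common value $f$ that depends only on $m$, $\Ic$, $\Jc$ and the joint law of $(\a,\b)$. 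Taking the outer expectation over $K$ then gives
\begin{align*}
\E\!\big[\det\!\big((\A^\top\B)_{\Ic,\Jc}\big)\big]\ =\ \E\!\Big[\tbinom{K}{m}\Big]\,f\ =\ \frac{\gamma^m}{m!}\,f,
\end{align*}
where the last step is the well-known $m$-th factorial moment of a Poisson variable. This is the only place where the Poisson assumption is used, and it is precisely what makes the argument work.

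The remaining step is to compute $f$ on a fixed index set, say $T=\{1,\dots,m\}$. Expanding both determinants by the Leibniz formula,
\begin{align*}
\det(\A_{T,\Ic})\det(\B_{T,\Jc})\ =\ \sum_{\sigma,\tau}\sgn(\sigma)\sgn(\tau)\prod_{i=1}^m (\a_i)_{\Ic(\sigma(i))}(\b_i)_{\Jc(\tau(i))},
\end{align*}
and using independence across rows together with $\E[(\a_i)_p(\b_i)_q]=M_{pq}$, the expectation factors entrywise into products of $M_{\Ic(\sigma(i)),\Jc(\tau(i))}$. Reindexing by $j=\sigma(i)$ and setting $\pi=\tau\sigma^{-1}$ (so $\sgn(\sigma)\sgn(\tau)=\sgn(\pi)$) collapses the $\sigma$-sum into an overall factor of $m!$, yielding $f=m!\,\det(M_{\Ic,\Jc})$. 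Substituting back gives $\E[\det((\A^\top\B)_{\Ic,\Jc})]=\gamma^m\det(M_{\Ic,\Jc})=\det(\E[\A^\top\B]_{\Ic,\Jc})$, which verifies the determinant-preserving property and, taking $\Ic=\Jc=[d]$, yields the displayed identity.

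The main technical obstacle is the combinatorial bookkeeping in the Leibniz-expansion step: one must handle two independent permutations $\sigma,\tau$ and verify that the reindexing $\pi=\tau\sigma^{-1}$ both reproduces the determinant of $M_{\Ic,\Jc}$ and accounts correctly for the $m!$ factor that cancels against $1/m!$ from the factorial moment of Poisson. Everything else---Cauchy--Binet, independence across rows, and the Poisson moment identity---is standard.
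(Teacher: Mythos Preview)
Your proof is correct and takes essentially the same route as the paper's: both condition on $K$, use a determinantal identity to evaluate $\E[\det(\cdot)\mid K=k]$ for each fixed sample size, and then average over the Poisson distribution. The only difference is packaging---the paper invokes the fixed-$k$ identity as the cited Lemma~\ref{l:cb} and then performs an index shift in the Poisson sum, whereas you re-derive that same identity inline via Cauchy--Binet plus the Leibniz expansion and use the Poisson factorial moment $\E\big[\binom{K}{m}\big]=\gamma^m/m!$ directly.
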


\noindent
Finally, we show the expectation formula needed for obtaining the
normalization constant of the under-determined surrogate design, given
in \eqref{eq:cases}.
The below result is more general than the normalization constant
requires, because it allows the matrices $\A$ and $\B$ to be different
(the constant is obtained by setting $\A=\B=\X\sim\mu^K$).
In fact, we use this more general statement to show Theorems
\ref{t:mse} and~\ref{t:unbiased}. The proof uses
Lemmas \ref{t:ring} and \ref{l:poisson} (see Appendix \ref{a:dp}).
\begin{lemma}\label{l:normalization}
If $K$ is a Poisson random variable and $\A$, $\B$ are random $K\times d$
matrices whose rows  are sampled as an i.i.d.~sequence of joint pairs of
random vectors, then
\begin{align*}
  \E\big[\det(\A\B^\top)\big] &= \ee^{-\E[K]}\det\!\big(\I + \E[\B^\top\A]\big).
  \end{align*}
\end{lemma}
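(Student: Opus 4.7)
\textbf{Proof sketch of Lemma \ref{l:normalization}.} The plan is to reduce the claim to an application of Lemmas \ref{t:ring} and \ref{l:poisson} by passing through the ``shifted'' determinant $\det(\I_K + \A\B^\top)$ and invoking Sylvester's determinant identity. The Poisson structure of $K$ is used in exactly one place: to convert $\E[\det(\A\B^\top)]$ into $e^{-\gamma}\E[\det(\I_K + \A\B^\top)]$, where $\gamma = \E[K]$.

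First, I would expand by principal minors, $\det(\I_K + \A\B^\top) = \sum_{S \subseteq [K]} \det(\A_{S,*}\B_{S,*}^\top)$. Conditioning on $K = k$, every subset $S$ of size $s$ contributes the same expected value $g(s) := \E[\det(\A_s \B_s^\top)]$, where $\A_s, \B_s$ are $s \times d$ matrices built from $s$ i.i.d.~row pairs; note $g(s) = 0$ for $s > d$ by rank. Summing over $s$ and using the Poisson factorial moment $\E[\binom{K}{s}] = \gamma^s/s!$ yields $\E[\det(\I_K + \A\B^\top)] = \sum_s \tfrac{\gamma^s}{s!} g(s)$, while $\E[\det(\A\B^\top)] = \sum_k \Pr[K=k]\,g(k) = e^{-\gamma}\sum_s \tfrac{\gamma^s}{s!} g(s)$. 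This gives the key reduction $\E[\det(\A\B^\top)] = e^{-\gamma}\E[\det(\I_K + \A\B^\top)]$.

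Next, Sylvester's identity $\det(\I_K + \A\B^\top) = \det(\I_d + \B^\top\A)$ holds pathwise, moving the problem to a $d \times d$ determinant of constant size where the d.p.~machinery can act. Lemma \ref{l:poisson} applied with the roles of $\A$ and $\B$ swapped (the row pairs remain i.i.d.) shows that $\B^\top\A$ is determinant preserving; since $\I_d$ is trivially d.p.~as a constant and independent of $\B^\top\A$, Lemma \ref{t:ring}(1) gives that $\I_d + \B^\top\A$ is d.p.~as well. Applying the d.p.~property to this $d \times d$ matrix yields $\E[\det(\I_d + \B^\top\A)] = \det(\I_d + \E[\B^\top\A])$, and chaining the equalities produces the claimed formula.

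The main obstacle is the first step: the identity $\E[\det(\A\B^\top)] = e^{-\gamma}\E[\det(\I_K + \A\B^\top)]$ is where the Poisson assumption is indispensable, via the coincidence $\Pr[K=s] = e^{-\gamma}\E[\binom{K}{s}]$ at every $s$. Without this, $\sum_k \Pr[K=k] g(k)$ and $\sum_s \E[\binom{K}{s}] g(s)$ would not be proportional, and the reduction to the constant-size $d \times d$ setting in which Lemmas \ref{t:ring} and \ref{l:poisson} operate would break down.
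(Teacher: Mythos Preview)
Your proof is correct and follows essentially the same route as the paper: both use the principal-minor expansion of $\det(\I+\A\B^\top)$ together with exchangeability and the Poisson factorial-moment identity to establish $\E[\det(\A\B^\top)]=e^{-\gamma}\E[\det(\I+\A\B^\top)]$, and then invoke Sylvester's identity plus Lemmas~\ref{l:poisson} and~\ref{t:ring} to finish. The only difference is cosmetic ordering---the paper first applies the d.p.~machinery and Sylvester, then carries out the Poisson computation, whereas you do the reverse---but the ingredients and logic are identical.
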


\begin{figure}[t]
  \includegraphics[width=\textwidth]{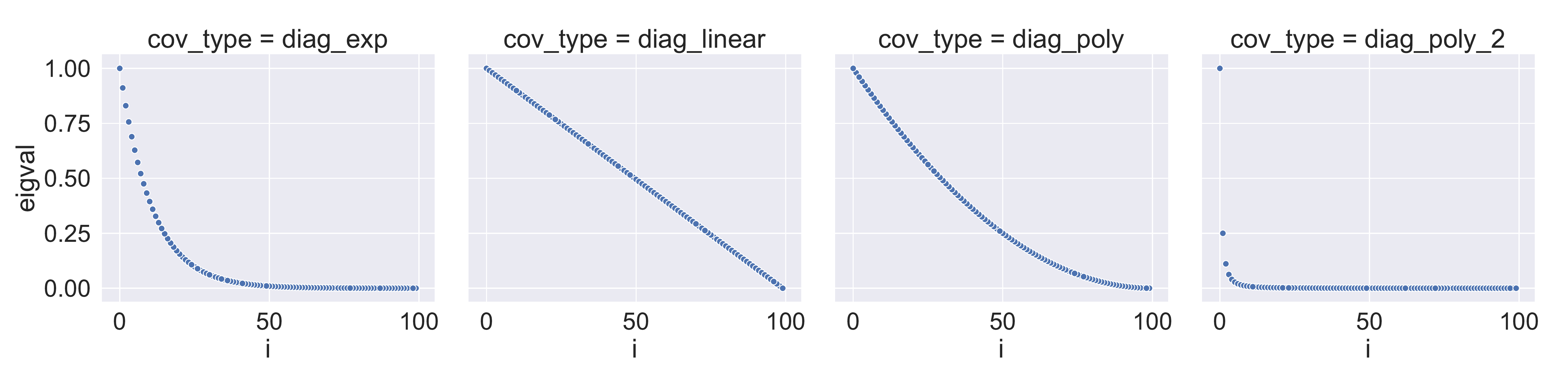}
  \vspace{-1cm}
  \caption{Scree-plots of $\Sigmab$ for the eigenvalue decays examined
    in our empirical valuations.  }
  \label{fig:eig-decays}
\end{figure}

\section{Empirical evaluation of asymptotic consistency}
\label{sec:asymp-conj-details}

In this section, we empirically quantify the convergence rates for
the asymptotic result of Theorem~\ref{t:asymptotic}.
We focus on the under-determined regime (i.e.,
$n<d$) and separate the evaluation into the bias and
variance terms, following the MSE decomposition given
in \eqref{eq:mse-derivation}. Consider  $\X = \Z\Sigmab^{1/2} $, where the entries of $\Z$ are
i.i.d. standard Gaussian, and define:\vspace{-1mm}
\begin{enumerate}
  \item Variance discrepancy:\quad
    $\big|\frac{\E[\tr((\X^\top\X)^\dagger)]}{\Vc(\Sigmab,n)}-1\big|$ where
    $\Vc(\Sigmab,n)=\frac{1-\alpha_n}{\lambda_n}$.
  \item Bias discrepancy:\quad
     $\sup_{\w\in\R^d\backslash\{\zero\}}\big|\frac{\w^\top\E[\I-\X^\dagger\X]\w}
     {\w^\top\Bc(\Sigmab,n)\w} - 1\big|$
    where $\Bc(\Sigmab,n) = \lambda_n(\Sigmab+\lambda_n\I)^{-1}$.
  \end{enumerate}\vspace{-1mm}
   Recall that $\lambda_n=\frac {d-n}{\tr((\Sigmab+\lambda_n\I)^{-1})}$,
so our surrogate MSE can be written as
$\Mc=\sigma^2\Vc(\Sigmab,n)+\w^{*\top}\Bc(\Sigmab,n)\w^*$, and when both
discrepancies are bounded by $\epsilon$, then $(1-2\epsilon)\Mc\leq\MSE{\X^\dagger\y}\leq (1+2\epsilon)\Mc$.
In our experiments, we consider four standard eigenvalue decay profiles
for $\Sigmab$, including polynomial and exponential decay (see
 \Cref{fig:eig-decays} and \Cref{sec:eig-decay-details}).
\begin{figure} %[ht]
  \includegraphics[width=\textwidth]{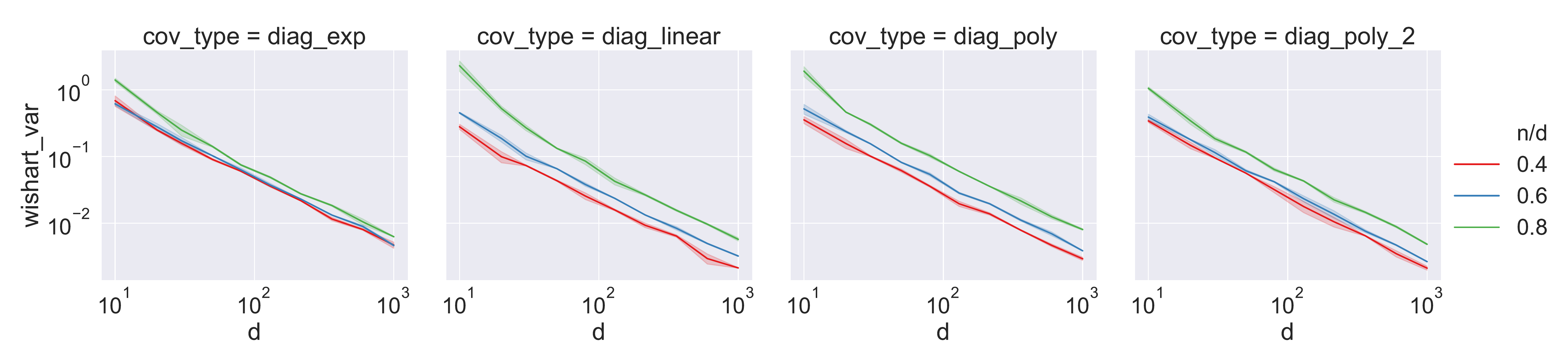}
  \includegraphics[width=\textwidth]{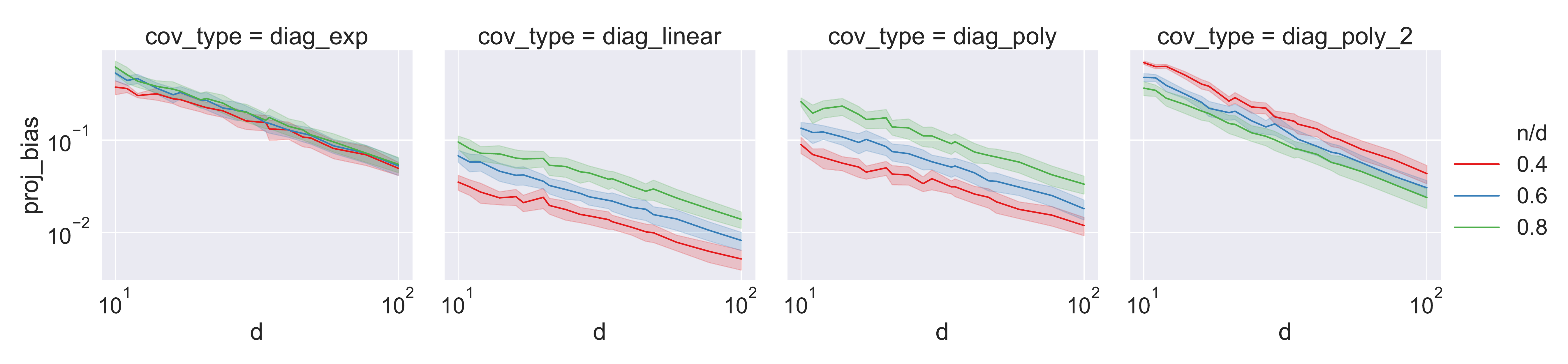}
  \vspace{-.8cm}
  \caption{
    Empirical verification of the asymptotic consistency of surrogate MSE.
    We show the discrepancies for the variance (top) and bias
    (bottom),  with bootstrapped $95\%$ confidence intervals, as $d$
    increases and $n/d$ is fixed. We observe
     $O(1/d)$ decay (linear with slope $-1$ on a log-log plot).
  }
  \label{f:conj-wishart}
\end{figure}

Figure~\ref{f:conj-wishart} (top) plots the variance discrepancy (with
$\E[\tr((\X^\top\X)^\dagger)]$ estimated via Monte Carlo
sampling and bootstrapped confidence intervals) as $d$ increases from $10$ to
$1000$, across a range of aspect ratios $n/d$. In all cases, we observe that
the discrepancy decays to zero at a rate of $O(1/d)$.
Figure~\ref{f:conj-wishart} (bottom) plots the bias discrepancy, with the same
rate of decay observed throughout.  Note that the range of $d$ is smaller than
in Figure \ref{f:conj-wishart} (top) because the large number of Monte Carlo
samples (up to two million) required for this experiment made the computations
much more expensive (more details in Appendix \ref{a:empirical}). Based on the
above empirical results, we conclude with a conjecture.
\begin{conjecture}
  \label{c:1-over-d-rate}
  When $\mu$ is a centered multivariate Gaussian and its covariance
  has a constant condition
  number, then, for $n/d$ fixed, the surrogate MSE satisfies:
  $\big|\frac{\textnormal{MSE}[\X^\dagger\y]}{\Mc}-1\big|= O(1/d)$.
\end{conjecture}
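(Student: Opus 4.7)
The plan is to decompose via the bias-variance split of Eq.~\eqref{eq:mse-derivation} and show that each piece incurs at most $O(1/d)$ relative error compared with its surrogate counterpart. Since $\Mc = \sigma^2\Vc(\Sigmab,n) + \w^{*\top}\Bc(\Sigmab,n)\w^*$ with $\|\w^*\|$, $\sigma^2$, and the condition number of $\Sigmab$ all bounded, it suffices to establish that $|\E[\tr((\X^\top\X)^\dagger)]/\Vc(\Sigmab,n) - 1|$ and $\sup_{\w\neq\zero} |\w^\top \E[\I-\X^\dagger\X]\w / (\w^\top \Bc(\Sigmab,n)\w) - 1|$ are both $O(1/d)$, since these imply the corresponding relative bound on the MSE by the convex combination argument noted in Section~\ref{sec:asymp-conj-details}. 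A key observation is that $\lambda_n$, defined implicitly by $n = \tr(\Sigmab(\Sigmab+\lambda_n\I)^{-1})$, is exactly the fixed point of the deformed Marchenko-Pastur equation governing the deterministic equivalent of the resolvent of $\frac{1}{n}\X^\top\X$, so the surrogate expressions should emerge as the leading-order term in a resolvent expansion.

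For the variance, I would use the identity $\tr((\X^\top\X)^\dagger) = \tr((\X\X^\top)^{-1})$ (valid for $n<d$ under Assumption~\ref{a:general-position}), and write $\X\X^\top = \Z\Sigmab\Z^\top$ with standard Gaussian $\Z$. Applying Gaussian integration by parts (Stein's lemma) to $\E[(\Z\Sigmab\Z^\top+\epsilon\I)^{-1}]$ yields a self-consistent equation whose fixed point matches $\Vc$ as $\epsilon\to 0^+$; the remainder terms in the Stein expansion can be bounded via concentration of quadratic forms (Hanson-Wright) together with sharp resolvent estimates from the local Marchenko-Pastur law, delivering the $O(1/d)$ rate. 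For the bias, I would analyze the expected matrix $\E[\X^\top(\X\X^\top)^{-1}\X]$ by an analogous resolvent expansion, whose deterministic equivalent is $\Sigmab(\Sigmab+\lambda_n\I)^{-1}$, so that $\I$ minus this equals $\Bc(\Sigmab,n) = \lambda_n(\Sigmab+\lambda_n\I)^{-1}$ as in Lemma~\ref{l:proj}. Uniformity in $\w$ would come from the isotropic form of the local law, which controls individual entries of the resolvent rather than just its trace.

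The principal obstacle will be obtaining the sharp $O(1/d)$ rate rather than the standard $O(d^{-1/2})$ fluctuation bound. Almost-sure spectral convergence is classical, but passing from $O(d^{-1/2})$ (typical of variance-based concentration) to $O(1/d)$ in the \emph{expectation} requires exploiting the cancellation of odd-order Gaussian moments in the resolvent expansion. This is where the Gaussian assumption becomes essential: Stein's identities yield exact finite-$d$ recursions whose leading error is $O(1/d)$ by parity symmetry, whereas for general sub-Gaussian distributions one typically gets only $O(d^{-1/2})$. The bounded condition number of $\Sigmab$ ensures that the resolvent stays uniformly well-conditioned as $d$ grows, so the implicit constants in the error remain controlled. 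Making these heuristics rigorous, in particular bounding the quadratic and higher-order terms in the Stein expansion uniformly in $\w$, is the main technical challenge.
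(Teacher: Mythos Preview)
The statement you are attempting to prove is labeled as a \emph{conjecture} in the paper, and the paper does not supply a proof. The authors explicitly motivate Conjecture~\ref{c:1-over-d-rate} through the empirical evidence in Section~\ref{sec:asymp-conj-details} (Figure~\ref{f:conj-wishart}), observing an apparent $O(1/d)$ decay of the variance and bias discrepancies across several spectral profiles, and then state the $O(1/d)$ rate as an open problem. Theorem~\ref{t:asymptotic}, whose proof in Appendix~\ref{sec:proof-of-t-asymptotic} relies on classical Marchenko--Pastur asymptotics and resolvent limits from \cite{silverstein1995empirical,bai1998no,hachem2013bilinear}, only establishes consistency (convergence to zero) without any rate.

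Your proposal is therefore not something to be compared against the paper's proof; rather, it is a sketch of a possible attack on an open question. The strategy you outline---Stein/Gaussian integration by parts on the resolvent, self-consistent equations whose fixed point reproduces $\lambda_n$, and exploiting parity of Gaussian moments to upgrade $O(d^{-1/2})$ fluctuation bounds to $O(1/d)$ bias bounds on the expectation---is a reasonable line, and indeed the paper's own discussion in Appendix~\ref{a:empirical} notes that closed-form expressions for $\E[\tr(\W^\dagger)]$ and $\E[\V\V^\top]$ under general $\Sigmab$ are regarded as open in the pseudo-Wishart and MACG literature. But you should be clear that what you have written is a research plan, not a comparison with an existing argument: the hard step you identify (uniform control of the higher-order Stein remainders, and the isotropic local law at the spectral edge needed for uniformity in $\w$) is precisely the gap that keeps this a conjecture rather than a theorem in the paper.
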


\section{Conclusions}
\label{s:conclusions}

We derived exact non-asymptotic expressions for the MSE of the
Moore-Penrose estimator in the linear regression task, reproducing
the double descent phenomenon as the sample size crosses between the
under- and over-determined regime. To achieve this, we modified the
standard i.i.d.~random design distribution using a determinantal
point process to obtain a surrogate design which admits exact MSE expressions,
while capturing the key properties of the i.i.d.~design. We
also provided a result that relates the expected value of the
Moore-Penrose estimator of a training sample in the under-determined regime (i.e., the
minimum norm solution) to the ridge-regularized least squares solution
for the population distribution, thereby providing an interpretation for the
implicit regularization resulting from over-parameterization.

\paragraph{Acknowledgements.}
We would like to acknowledge ARO, DARPA, NSF, ONR, and GFSD for providing
partial support of this work. We also thank Zhenyu Liao for pointing
out fruitful connections between our results and the asymptotic
analysis of random matrix resolvents.

\ifisarxiv
\bibliographystyle{plainnat}
\else
\bibliographystyle{alpha}
\fi

\bibliography{../pap}

\begin{thebibliography}{56}
\providecommand{\natexlab}[1]{#1}
\providecommand{\url}[1]{\texttt{#1}}
\expandafter\ifx\csname urlstyle\endcsname\relax
  \providecommand{\doi}[1]{doi: #1}\else
  \providecommand{\doi}{doi: \begingroup \urlstyle{rm}\Url}\fi

\bibitem[Alaoui and Mahoney(2015)]{ridge-leverage-scores}
Ahmed~El Alaoui and Michael~W. Mahoney.
\newblock Fast randomized kernel ridge regression with statistical guarantees.
\newblock In \emph{Proceedings of the 28th International Conference on Neural
  Information Processing Systems}, pages 775--783, Montreal, Canada, December
  2015.

\bibitem[Arora et~al.(2019)Arora, Cohen, Hu, and Luo]{ACHL19}
Sanjeev Arora, Nadav Cohen, Wei Hu, and Yuping Luo.
\newblock Implicit regularization in deep matrix factorization.
\newblock In H.~Wallach, H.~Larochelle, A.~Beygelzimer, F.~d~Alch\'{e}-Buc,
  E.~Fox, and R.~Garnett, editors, \emph{Advances in Neural Information
  Processing Systems 32}, pages 7411--7422. Curran Associates, Inc., 2019.

\bibitem[Bai et~al.(1993)Bai, Yin, et~al.]{bai1993limit}
ZD~Bai, YQ~Yin, et~al.
\newblock Limit of the smallest eigenvalue of a large dimensional sample
  covariance matrix.
\newblock \emph{The Annals of Probability}, 21\penalty0 (3):\penalty0
  1275--1294, 1993.

\bibitem[Bai et~al.(1998)Bai, Silverstein, et~al.]{bai1998no}
Zhi-Dong Bai, Jack~W Silverstein, et~al.
\newblock No eigenvalues outside the support of the limiting spectral
  distribution of large-dimensional sample covariance matrices.
\newblock \emph{The Annals of Probability}, 26\penalty0 (1):\penalty0 316--345,
  1998.

\bibitem[Bartlett et~al.(2019)Bartlett, Long, Lugosi, and Tsigler]{BLLT19_TR}
P.~L. Bartlett, P.~M. Long, G.~Lugosi, and A.~Tsigler.
\newblock Benign overfitting in linear regression.
\newblock Technical Report Preprint: arXiv:1906.11300, 2019.

\bibitem[Belkin et~al.(2018{\natexlab{a}})Belkin, Ma, and Mandal]{BMM18_TR}
M.~Belkin, S.~Ma, and S.~Mandal.
\newblock To understand deep learning we need to understand kernel learning.
\newblock In \emph{Proceedings of the 35st International Conference on Machine
  Learning}, volume~80 of \emph{Proceedings of Machine Learning Research},
  Stockholm, Sweden, 2018{\natexlab{a}}. PMLR.

\bibitem[Belkin et~al.(2019{\natexlab{a}})Belkin, Hsu, Ma, and Mandal]{BHMM19}
M.~Belkin, D.~Hsu, S.~Ma, and S.~Mandal.
\newblock Reconciling modern machine-learning practice and the classical
  bias–variance trade-off.
\newblock \emph{Proc. Natl. Acad. Sci. USA}, 116:\penalty0 15849--15854,
  2019{\natexlab{a}}.

\bibitem[Belkin et~al.(2019{\natexlab{b}})Belkin, Rakhlin, and
  Tsybakov]{BRT18_TR}
M.~Belkin, A.~Rakhlin, and A.~B. Tsybakov.
\newblock Does data interpolation contradict statistical optimality?
\newblock In \emph{Proceedings of the 22nd International Conference on
  Artificial Intelligence and Statistics}, volume~89 of \emph{Proceedings of
  Machine Learning Research}, Naha, Okinawa, Japan, 2019{\natexlab{b}}. PMLR.

\bibitem[Belkin et~al.(2018{\natexlab{b}})Belkin, Hsu, and Mitra]{BHM18_TR}
Mikhail Belkin, Daniel~J Hsu, and Partha Mitra.
\newblock Overfitting or perfect fitting? {R}isk bounds for classification and
  regression rules that interpolate.
\newblock In S.~Bengio, H.~Wallach, H.~Larochelle, K.~Grauman, N.~Cesa-Bianchi,
  and R.~Garnett, editors, \emph{Advances in Neural Information Processing
  Systems 31}, pages 2300--2311. Curran Associates, Inc., 2018{\natexlab{b}}.

\bibitem[Belkin et~al.(2019{\natexlab{c}})Belkin, Hsu, and Xu]{belkin2019two}
Mikhail Belkin, Daniel Hsu, and Ji~Xu.
\newblock Two models of double descent for weak features.
\newblock \emph{arXiv preprint arXiv:1903.07571}, 2019{\natexlab{c}}.

\bibitem[Bernstein(2011)]{matrix-mathematics}
Dennis~S. Bernstein.
\newblock \emph{Matrix Mathematics: Theory, Facts, and Formulas}.
\newblock Princeton University Press, second edition, 2011.

\bibitem[Chikuse(1990)]{chikuse1990matrix}
Yasuko Chikuse.
\newblock The matrix angular central gaussian distribution.
\newblock \emph{Journal of Multivariate Analysis}, 33\penalty0 (2):\penalty0
  265--274, 1990.

\bibitem[Chikuse(1991)]{CHIKUSE1991145}
Yasuko Chikuse.
\newblock High dimensional limit theorems and matrix decompositions on the
  stiefel manifold.
\newblock \emph{Journal of Multivariate Analysis}, 36\penalty0 (2):\penalty0
  145 -- 162, 1991.

\bibitem[Chikuse(1998)]{CHIKUSE1998188}
Yasuko Chikuse.
\newblock Density estimation on the stiefel manifold.
\newblock \emph{Journal of Multivariate Analysis}, 66\penalty0 (2):\penalty0
  188 -- 206, 1998.

\bibitem[Cook and Forzani(2011)]{cook2011}
R.~Dennis Cook and Liliana Forzani.
\newblock On the mean and variance of the generalized inverse of a singular
  wishart matrix.
\newblock \emph{Electron. J. Statist.}, 5:\penalty0 146--158, 2011.

\bibitem[Derezi{\'n}ski(2019)]{dpp-intermediate}
Micha{\l} Derezi{\'n}ski.
\newblock Fast determinantal point processes via distortion-free intermediate
  sampling.
\newblock In Alina Beygelzimer and Daniel Hsu, editors, \emph{Proceedings of
  the Thirty-Second Conference on Learning Theory}, volume~99 of
  \emph{Proceedings of Machine Learning Research}, pages 1029--1049, Phoenix,
  USA, 25--28 Jun 2019.

\bibitem[Derezi\'{n}ski and Mahoney(2019)]{determinantal-averaging}
Micha{\l} Derezi\'{n}ski and Michael~W Mahoney.
\newblock Distributed estimation of the inverse {H}essian by determinantal
  averaging.
\newblock In H.~Wallach, H.~Larochelle, A.~Beygelzimer, F.~d~Alch\'{e}-Buc,
  E.~Fox, and R.~Garnett, editors, \emph{Advances in Neural Information
  Processing Systems 32}, pages 11401--11411. Curran Associates, Inc., 2019.

\bibitem[Derezi\'{n}ski and Warmuth(2017)]{unbiased-estimates}
Micha{\l} Derezi\'{n}ski and Manfred~K. Warmuth.
\newblock Unbiased estimates for linear regression via volume sampling.
\newblock In \emph{Advances in Neural Information Processing Systems 30}, pages
  3087--3096, Long Beach, CA, USA, 2017.

\bibitem[Derezi\'{n}ski and Warmuth(2018)]{regularized-volume-sampling}
Micha{\l} Derezi\'{n}ski and Manfred~K. Warmuth.
\newblock Subsampling for ridge regression via regularized volume sampling.
\newblock In Amos Storkey and Fernando Perez-Cruz, editors, \emph{Proceedings
  of the Twenty-First International Conference on Artificial Intelligence and
  Statistics}, pages 716--725, Playa Blanca, Lanzarote, Canary Islands, April
  2018.

\bibitem[Derezi\'{n}ski et~al.(2018)Derezi\'{n}ski, Warmuth, and
  Hsu]{leveraged-volume-sampling}
Micha{\l} Derezi\'{n}ski, Manfred~K. Warmuth, and Daniel Hsu.
\newblock Leveraged volume sampling for linear regression.
\newblock In S.~Bengio, H.~Wallach, H.~Larochelle, K.~Grauman, N.~Cesa-Bianchi,
  and R.~Garnett, editors, \emph{Advances in Neural Information Processing
  Systems 31}, pages 2510--2519. Curran Associates, Inc., 2018.

\bibitem[Derezi{\'n}ski et~al.(2019)Derezi{\'n}ski, Clarkson, Mahoney, and
  Warmuth]{minimax-experimental-design}
Micha{\l} Derezi{\'n}ski, Kenneth~L. Clarkson, Michael~W. Mahoney, and
  Manfred~K. Warmuth.
\newblock Minimax experimental design: Bridging the gap between statistical and
  worst-case approaches to least squares regression.
\newblock In Alina Beygelzimer and Daniel Hsu, editors, \emph{Proceedings of
  the Thirty-Second Conference on Learning Theory}, volume~99 of
  \emph{Proceedings of Machine Learning Research}, pages 1050--1069, Phoenix,
  USA, 25--28 Jun 2019.

\bibitem[{Derezi{\'n}ski} et~al.(2019){Derezi{\'n}ski}, {Liang}, and
  {Mahoney}]{bayesian-experimental-design}
Micha{\l} {Derezi{\'n}ski}, Feynman {Liang}, and Michael~W. {Mahoney}.
\newblock {Bayesian experimental design using regularized determinantal point
  processes}.
\newblock \emph{arXiv e-prints}, art. arXiv:1906.04133, Jun 2019.

\bibitem[Derezi\'{n}ski et~al.(2019)Derezi\'{n}ski, Warmuth, and
  Hsu]{correcting-bias}
Micha{\l} Derezi\'{n}ski, Manfred~K. Warmuth, and Daniel Hsu.
\newblock Correcting the bias in least squares regression with volume-rescaled
  sampling.
\newblock In Kamalika Chaudhuri and Masashi Sugiyama, editors,
  \emph{Proceedings of the 22nd International Conference on Artificial
  Intelligence and Statistics}, volume~89 of \emph{Proceedings of Machine
  Learning Research}, pages 944--953. PMLR, 16--18 Apr 2019.

\bibitem[Derezi{\'n}ski et~al.(2019)Derezi{\'n}ski, Warmuth, and
  Hsu]{correcting-bias-journal}
Micha{\l} Derezi{\'n}ski, Manfred~K. Warmuth, and Daniel Hsu.
\newblock {Unbiased estimators for random design regression}.
\newblock \emph{arXiv e-prints}, art. arXiv:1907.03411, Jul 2019.

\bibitem[Drineas and Mahoney(2016)]{DM16_CACM}
Petros Drineas and Michael~W. Mahoney.
\newblock {RandNLA}: Randomized numerical linear algebra.
\newblock \emph{Communications of the ACM}, 59:\penalty0 80--90, 2016.

\bibitem[Drineas and Mahoney(2017)]{RandNLA_PCMIchapter_TR}
Petros Drineas and Michael~W. Mahoney.
\newblock Lectures on randomized numerical linear algebra.
\newblock Technical report, 2017.
\newblock Preprint: arXiv:1712.08880; To appear in: \emph{Lectures of the 2016
  PCMI Summer School on Mathematics of Data}.

\bibitem[Friedman et~al.(2001)Friedman, Hastie, and Tibshirani]{HFT09}
Jerome Friedman, Trevor Hastie, and Robert Tibshirani.
\newblock \emph{The elements of statistical learning}, volume~1.
\newblock Springer series in statistics New York, 2001.

\bibitem[Geiger et~al.(2019)Geiger, Jacot, Spigler, Gabriel, Sagun, d'Ascoli,
  Biroli, Hongler, and Wyart]{GJSx19_TR}
M.~Geiger, A.~Jacot, S.~Spigler, F.~Gabriel, L.~Sagun, S.~d'Ascoli, G.~Biroli,
  C.~Hongler, and M.~Wyart.
\newblock Scaling description of generalization with number of parameters in
  deep learning.
\newblock Technical Report Preprint: arXiv:1901.01608, 2019.

\bibitem[Gleich and Mahoney(2014)]{GM14_ICML}
D.~F. Gleich and M.~W. Mahoney.
\newblock Anti-differentiating approximation algorithms: A case study with
  min-cuts, spectral, and flow.
\newblock In \emph{Proceedings of the 31st International Conference on Machine
  Learning}, pages 1018--1025, 2014.

\bibitem[Gunasekar et~al.(2017)Gunasekar, Woodworth, Bhojanapalli, Neyshabur,
  and Srebro]{GWBNx17}
Suriya Gunasekar, Blake~E Woodworth, Srinadh Bhojanapalli, Behnam Neyshabur,
  and Nati Srebro.
\newblock Implicit regularization in matrix factorization.
\newblock In I.~Guyon, U.~V. Luxburg, S.~Bengio, H.~Wallach, R.~Fergus,
  S.~Vishwanathan, and R.~Garnett, editors, \emph{Advances in Neural
  Information Processing Systems 30}, pages 6151--6159. Curran Associates,
  Inc., 2017.

\bibitem[Hachem et~al.(2013)Hachem, Loubaton, Najim, and
  Vallet]{hachem2013bilinear}
Walid Hachem, Philippe Loubaton, Jamal Najim, and Pascal Vallet.
\newblock On bilinear forms based on the resolvent of large random matrices.
\newblock \emph{Annales de l'IHP Probabilit{\'e}s et statistiques}, 49\penalty0
  (1):\penalty0 36--63, 2013.

\bibitem[Hastie et~al.(2019)Hastie, Montanari, Rosset, and
  Tibshirani]{HMRT19_TR}
T.~Hastie, A.~Montanari, S.~Rosset, and R.~J. Tibshirani.
\newblock Surprises in high-dimensional ridgeless least squares interpolation.
\newblock Technical Report Preprint: arXiv:1903.08560, 2019.

\bibitem[Kobak et~al.(2018)Kobak, Lomond, and Sanchez]{KLS18_TR}
D.~Kobak, J.~Lomond, and B.~Sanchez.
\newblock Optimal ridge penalty for real-world high-dimensional data can be
  zero or negative due to the implicit ridge regularization.
\newblock Technical report, 2018.
\newblock Preprint: arXiv:1805.10939.

\bibitem[Kubo et~al.(2019)Kubo, Banno, Manabe, and Minoji]{KBMM19_TR}
M.~Kubo, R.~Banno, H.~Manabe, and M.~Minoji.
\newblock Implicit regularization in over-parameterized neural networks.
\newblock Technical Report Preprint: arXiv:1903.01997, 2019.

\bibitem[Kulesza and Taskar(2012)]{dpp-ml}
Alex Kulesza and Ben Taskar.
\newblock \emph{Determinantal Point Processes for Machine Learning}.
\newblock Now Publishers Inc., Hanover, MA, USA, 2012.

\bibitem[Ledoit and P{\'e}ch{\'e}(2011)]{ledoit2011eigenvectors}
Olivier Ledoit and Sandrine P{\'e}ch{\'e}.
\newblock Eigenvectors of some large sample covariance matrix ensembles.
\newblock \emph{Probability Theory and Related Fields}, 151\penalty0
  (1-2):\penalty0 233--264, 2011.

\bibitem[LeJeune et~al.(2019)LeJeune, Javadi, and Baraniuk]{LJB19_TR}
D.~LeJeune, H.~Javadi, and R.~G. Baraniuk.
\newblock The implicit regularization of ordinary least squares ensembles.
\newblock Technical report, 2019.
\newblock Preprint: arXiv:1910.04743.

\bibitem[Liang and Rakhlin(2019)]{LR18_TR}
T.~Liang and A.~Rakhlin.
\newblock Just interpolate: Kernel ``ridgeless'' regression can generalize.
\newblock \emph{The Annals of Statistics, to appear}, 2019.

\bibitem[Lopes et~al.(2019)Lopes, Erichson, and
  Mahoney]{lopes2019bootstrapping}
Miles~E Lopes, N~Benjamin Erichson, and Michael~W Mahoney.
\newblock Bootstrapping the operator norm in high dimensions: Error estimation
  for covariance matrices and sketching.
\newblock \emph{arXiv preprint arXiv:1909.06120}, 2019.

\bibitem[Ma et~al.(2015)Ma, Mahoney, and Yu]{MMY15}
P.~Ma, M.~W. Mahoney, and B.~Yu.
\newblock A statistical perspective on algorithmic leveraging.
\newblock \emph{Journal of Machine Learning Research}, 16:\penalty0 861--911,
  2015.

\bibitem[Mahoney(2012)]{Mah12}
M.~W. Mahoney.
\newblock Approximate computation and implicit regularization for very
  large-scale data analysis.
\newblock In \emph{Proceedings of the 31st ACM Symposium on Principles of
  Database Systems}, pages 143--154, 2012.

\bibitem[Mahoney and Orecchia(2011)]{MO11-implementing}
M.~W. Mahoney and L.~Orecchia.
\newblock Implementing regularization implicitly via approximate eigenvector
  computation.
\newblock In \emph{Proceedings of the 28th International Conference on Machine
  Learning}, pages 121--128, 2011.

\bibitem[Martin and Mahoney(2018)]{MM18_TR}
C.~H. Martin and M.~W. Mahoney.
\newblock Implicit self-regularization in deep neural networks: Evidence from
  random matrix theory and implications for learning.
\newblock Technical Report Preprint: arXiv:1810.01075, 2018.

\bibitem[Martin and Mahoney(2019)]{MM19_HTSR_ICML}
C.~H. Martin and M.~W. Mahoney.
\newblock Traditional and heavy-tailed self regularization in neural network
  models.
\newblock In \emph{Proceedings of the 36th International Conference on Machine
  Learning}, pages 4284--4293, 2019.

\bibitem[Mei and Montanari(2019)]{MM19_TR}
S.~Mei and A.~Montanari.
\newblock The generalization error of random features regression: Precise
  asymptotics and double descent curve.
\newblock Technical Report Preprint: arXiv:1908.05355, 2019.

\bibitem[Mitra(2019)]{Mit19_TR}
P.~P. Mitra.
\newblock Understanding overfitting peaks in generalization error: Analytical
  risk curves for l2 and l1 penalized interpolation.
\newblock Technical Report Preprint: arXiv:1906.03667, 2019.

\bibitem[Muthukumar et~al.(2019)Muthukumar, Vodrahalli, Subramanian, and
  Sahai]{MVSS19_TR}
V.~Muthukumar, K.~Vodrahalli, V.~Subramanian, and A.~Sahai.
\newblock Harmless interpolation of noisy data in regression.
\newblock Technical Report Preprint: arXiv:1903.09139, 2019.

\bibitem[Mutn{\'y} et~al.(2019)Mutn{\'y}, Derezi{\'n}ski, and Krause]{MDK19_TR}
M.~Mutn{\'y}, M.~Derezi{\'n}ski, and A.~Krause.
\newblock Convergence analysis of the randomized {N}ewton method with
  determinantal sampling.
\newblock Technical report, 2019.
\newblock Preprint: arXiv:1910.11561.

\bibitem[Neyshabur(2017)]{Ney17_TR}
B.~Neyshabur.
\newblock Implicit regularization in deep learning.
\newblock Technical report, 2017.
\newblock Preprint: arXiv:1709.01953.

\bibitem[Neyshabur et~al.(2014)Neyshabur, Tomioka, and Srebro]{NTS14_TR}
B.~Neyshabur, R.~Tomioka, and N.~Srebro.
\newblock In search of the real inductive bias: on the role of implicit
  regularization in deep learning.
\newblock Technical Report Preprint: arXiv:1412.6614, 2014.

\bibitem[Perry and Mahoney(2011)]{PM11}
P.~O. Perry and M.~W. Mahoney.
\newblock Regularized {L}aplacian estimation and fast eigenvector
  approximation.
\newblock In \emph{Annual Advances in Neural Information Processing Systems 24:
  Proceedings of the 2011 Conference}, 2011.

\bibitem[Raskutti and Mahoney(2016)]{GarveshMahoney_JMLR}
G.~Raskutti and M.~W. Mahoney.
\newblock A statistical perspective on randomized sketching for ordinary
  least-squares.
\newblock \emph{Journal of Machine Learning Research}, 17\penalty0
  (214):\penalty0 1--31, 2016.

\bibitem[Silverstein and Bai(1995)]{silverstein1995empirical}
Jack~W Silverstein and ZD~Bai.
\newblock On the empirical distribution of eigenvalues of a class of large
  dimensional random matrices.
\newblock \emph{Journal of Multivariate analysis}, 54\penalty0 (2):\penalty0
  175--192, 1995.

\bibitem[Soudry et~al.(2018)Soudry, Hoffer, Nacson, Gunasekar, and
  Srebro]{SHNx17_TR}
Daniel Soudry, Elad Hoffer, Mor~Shpigel Nacson, Suriya Gunasekar, and Nathan
  Srebro.
\newblock The implicit bias of gradient descent on separable data.
\newblock \emph{The Journal of Machine Learning Research}, 19\penalty0
  (1):\penalty0 2822--2878, 2018.

\bibitem[Srivastava(2003)]{srivastava2003}
M.S. Srivastava.
\newblock Singular wishart and multivariate beta distributions.
\newblock \emph{Ann. Statist.}, 31\penalty0 (5):\penalty0 1537--1560, 10 2003.

\bibitem[van~der Vaart(1965)]{expected-generalized-variance}
H.~Robert van~der Vaart.
\newblock A note on {W}ilks' internal scatter.
\newblock \emph{Ann. Math. Statist.}, 36\penalty0 (4):\penalty0 1308--1312, 08
  1965.

\end{thebibliography}

\ifisarxiv\else\newpage\fi

\appendix

\section{Proof of Lemma~\ref{l:size}}
\label{appx: proof-of-l-size}

We first record an important property of the design $S_\mu^d$
which can be used to construct an over-determined design for any $n>d$. A similar
version of this result was also previously shown by
\cite{correcting-bias-journal} for a different determinantal design.

\begin{lemma}\label{l:decomposition}
  Let $\Xb\sim S_\mu^d$ and $\X\sim \mu^K$, where
  $K\sim\Poisson(\gamma)$. Then the matrix composed of a random
  permutation of the rows from $\Xb$ and $\X$ is distributed according to
  $S_\mu^{d+\gamma}$.
\end{lemma}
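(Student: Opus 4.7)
The plan is to compute the joint density of the concatenated-permuted matrix $\Y$ and of $\Xb'\sim S_\mu^{d+\gamma}$ as distributions over $\bigsqcup_{n\geq d}\R^{n\times d}$ (matrix value together with row count), and verify that they agree.

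First I would condition on the Poisson outcome $K=k$, so that $\Y$ is obtained by stacking $\Xb\sim S_\mu^d$ on top of $\X\sim\mu^k$ and applying a uniformly random permutation of the $d+k$ rows. By Definition~\ref{d:surrogate} the conditional density of $\Xb$ at $\Z\in\R^{d\times d}$ is $\det(\Z)^2\mu^d(\Z)/Z_d$ with $Z_d=\E_{\X_1\sim\mu^d}[\det(\X_1)^2]$, which is symmetric in the rows, and the rows of $\X\sim\mu^k$ are i.i.d., so by exchangeability the conditional density of $\Y$ at $\Y\in\R^{(d+k)\times d}$ reduces to a uniform average over the $\binom{d+k}{d}$ choices of the subset $T\subseteq[d+k]$ into which $\Xb$'s rows are placed:
\begin{align*}
f_{\Y\mid K=k}(\Y)\;=\;\frac1{\binom{d+k}{d}Z_d}\,\mu^{d+k}(\Y)\sum_{T\in\binom{[d+k]}{d}}\det(\Y_{T,*})^2.
\end{align*}
Cauchy–Binet collapses the sum to $\det(\Y^\top\Y)$. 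Multiplying by $\Pr[K=k]=e^{-\gamma}\gamma^k/k!$ and setting $n=d+k$ yields the joint density
\begin{align*}
f_{\Y,N}(\Y,n)\;=\;\frac{e^{-\gamma}\gamma^{n-d}\,d!}{n!\,Z_d}\,\det(\Y^\top\Y)\,\mu^n(\Y),\qquad n\geq d.
\end{align*}

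Next I would compute the corresponding density for $\Xb'\sim S_\mu^{d+\gamma}=\Det(\mu,\Poisson(\gamma)_{\geq d})$ directly from Definitions~\ref{d:det} and~\ref{d:surrogate}. Since the support of the sample size is $\{d,d+1,\ldots\}$ and $\pdet(\X\X^\top)=\det(\X^\top\X)$ there, the density at $(\Y,n)$ is proportional to $e^{-\gamma}\gamma^n/n!\cdot\det(\Y^\top\Y)\mu^n(\Y)$, with normalizer $Z'=\sum_{k\geq d}\Pr[K{=}k]\,\E_{\X\sim\mu^k}[\det(\X^\top\X)]$ (divided additionally by $\Pr[\Poisson(\gamma)\geq d]$). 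Comparing the two densities term-by-term in $n$, they match up to a single $n$-independent constant, so the argument reduces to showing that this constant equals $1$, i.e., that $Z'\,\Pr[\Poisson(\gamma)\geq d]=\gamma^d\,Z_d/d!$.

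To close the loop I would invoke Lemma~\ref{l:poisson} with $\A=\B=\X$, $\X\sim\mu^K$, $K\sim\Poisson(\gamma)$, to obtain $\E[\det(\X^\top\X)]=\det(\gamma\Sigmab_\mu)=\gamma^d\det(\Sigmab_\mu)$. Expanding the left-hand side via Cauchy–Binet gives $\E_{\X\sim\mu^k}[\det(\X^\top\X)]=\binom{k}{d}\,Z_d$ for each $k\geq d$, and summing against $\Pr[K=k]$ simultaneously identifies $Z_d=d!\det(\Sigmab_\mu)$ and $Z'\,\Pr[\Poisson(\gamma)\geq d]=\gamma^d\det(\Sigmab_\mu)$. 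Substituting both into the two density expressions makes them literally equal, which proves the lemma. The whole argument is a bookkeeping exercise once the right decomposition is in place; the only mildly subtle point is the combinatorial accounting that links the random permutation, the factor $\binom{d+k}{d}$, and Cauchy–Binet, together with the Poisson shift $k\mapsto k-d$ needed to match the two normalizers.
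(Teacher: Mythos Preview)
Your proposal is correct and follows essentially the same route as the paper: condition on the Poisson outcome, use exchangeability to reduce the random permutation to a uniform average over size-$d$ row subsets, collapse that sum via Cauchy--Binet into $\det(\Y^\top\Y)$, and then match normalizers through the Poisson index shift $k\mapsto k-d$. The only cosmetic difference is that the paper works with $\Pr\{\Xt\in E\}$ for a generic event and plugs in $Z_d=d!\det(\Sigmab_\mu)$ directly (via Lemma~\ref{l:cb}), whereas you work with densities and recover that same constant through Lemma~\ref{l:poisson}.
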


\begin{proof}
Let $\Xt$ denote the matrix constructed from the permuted rows of
$\Xb$ and $\X$.  Letting $\Z\sim\mu^{K+d}$, we derive the probability
$\Pr\big\{\Xt\!\in\! E\big\}$ by summing over the possible index subsets  $S\subseteq
[K+d]$ that correspond to the rows coming from $\Xb$:
\begin{align*}
  \Pr\big\{\Xt\in E\big\} &= \E\bigg[\frac{1}{\binom{K+d}{d}}
  \sum_{S:\,|S|=d}\frac{\E[\det(\Z_{S,*})^2\one_{[\Z\in E]}\mid
  K]}{d!\det(\Sigmab_\mu)}\bigg]\\
  &=\sum_{k=0}^\infty
    \frac{\gamma^k\ee^{-\gamma}}{k!}\,\frac{\gamma^dk!}{(k+d)!}\,
    \frac{\E\big[\sum_{S:\,|S|=d}\det(\Z_{S,*})^2\one_{[\Z\in E]}\mid
    K=k\big]}{\det(\gamma\Sigmab_\mu)}\\
  &\overset{(*)}{=} \sum_{k=0}^\infty
    \frac{\gamma^{k+d}\ee^{-\gamma}}{(k+d)!}
    \,\frac{\E[\det(\Z^\top\Z)\one_{[\Z\in E]}\mid K=k]}{\det(\gamma\Sigmab_\mu)},
\end{align*}
where $(*)$ uses the Cauchy-Binet formula to sum over all subsets $S$
of size $d$. Finally, since the sum shifts from $k$
to $k+d$, the last expression can be rewritten as
$\E[\det(\X^\top\X)\one_{[\X\in E]}]/\det(\gamma\Sigmab_\mu)$, where recall that
$\X\sim\mu^K$ and $K\sim\Poisson(\gamma)$, matching the definition of $S_\mu^{d+\gamma}$.
\end{proof}

We now proceed with the proof of Lemma \ref{l:size}, where we establish
that the expected sample size of $S_\mu^n$ is indeed $n$.

\begin{proofof}{Lemma}{\ref{l:size}}
  The result is obvious when $n=d$, whereas
  for $n>d$ it is an immediate consequence
  of Lemma \ref{l:decomposition}.
  Finally, for $n<d$ the expected sample
  size follows as a corollary of Lemma \ref{l:proj}, which states that
  \begin{align*}
\text{(Lemma \ref{l:proj})} \qquad\E\big[\I - \Xb^\dagger\Xb\big] =
    (\gamma_n\Sigmab_\mu + \I)^{-1},
  \end{align*}
  where $\Xb^\dagger\Xb$ is the orthogonal projection onto
  the subspace spanned by the rows of $\Xb$. Since the rank of this
  subspace is equal to the number of the rows, we have
  $\#(\Xb)=\tr(\Xb^\dagger\Xb)$, so
  \begin{align*}
    \E\big[\#(\Xb)\big] = d - \tr\big((\gamma_n\Sigmab_\mu +
    \I)^{-1}\big) =
    \tr\big(\gamma_n\Sigmab_\mu(\gamma_n\Sigmab_\mu+\I)^{-1}\big) = n,
  \end{align*}
  which completes the proof.
\end{proofof}

\section{Proofs for Section \ref{s:dp}}
\label{a:dp}

We use $\adj(\A)$ to denote the adjugate of $\A$, defined as follows: the
$(i,j)$th entry of $\adj(\A)$ is
$(-1)^{i+j}\det(\A_{[n]\backslash\{j\},[n]\backslash\{i\}})$.
We will use two useful identities related to the adjugate: (1)
$\adj(\A)=\det(\A)\A^{-1}$ for invertible $\A$, and (2)
$\det(\A+\u\v^\top)=\det(\A)+\v^\top\!\adj(\A)\u$
\citep[see Fact 2.14.2 in][]{matrix-mathematics}.

First, note that from the definition of an adjugate matrix it immediately follows that if $\A$ is
determinant preserving then adjugate commutes with expectation for this matrix:
\begin{align}
  \E\big[\big(\!\adj(\A)\big)_{i,j}\big] &=
  \E\big[(-1)^{i+j}\det(\A_{[d]\backslash\{j\},[d]\backslash\{i\}})\big]\nonumber
  \\
&=(-1)^{i+j}\det\!\big(\E[\A_{[d]\backslash\{j\},[d]\backslash\{i\}}]\big)
  \\
  &= \big(\!\adj(\E[\A])\big)_{i,j}.\label{eq:adj}
\end{align}

\begin{proofof}{Lemma}{\ref{t:ring}} \
 First, we show that $\A+\u\v^\top$ is d.p.~for any fixed
 $\u,\v\in\R^d$. Below, we use the identity for a rank one
 update of a determinant:
 $\det(\A+\u\v^\top)=\det(\A)+\v^\top\!\adj(\A)\u$. It follows that
 for any $\Ic$ and $\Jc$ of the same size,
  \begin{align*}
\E\big[\!\det(\A_{\Ic,\Jc}\!+\u_{\Ic}\v_{\Jc}^\top)\big] &=
    \E\big[\!\det(\A_{\Ic,\Jc}) +
    \v_{\Jc}^\top\adj(\A_{\Ic,\Jc}) \u_{\Ic}\big]\\
    &\overset{(*)}{=}\det\!\big(\E[\A_{\Ic,\Jc}]\big) +
      \v_{\Jc}^\top\adj\!\big(\E[\A_{\Ic,\Jc}]\big) \u_{\Ic}\\
    &=\det\!\big(\E[\A_{\Ic,\Jc} \!+ \u_{\Ic}\v_{\Jc}^\top]\big),
  \end{align*}
  where $(*)$ used \eqref{eq:adj}, i.e., the fact that for d.p.~matrices, adjugate commutes
  with expectation. Crucially, through the definition of an adjugate
  this step implicitly relies on the assumption that all the square
  submatrices of $\A_{\Ic,\Jc}$ are also  determinant preserving.
  Iterating this, we get that $\A+\Z$ is d.p.~for any fixed
  $\Z$. We now show the same for $\A+\B$:
  \begin{align*}
\E\big[\!\det(\A_{\Ic,\Jc}\!+\B_{\Ic,\Jc})\big]
    &=
      \E\Big[\E\big[\!\det(\A_{\Ic,\Jc}\!+\B_{\Ic,\Jc})\mid\B\big]\Big]\\
    &\overset{(*)}{=}\E\Big[\!\det\!\big(\E[\A_{\Ic,\Jc}]\!+\B_{\Ic,\Jc}\big)\Big]\\
      &= \det\!\big(\E[\A_{\Ic,\Jc}\!+\B_{\Ic,\Jc}]\big),
  \end{align*}
  where $(*)$  uses the fact that after conditioning on $\B$ we can
  treat it as a fixed matrix. Next, we show that $\A\B$ is determinant preserving via the Cauchy-Binet formula:
  \begin{align*}
    \E\big[\!\det\!\big((\A\B)_{\Ic,\Jc}\big)\big]
    &= \E\big[\!\det(\A_{\Ic,*}\B_{*,\Jc})\big]\\
    &=\E\bigg[\sum_{S:\,|S|=|\Ic|}\!\!\det\!\big(\A_{\Ic,S}\big)
      \det\!\big(\B_{S,\Jc}\big)\bigg]\\
&=\!\!\sum_{S:\,|S|=|\Ic|}\!\!\det\!\big(\E[\A]_{\Ic,S}\big)
                                                \det\!\big(\E[\B]_{S,\Jc}\big)\\
    &=\det\!\big(\E[\A]_{\Ic,*}\, \E[\B]_{*,\Jc}\big)\\
      &= \det\!\big(\E[\A\B]_{\Ic,\Jc}\big),
  \end{align*}
  where recall that $\A_{\Ic,*}$ denotes the submatrix of $\A$
  consisting of its (entire) rows indexed by $\Ic$.
  \end{proofof}

To prove Lemma \ref{l:poisson}, we will use the following
lemma, many variants of which appeared in the literature
\cite[e.g.,][]{expected-generalized-variance}. We use the one given by
\cite{correcting-bias}.
\begin{lemma}[\cite{correcting-bias}]\label{l:cb}
If the rows of random $k\times d$ matrices $\A,\B$
  are sampled as an i.i.d.~sequence of $k\geq d$ pairs of joint random vectors, then
\begin{align}
  k^d\,\E \big[\det(\A^\top\B)\big]
  &= \ktd\,\det\!\big(\E[\A^\top\B]\big).
     \end{align}
 \end{lemma}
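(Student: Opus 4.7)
The plan is to expand $\det(\A^\top\B)$ via Cauchy--Binet, use the i.i.d.~row structure of the pairs $(\a_i,\b_i)$ to reduce to a single $d$-row subproblem, and then evaluate that subproblem by a Leibniz expansion that collapses through the symmetric group.

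First I would write $\A^\top\B = \sum_{i=1}^k \a_i\b_i^\top$, where $\a_i^\top, \b_i^\top$ denote the $i$th rows of $\A, \B$. Applying Cauchy--Binet to the $d\times k$ times $k\times d$ product $\A^\top\cdot\B$ gives
\[
\det(\A^\top\B) \;=\; \sum_{\substack{S\subseteq[k]\\|S|=d}} \det(\A_{S,*})\,\det(\B_{S,*}).
\]
Since the pairs $(\a_i,\b_i)$ are i.i.d.~across $i$, each of the $\binom{k}{d}$ summands has identical expectation, so
\[
\E[\det(\A^\top\B)] \;=\; \binom{k}{d}\,\E\bigl[\det(\A_{[d],*})\det(\B_{[d],*})\bigr].
\]

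Next I would evaluate the inner expectation by Leibniz. Setting $M = \E[\a_1\b_1^\top]$, expanding both determinants and factoring the expectation of the product of entries using independence \emph{across} $j\in[d]$ (not within a row, where $\a_j$ and $\b_j$ may be arbitrarily dependent) yields
\[
\E\bigl[\det(\A_{[d],*})\det(\B_{[d],*})\bigr] \;=\; \sum_{\sigma,\tau}\sgn(\sigma)\sgn(\tau)\prod_{j=1}^d M_{\sigma(j),\tau(j)}.
\]
For each fixed $\sigma$, the inner sum over $\tau$ is the Leibniz expansion of the determinant of $M$ with its rows permuted by $\sigma$, hence equals $\sgn(\sigma)\det(M)$; the two signs then cancel and the outer sum collapses to $d!\det(M)$.

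To finish, I would use $\E[\A^\top\B] = kM$ so that $\det(\E[\A^\top\B]) = k^d\det(M)$, and combine with $\binom{k}{d} = \ktd/d!$ to get
\[
\E[\det(\A^\top\B)] \;=\; \frac{\ktd}{d!}\cdot d!\det(M) \;=\; \frac{\ktd}{k^d}\det(\E[\A^\top\B]),
\]
which rearranges to the claim. The main delicate step is the sign bookkeeping in the double Leibniz sum: the collapse works precisely because $\sgn(\sigma)^2 = 1$, so the two signatures cancel before the sum over $\sigma$ is taken. Everything else is routine manipulation---Cauchy--Binet is purely algebraic, the i.i.d.~symmetry over subsets $S$ is immediate, and the only subtle hypothesis is that independence is required only across row indices $j$, which is exactly what the lemma's hypothesis on joint pairs of random vectors provides.
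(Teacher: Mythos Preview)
Your proof is correct. The Cauchy--Binet reduction to a single $d$-row block, followed by the double Leibniz expansion that collapses via $\sgn(\sigma)^2=1$, is a clean and standard route to this identity; all steps check out, including the sign bookkeeping and the use of independence only across rows.

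Note that the paper does not actually prove this lemma: it is quoted from \cite{correcting-bias} and used as a black box (see the text preceding the lemma statement in Appendix~\ref{a:dp}). So there is no in-paper proof to compare against. Your argument is essentially the natural proof one would expect for this statement and is in the same spirit as the proofs of related determinantal identities elsewhere in the appendix.
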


\noindent
Here, we use the following standard shorthand: $\ktd =
\frac{k!}{(k-d)!} = k\,(k-1)\dotsm(k-d+1)$. Note that the above result
almost looks like we are claiming that the matrix $\A^\top\B$ is d.p.,
but in fact it is not because $k^d\neq \ktd$. The difference
in those factors is precisely what we are going to correct with the
Poisson random variable. We now present the proof of Lemma
\ref{l:poisson}.
\begin{proofof}{Lemma}{\ref{l:poisson}}
Without loss of generality, it suffices to check Definition \ref{d:main} with both $\Ic$ and
$\Jc$ equal $[d]$. We first expand the expectation by
conditioning on the value of $K$ and letting $\gamma=\E[K]$:
    \begin{align*}
      \E\big[\!\det(\A^\top\B)\big]
      &= \sum_{k=0}^\infty
\E\big[\det(\A^\top\B)\mid K\!=\!k\big]\
\Pr(K\!=\!k)\\
      \text{(Lemma \ref{l:cb})}
      \quad&=
        \sum_{k=d}^\infty\frac{k! k^{-d}}{(k-d)!}\det\!\big(\E[\A^\top\B\mid
        K\!=\!k]\big)
        \frac{\gamma^k\ee^{-\gamma}}{k!}\\
      &=\sum_{k=d}^\infty
\Big(\frac\gamma k\Big)^d\det\!\big(\E[\A^\top\B\mid K\!=\!k]\big)
        \frac{\gamma^{k-d}\ee^{-\gamma}}{(k-d)!}.
    \end{align*}
    Note that $\frac\gamma k\,\E[\A^\top\B\mid K\!=\!k]=\E[\A^\top\B]$,
    which is independent of $k$. Thus we can rewrite the above
    expression as:
    \begin{align*}
\det\!\big(\E[\A^\top\B]\big)\sum_{k=d}^\infty\frac{\gamma^{k-d}\ee^{-\gamma}}{(k-d)!}
      =
      \det\!\big(\E[\A^\top\B]\big)\sum_{k=0}^\infty
      \frac{\gamma^{k}\ee^{-\gamma}}{k!}=\det\!\big(\E[\A^\top\B]\big),
    \end{align*}
    which concludes the proof.
  \end{proofof}

To prove Lemma \ref{l:normalization}, we use the following standard
determinantal formula which is used to derive the normalization
constant of a discrete determinantal point process.
\begin{lemma}[\cite{dpp-ml}]\label{l:det-standard}
  For any $k\times d$ matrices $\A,\B$ we have
  \[\det(\I+\A\B^\top)=\sum_{S\subseteq[k]}\det(\A_{S,*}\B_{S,*}^\top).\]
\end{lemma}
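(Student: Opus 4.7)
The plan is to reduce the claim to an application of the d.p.~property of $\B^\top\A$ (supplied by Lemma \ref{l:poisson}) by expanding $\det(\I+\B^\top\A)$ via Lemma \ref{l:det-standard}, then exploiting the factorial moments of the Poisson distribution to match the combinatorial factors with $\E[\det(\A\B^\top)]$.

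First I would observe that Sylvester's determinant identity gives $\det(\I_d+\B^\top\A)=\det(\I_K+\A\B^\top)$, and Lemma \ref{l:det-standard} expands the right-hand side conditionally on $K=k$ as $\sum_{S\subseteq[k]}\det(\A_{S,*}\B_{S,*}^\top)$. Let $a_j\defeq\E[\det(\A_{[j],*}\B_{[j],*}^\top)]$, computed from any $j$ i.i.d.~joint row pairs; by the i.i.d.~assumption on the rows (and independence of $K$ from the row sequence), $\E[\det(\A_{S,*}\B_{S,*}^\top)\mid K=k]=a_{|S|}$ for every $S\subseteq[k]$. Taking expectation and reindexing by subset size $j=|S|$,
\begin{align*}
\E\big[\!\det(\I_d+\B^\top\A)\big]
=\sum_{k=0}^\infty \Pr(K=k)\sum_{j=0}^k\binom{k}{j}a_j
=\sum_{j=0}^\infty a_j\,\E\!\left[\binom{K}{j}\right]
=\sum_{j=0}^\infty \frac{a_j\gamma^j}{j!},
\end{align*}
using the classical Poisson factorial-moment identity $\E[\binom{K}{j}]=\gamma^j/j!$ for $\gamma=\E[K]$.

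Next I would compute $\E[\det(\A\B^\top)]$ directly: conditional on $K=k$, the entire matrix $\A\B^\top$ coincides with $\A_{[k],*}\B_{[k],*}^\top$, so $\E[\det(\A\B^\top)\mid K=k]=a_k$ and therefore $\E[\det(\A\B^\top)]=\sum_{k\ge 0}\frac{\gamma^k\ee^{-\gamma}}{k!}a_k=\ee^{-\gamma}\sum_{j\ge 0}\frac{a_j\gamma^j}{j!}$. Comparing with the previous display yields $\E[\det(\A\B^\top)]=\ee^{-\gamma}\,\E[\det(\I+\B^\top\A)]$.

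Finally, Lemma \ref{l:poisson} asserts that $\A^\top\B$ is d.p., and since the d.p.~property is preserved under transpose (minors of $\M^\top$ are transposed minors of $\M$, with equal determinants), $\B^\top\A$ is d.p.~as well. By Lemma \ref{t:ring}, adding the fixed (hence trivially d.p.~and independent) matrix $\I$ keeps it d.p., so $\E[\det(\I+\B^\top\A)]=\det(\I+\E[\B^\top\A])$; combining with the previous step gives the claim. The main obstacle is purely combinatorial: one must verify that $\E[\binom{K}{j}]=\gamma^j/j!$ so that the generating-function-style sum over $j$ telescopes against the $\ee^{-\gamma}$ prefactor in $\E[\det(\A\B^\top)]$. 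This is exactly the place where the Poisson assumption on $K$ is essential — for any other distribution on $K$ the two series would fail to match, which also explains why the factor $\ee^{-\gamma}=\ee^{-\E[K]}$ is the precise correction appearing in the statement.
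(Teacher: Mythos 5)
Your proposal does not prove the statement in question. Lemma \ref{l:det-standard} is a purely deterministic identity about two \emph{fixed} $k\times d$ matrices: $\det(\I+\A\B^\top)=\sum_{S\subseteq[k]}\det(\A_{S,*}\B_{S,*}^\top)$. There is no random variable $K$, no Poisson distribution, and no expectation anywhere in it. What you have written is instead a proof of Lemma \ref{l:normalization} (the identity $\E[\det(\A\B^\top)]=\ee^{-\E[K]}\det(\I+\E[\B^\top\A])$), and a reasonably clean one --- your use of the Poisson factorial-moment identity $\E\big[\binom{K}{j}\big]=\gamma^j/j!$ is a slightly slicker packaging of the paper's interchange-of-summation argument for that lemma. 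But it is the wrong target, and worse, your very first step \emph{invokes} Lemma \ref{l:det-standard} to expand $\det(\I+\A\B^\top)$ over subsets. As a proof of Lemma \ref{l:det-standard} itself this is circular: you assume the conclusion in order to derive something else.

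A proof of the actual statement needs only linear algebra. One route: the characteristic-polynomial expansion $\det(t\I_k+\M)=\sum_{j=0}^{k}t^{k-j}\sum_{|S|=j}\det(\M_{S,S})$ (sum of principal $j\times j$ minors as coefficients), applied with $\M=\A\B^\top$ and $t=1$, gives $\det(\I+\A\B^\top)=\sum_{S\subseteq[k]}\det\big((\A\B^\top)_{S,S}\big)$; then observe that $(\A\B^\top)_{S,S}=\A_{S,*}\B_{S,*}^\top$. Alternatively, append an identity block and apply the Cauchy--Binet formula to $\det\big([\A,\ \I_k]\,[\B,\ \I_k]^\top\big)$. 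The paper itself does not reprove this identity --- it cites it as a standard fact from the DPP literature --- but if you are asked to supply a proof, it must be one of these deterministic arguments, not a computation of Poisson moments.
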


\begin{proofof}{Lemma}{\ref{l:normalization}}
By Lemma \ref{l:poisson}, the matrix $\B^\top\A$ is determinant
preserving. Applying Lemma \ref{t:ring} we conclude that
$\I+\B^\top\A$ is also d.p., so
\begin{align*}
  \det\!\big(\I+\E[\B^\top\A]\big) = \E\big[\det(\I+\B^\top\A)\big] =\E\big[\det(\I+\A\B^\top)\big],
\end{align*}
where the second equality is known as Sylvester's Theorem.
We rewrite the expectation of $\det(\I+\A\B^\top)$ by applying Lemma
\ref{l:det-standard}.  Letting $\gamma=\E[K]$, we obtain:
\begin{align*}
\E\big[\det(\I+\A\B^\top)\big]  &=\E\bigg[\sum_{S\subseteq [K]}\E\big[\det(\A_{S,*}\B_{S,*}^\top)\mid
    K\big]\bigg]\\
  &\overset{(*)}{=}\sum_{k=0}^\infty\frac{\gamma^k\ee^{-\gamma}}{k!}
  \sum_{i=0}^k\binom{k}{i} \E\big[\det(\A\B^\top)\mid K=i\big]\\
  &=\sum_{i=0}^\infty \E\big[\det(\A\B^\top)\mid K=i\big]
  \sum_{k\geq i}^\infty \binom{k}{i}
  \frac{\gamma^k\ee^{-\gamma}}{k!}\\
  &=\sum_{i=0}^\infty
    \frac{\gamma^i\ee^{-\gamma}}{i!}\E\big[\det(\A\B^\top)\mid K=i\big]
    \sum_{k\geq i}^\infty\frac{\gamma^{k-i}}{(k-i)!} = \E\big[\det(\A\B^\top)\big]\cdot\ee^\gamma,
\end{align*}
where $(*)$ follows from the exchangeability of the rows of $\A$ and
$\B$, which implies that the distribution of $\A_{S,*}\B_{S,*}^\top$ is the
same for all subsets $S$ of a fixed size $k$.
\end{proofof}

\section{Proof of Theorem \ref{t:mse}}
\label{a:mse-proof}
In this section we use $Z_\mu^n$ to denote the normalization
constant that appears in \eqref{eq:cases} when computing an expectation for surrogate design
$S_\mu^n$.
We first prove Lemma \ref{l:sqinv-all}. 
\begin{lemma}[restated Lemma \ref{l:sqinv-all}]\label{l:sqinv-under}
If  $\Xb\sim S_\mu^n$ for $n<d$, then we have
\begin{align*}
    \E\big[\tr\big((\Xb^\top\Xb)^{\dagger}\big)\big]
    &={\gamma_n}\big(1- \det\!\big((\tfrac1{\gamma_n}\I+\Sigmab_\mu)^{-1}\Sigmab_\mu\big)\big).
\end{align*}
\end{lemma}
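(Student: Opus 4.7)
The plan is to apply the surrogate expectation formula \eqref{eq:cases} to $F(\X)=\tr((\X^\top\X)^\dagger)$, which gives $\E[\tr((\Xb^\top\Xb)^\dagger)] = \E[\det(\X\X^\top)\tr((\X^\top\X)^\dagger)]/\E[\det(\X\X^\top)]$ with $\X\sim\mu^K$ and $K\sim\Poisson(\gamma_n)_{\leq d}$. Since $K\leq d$ and $\mu$ is in general position, $\X\X^\top$ is almost surely invertible, so $\tr((\X^\top\X)^\dagger) = \tr((\X\X^\top)^{-1})$, and the numerator rewrites via the adjugate identity $\det(\A)\tr(\A^{-1}) = \tr(\adj(\A))$ as $\E[\tr(\adj(\X\X^\top))]$, a quantity whose structure is easier to handle.

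Next, I would expand the adjugate's trace along its diagonal minors as $\tr(\adj(\X\X^\top)) = \sum_{i=1}^K\det(\X_{-i,*}\X_{-i,*}^\top)$. By exchangeability of the rows, this has conditional expectation $K\cdot g(K-1)$, where $g(m) := \E_{\X'\sim\mu^m}[\det(\X'\X'^\top)]$. Averaging over $K\sim\Poisson(\gamma_n)_{\leq d}$ and re-indexing $j=k-1$ turns the numerator into $\gamma_n\sum_{j=0}^{d-1}\gamma_n^j g(j)/j!$, up to a common truncation factor that cancels in the ratio. The denominator is controlled by Lemma~\ref{l:normalization} applied with $\A=\B=\X$ and $K'\sim\Poisson(\gamma_n)$, which yields $\sum_{k=0}^d \gamma_n^k g(k)/k! = \det(\I + \gamma_n\Sigmab_\mu)$ (terms with $k>d$ vanish because $g(k)=0$ there).

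Assembling these pieces, the missing $j=d$ term in the numerator series is $\gamma_n^d g(d)/d!$, and $g(d) = d!\det(\Sigmab_\mu)$ follows from Lemma~\ref{l:cb} at $k=d$, where $\X'$ is square so $\det(\X'\X'^\top)=\det(\X')^2$. Hence the numerator equals $\gamma_n[\det(\I+\gamma_n\Sigmab_\mu) - \det(\gamma_n\Sigmab_\mu)]$, and dividing by the denominator yields $\gamma_n(1 - \det(\gamma_n\Sigmab_\mu)/\det(\I+\gamma_n\Sigmab_\mu))$, which simplifies to the claimed $\gamma_n(1 - \det((\tfrac1{\gamma_n}\I + \Sigmab_\mu)^{-1}\Sigmab_\mu))$. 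The main subtlety to watch for is that the truncation $K\leq d$ cannot be dropped in the numerator the way it can in the denominator: although $\det(\X\X^\top)$ vanishes at $K>d$, $\tr(\adj(\X\X^\top))$ does not, since the adjugate of a rank-$d$ matrix has rank one and nonzero diagonal minors. Excluding the $k=d+1$ term from the numerator Poisson series is precisely what produces the boundary correction $-\det(\gamma_n\Sigmab_\mu)$ responsible for the $1-\det(\cdot)$ structure in the final expression.
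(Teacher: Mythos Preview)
Your proposal is correct and follows essentially the same route as the paper: rewrite $\det(\X\X^\top)\tr((\X^\top\X)^\dagger)$ as $\tr(\adj(\X\X^\top))=\sum_i\det(\X_{-i}\X_{-i}^\top)$, use exchangeability to get $K\cdot g(K-1)$, re-index the Poisson sum, and identify the full series via Lemma~\ref{l:normalization} minus the boundary term $\gamma_n^d g(d)/d!=\det(\gamma_n\Sigmab_\mu)$ from Lemma~\ref{l:cb}. The only cosmetic difference is that you keep $K\sim\Poisson(\gamma_n)_{\leq d}$ and cancel the truncation constant in the ratio, whereas the paper passes to the unrestricted Poisson via \eqref{eq:cases} first; your closing remark about why the truncation cannot be dropped in the numerator (since $\tr(\adj(\X\X^\top))\neq 0$ at $K=d+1$) is a clarification the paper leaves implicit.
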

\begin{proof}
Let $\X\sim\mu^K$ for $K\sim\Poisson({\gamma_n})$. Note that if
$\det(\X\X^\top)>0$ then using the fact that
$\det(\A)\A^{-1}=\adj(\A)$ for any invertible matrix $\A$, we can write:
  \begin{align*}
    \det(\X\X^\top)\tr\big((\X^\top\X)^{\dagger}\big)
    &= \det(\X\X^\top)\tr\big((\X\X^\top)^{-1}\big) \\
    &= \tr(\adj(\X\X^\top)) \\[-1mm]
    &= \sum_{i=1}^K\det(\X_{-i}\X_{-i}^\top),
  \end{align*}
  where $\X_{-i}$ is a shorthand for $\X_{[K]\backslash\{i\},*}$.
Assumption \ref{a:general-position} ensures that
$\Pr\big\{\det(\X\X^\top)>0\big\}=1$, which allows us to write:
  \begin{align*}
Z_\mu^n\cdot \E\big[\tr\big((\Xb^\top\Xb)^{\dagger}\big)\big]
    &=\E\bigg[
    \sum_{i=1}^K\det(\X_{-i}\X_{-i}^\top)\ \big|\
    \det(\X\X^\top)>0\bigg]\cdot\overbrace{\Pr\big\{\det(\X\X^\top)>0\big\}}^{1}\\
    &=\sum_{k=0}^d\frac{\gamma_n^{k}\ee^{-\gamma_n}}{k!}\E\Big[
      \sum_{i=1}^k\det(\X_{-i}\X_{-i}^\top)\ \big|\  K=k\Big]\\
    &=\sum_{k=0}^d\frac{\gamma_n^{k}\ee^{-\gamma_n}}{k!}\, k\
      \E\big[\det(\X\X^\top)\mid K=k-1\big]\\
    &=\gamma_n\sum_{k=0}^{d-1}\frac{\gamma_n^{k}\ee^{-\gamma_n}}{k!}
      \E\big[\det(\X\X^\top)\mid K=k\big]\\
    &=\gamma_n\Big(\E\big[\det(\X\X^\top)\big]\ -\
      \frac{\gamma_n^{d}\ee^{-\gamma_n}}{d!}\E\big[\det(\X)^2\mid K=d\big]
      \Big) \\
    &\overset{(*)}{=}\gamma_n\big(\ee^{-\gamma_n}\det(\I +\gamma_n\Sigmab_\mu) -
      \ee^{-\gamma_n}\det(\gamma_n\Sigmab_\mu)\big),
  \end{align*}
  where $(*)$ uses Lemma \ref{l:normalization} for the first term and
  Lemma \ref{l:cb} for the second term. We obtain the desired result by
  dividing both sides by
  $Z_\mu^n=\ee^{-\gamma_n}\det(\I+\gamma_n\Sigmab_\mu)$.
\end{proof}
In the over-determined regime, a more general matrix expectation
formula can be shown (omitting the trace). The following result is
related to an expectation formula derived by
\cite{correcting-bias-journal}, however they use a slightly
different determinantal design so the results are incomparable.
\begin{lemma}\label{l:sqinv-over}
If $\Xb\sim S_\mu^n$ and $n>d$, then we
have
\begin{align*}
  \E\big[ (\Xb^\top\Xb)^{\dagger}\big] =
  \Sigmab_\mu^{-1}\cdot \frac{1-\ee^{-\gamma_n}}{\gamma_n}.
\end{align*}
\end{lemma}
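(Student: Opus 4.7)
The plan is to mirror the approach of Lemma~\ref{l:sqinv-all}, now in the over-determined regime, by combining the determinant-preserving machinery of Section~\ref{s:dp} with a careful accounting of the Poisson atom at $K = d - 1$. First I would use the template~\eqref{eq:cases} for $n > d$ to write
\begin{align*}
\E\big[(\Xb^\top\Xb)^\dagger\big] = \frac{\E\big[\det(\X^\top\X)(\X^\top\X)^\dagger\big]}{\E[\det(\X^\top\X)]},
\end{align*}
where $\X\sim\mu^K$ with $K\sim\Poisson(\gamma_n)$ (the unrestricted Poisson is fine because $\det(\X^\top\X)$ vanishes on $\{K<d\}$). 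On $\{K \geq d\}$ the matrix $\X^\top\X$ is invertible almost surely (general position applied to any $d$ of the rows), so $\det(\X^\top\X)(\X^\top\X)^\dagger = \adj(\X^\top\X)$, and the task reduces to computing $\E[\adj(\X^\top\X)\,\one_{[K \geq d]}]$ and dividing by $\E[\det(\X^\top\X)]$.

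The denominator follows from Lemma~\ref{l:poisson} applied to $\A = \B = \X$ (an i.i.d.\ sequence of identical joint pairs): $\E[\det(\X^\top\X)] = \det(\gamma_n\Sigmab_\mu) = \gamma_n^d\det(\Sigmab_\mu)$. For the numerator I would expand entrywise: the $(i,j)$ entry of $\adj(\X^\top\X)$ is $(-1)^{i+j}\det((\X_{*,[d]\setminus\{j\}})^\top \X_{*,[d]\setminus\{i\}})$, so conditioning on $K = k$ and invoking Lemma~\ref{l:cb} on each $(d-1)\times(d-1)$ cofactor gives
\begin{align*}
\E\big[\adj(\X^\top\X) \mid K = k\big] = k^{\underline{d-1}}\,\adj(\Sigmab_\mu) = k^{\underline{d-1}}\det(\Sigmab_\mu)\,\Sigmab_\mu^{-1}.
\end{align*}
Summing over $k \geq d$ against the Poisson weights, the substitution $m = k - d + 1$ collapses the factorials and yields $\sum_{k\geq d} \frac{\gamma_n^k \ee^{-\gamma_n}}{k!}\, k^{\underline{d-1}} = \gamma_n^{d-1}(1 - \ee^{-\gamma_n})$. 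Dividing by the denominator produces
\begin{align*}
\E\big[(\Xb^\top\Xb)^\dagger\big] = \frac{\gamma_n^{d-1}(1 - \ee^{-\gamma_n})\det(\Sigmab_\mu)\,\Sigmab_\mu^{-1}}{\gamma_n^d \det(\Sigmab_\mu)} = \Sigmab_\mu^{-1}\cdot\frac{1 - \ee^{-\gamma_n}}{\gamma_n},
\end{align*}
as claimed.

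The main delicate point is the separation of the $K = d-1$ contribution. Lemma~\ref{l:poisson} together with the entrywise identity~\eqref{eq:adj} gives the cleaner unrestricted formula $\E[\adj(\X^\top\X)] = \adj(\gamma_n\Sigmab_\mu) = \gamma_n^{d-1}\det(\Sigmab_\mu)\Sigmab_\mu^{-1}$, but this expectation includes the $K = d - 1$ atom, at which $\adj(\X^\top\X)$ is rank-one (not zero) even though $\det(\X^\top\X) = 0$; this atom must be excluded from the surrogate normalization. Excluding it costs exactly a factor of $\ee^{-\gamma_n}$, which is what upgrades the naive answer $\Sigmab_\mu^{-1}/\gamma_n$ (consistent with the $\gamma_n \to \infty$ limit) to the correct $(1-\ee^{-\gamma_n})/\gamma_n$ prefactor and also correctly recovers $\Sigmab_\mu^{-1}$ in the $\gamma_n \to 0^+$ limit, matching the $n = d$ case of Theorem~\ref{t:mse}.
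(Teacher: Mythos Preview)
Your proof is correct and follows the same overall strategy as the paper: reduce the numerator to an adjugate expectation and isolate the troublesome $K=d-1$ atom. The organizational difference is that the paper computes the \emph{unrestricted} expectation $\E[\adj(\X^\top\X)]=\adj(\gamma_n\Sigmab_\mu)$ in one shot via the determinant-preserving property (Lemma~\ref{l:poisson} together with~\eqref{eq:adj}) and then subtracts the $K=d-1$ contribution, invoking an external result (Lemma~2.3 of \cite{correcting-bias-journal}) to evaluate $\E[\adj(\X^\top\X)\mid K=d-1]$. You instead condition on each $K=k$, apply Lemma~\ref{l:cb} cofactor-by-cofactor to get $\E[\adj(\X^\top\X)\mid K=k]=k^{\underline{d-1}}\adj(\Sigmab_\mu)$, and sum the Poisson series only over $k\geq d$. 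Your route is slightly more elementary and fully self-contained within the paper's stated lemmas; the paper's route leans more directly on the d.p.\ abstraction but needs the outside citation for the correction term. Either way the $1-\ee^{-\gamma_n}$ factor emerges from excising exactly the $K=d-1$ atom, as you correctly identified.
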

\begin{proof}
Let $\X\sim\mu^K$ for $K\sim\Poisson(\gamma_n)$. Assumption
\ref{a:general-position} implies that for $K\neq d-1$ we have
\begin{align}
  \det(\X^\top\X)(\X^\top\X)^\dagger=\adj(\X^\top\X),\label{eq:adj-over}
  \end{align}
however when $k=d-1$ then \eqref{eq:adj-over} does not hold because
$\det(\X^\top\X)=0$ while $\adj(\X^\top\X)$ may be non-zero. It
follows that:
  \begin{align*}
Z_\mu^n\cdot
    \E\big[ (\Xb^\top\Xb)^{\dagger}\big]
    &=\E\big[\det(\X^\top\X)(\X^\top\X)^\dagger\big]\\
    &=\E\big[\adj(\X^\top\X)\big]-
\frac{\gamma_n^{d-1}\ee^{-\gamma_n}}{(d-1)!}
      \E\big[\adj(\X^\top\X)\mid K=d-1\big]\\
    &\overset{(*)}{=}\adj\!\big(\E[\X^\top\X]\big) -
      \frac{\gamma_n^{d-1}\ee^{-\gamma_n}}{(d-1)^{d-1}}
      \adj\!\big(\E[\X^\top\X\mid K=d-1]\big)\\
    &=\adj(\gamma_n\Sigmab_\mu) - \ee^{-\gamma_n}\adj(\gamma_n\Sigmab_\mu)\\
    &=\det(\gamma_n\Sigmab_\mu)\,(\gamma_n\Sigmab_\mu)^{-1}(1-\ee^{-\gamma_n})\\
    &=\det(\gamma_n\Sigmab_\mu)\,\Sigmab_\mu^{-1}\cdot\frac{1-\ee^{-\gamma_n}}{\gamma_n},
  \end{align*}
  where the first term in $(*)$ follows from Lemma
  \ref{l:normalization} and \eqref{eq:adj}, whereas the second term comes
  from Lemma 2.3 of \cite{correcting-bias-journal}.
Dividing both sides by $Z_\mu^n=\det(\gamma_n\Sigmab_\mu)$ completes the proof.
\end{proof}

Applying the closed form expressions from Lemmas
\ref{l:proj}, \ref{l:sqinv-all} and \ref{l:sqinv-over}, we derive
the formula for the MSE and prove Theorem \ref{t:mse} (we defer the
proof of Lemma \ref{l:proj} to Appendix \ref{s:unbiased-proof}).
\begin{proofof}{Theorem}{\ref{t:mse}}
  First, assume that $n<d$, in which case we have
  $\gamma_n=\frac1{\lambda_n}$ and moreover
  \begin{align*}
    n &= \tr\big(\Sigmab_\mu(\Sigmab_\mu+\lambda_n\I)^{-1}\big)\\
      &=\tr\big((\Sigmab_\mu+\lambda_n\I-\lambda_n\I)(\Sigmab_\mu+\lambda_n\I)^{-1}\big)\\
    &=d - \lambda_n\tr\big((\Sigmab_\mu+\lambda_n\I)^{-1}\big),
    %\frac{\tr\big((\Sigmab_\mu+\lambda_n\I)^{-1}\big)}{d-n},
  \end{align*}
so we can write $\lambda_n$ as $(d-n)/\tr((\Sigmab_\mu+\lambda_n\I)^{-1})$.
  From this and Lemmas \ref{l:proj} and \ref{l:sqinv-under}, we
obtain the desired expression, where recall
  that $\alpha_n = \det\!\big(\Sigmab_\mu (\Sigmab_\mu+\frac1{\gamma_n})^{-1}\big)$:
  \begin{align*}
    \MSE{\Xb^\dagger\ybb} &= \sigma^2\,\gamma_n(1-\alpha_n) +
    \tfrac1{\gamma_n} \,\w^{*\top}(\Sigmab_\mu+\tfrac1{\gamma_n}\I)^{-1}\w^*
    \\
    &\overset{(a)}{=}\sigma^2\,\frac{1-\alpha_n}{\lambda_n} +
    \lambda_n\,\w^{*\top}(\Sigmab_\mu+\lambda_n\I)^{-1}\w^*\\
    &\overset{(b)}{=}\sigma^2\tr\big((\Sigmab_\mu+\lambda_n\I)^{-1}\big)\frac{1-\alpha_n}{d-n}
      +
      (d-n)\frac{\w^{*\top}(\Sigmab_\mu+\lambda_n\I)^{-1}\w^*}
      {\tr\big((\Sigmab_\mu+\lambda_n\I)^{-1}\big)}.
  \end{align*}
  While the expression given after $(a)$ is simpler than the one
after $(b)$, the latter better illustrates how the MSE depends on
the sample size $n$ and the dimension $d$.
  Now, assume that $n>d$. In this case, we have $\gamma_n=n-d$ and apply Lemma
  \ref{l:sqinv-over}:
  \begin{align*}
    \MSE{\Xb^\dagger\ybb}
    &= \sigma^2\,\tr(\Sigmab_\mu^{-1})\,
\frac{1-\ee^{-\gamma_n}}{\gamma_n}
=\sigma^2\,\tr(\Sigmab_\mu^{-1})
\,\frac{1-\beta_n}{n-d}.
  \end{align*}
The case of $n=d$ was shown in Theorem~2.12 of \cite{correcting-bias-journal}.
This concludes the proof.
\end{proofof}

\section{Proof of Theorem \ref{t:unbiased}}
\label{s:unbiased-proof}

As in the previous section, we use $Z_\mu^n$ to denote the normalization
constant that appears in \eqref{eq:cases} when computing an expectation
for surrogate design $S_\mu^n$.
Recall that our goal is to compute the expected value of
$\Xb^\dagger\ybb$ under the surrogate design $S_\mu^n$. Similarly as for Theorem
\ref{t:mse}, the case of $n=d$ was shown in Theorem 2.10 of
\cite{correcting-bias-journal}. We break the rest down into the
under-determined case $(n<d)$ and the over-determined case ($n>d$),
starting with the former. Recall that we do \emph{not} require any
modeling assumptions on the responses.
\begin{lemma}\label{l:ridge-under}
If $\Xb\sim S_\mu^n$ and $n<d$, then for any $y(\cdot)$
such that $\E_{\mu,y}[y(\x)\,\x]$ is well-defined,
denoting $\yb_i$ as $y(\xbb_i)$, we have
\begin{align*}
  \E\big[\Xb^\dagger \ybb\big]
  &=
    \big(\Sigmab_\mu+\tfrac1{\gamma_n}\I\big)^{-1}\E_{\mu,y}[y(\x)\,\x].
\end{align*}
\end{lemma}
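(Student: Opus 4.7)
The plan is to apply the general template in equation \eqref{eq:cases}, which for $n<d$ gives $Z_\mu^n\cdot\E[\Xb^\dagger\ybb] = \E[\det(\X\X^\top)\X^\dagger\y]$ with $\X\sim\mu^K$, $K\sim\Poisson(\gamma_n)$, and $y_i=y(\x_i)$. Using $\det(A)\,A^{-1}=\adj(A)$ in the invertible regime $K\leq d$ (ensured almost surely by Assumption~\ref{a:general-position}) and observing that $\X^\top\adj(\X\X^\top)$ vanishes almost surely when $K>d$ (either because $\adj(\X\X^\top)=0$ for $K\geq d+2$, or because for $K=d+1$ the rank-one symmetric matrix $\adj(\X\X^\top)$ is supported on the nullspace of $\X\X^\top$, which $\X^\top$ annihilates), we obtain the equivalent form $Z_\mu^n\cdot\E[\Xb^\dagger\ybb] = \E[\X^\top\adj(\X\X^\top)\y]$ on the full Poisson support. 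This is the quantity we must evaluate.

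The key step is to convert this vector-valued expression into a difference of scalar determinants to which Lemma~\ref{l:normalization} can be applied. Testing against an arbitrary $\v\in\R^d$ and invoking the rank-one adjugate identity $\u^\top\adj(A)\w=\det(A+\w\u^\top)-\det(A)$ with $A=\X\X^\top$, $\u=\X\v$, $\w=\y$, one checks that
\begin{align*}
\v^\top\X^\top\adj(\X\X^\top)\y
 = \det\!\big(\X\X^\top+\y\v^\top\X^\top\big) - \det(\X\X^\top)
 = \det(\tilde\X\,\X^\top) - \det(\X\X^\top),
\end{align*}
where the auxiliary matrix $\tilde\X=\X+\y\v^\top$ has $i$th row $\x_i^\top + y(\x_i)\,\v^\top$. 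The crucial structural observation is that the rows of $(\tilde\X,\X)$ form an i.i.d.\ sequence of joint pairs, exactly the hypothesis Lemma~\ref{l:normalization} requires.

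Applying that lemma to both determinants, together with Wald's identity giving $\E[\X^\top\X]=\gamma_n\Sigmab_\mu$ and $\E[\X^\top\tilde\X]=\gamma_n(\Sigmab_\mu+\v_{\mu,y}\v^\top)$, yields $\E[\det(\X\X^\top)] = \ee^{-\gamma_n}\det(\I+\gamma_n\Sigmab_\mu) = Z_\mu^n$ and $\E[\det(\tilde\X\X^\top)] = \ee^{-\gamma_n}\det(\I+\gamma_n\Sigmab_\mu+\gamma_n\v_{\mu,y}\v^\top)$. The matrix determinant lemma then collapses the difference to $Z_\mu^n\cdot\gamma_n\,\v^\top(\I+\gamma_n\Sigmab_\mu)^{-1}\v_{\mu,y}$, so dividing through by $Z_\mu^n$ and using $\gamma_n(\I+\gamma_n\Sigmab_\mu)^{-1} = (\Sigmab_\mu+\tfrac1{\gamma_n}\I)^{-1}$ gives $\v^\top\E[\Xb^\dagger\ybb] = \v^\top(\Sigmab_\mu+\tfrac1{\gamma_n}\I)^{-1}\v_{\mu,y}$ for every $\v$, which is the claim.

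The main obstacle is identifying the right rank-one perturbation: we need one that repackages $\v^\top\X^\top\adj(\X\X^\top)\y$ as a determinant of a product $\tilde\X\X^\top$ whose two factors share an i.i.d.\ joint-row structure, so that Lemma~\ref{l:normalization} can be applied verbatim. Choosing $\tilde\X=\X+\y\v^\top$ does precisely this by pushing the test vector $\v$ and the response $\y$ into a single additive row perturbation. Once this reduction is in place, the Poisson-averaging step and the final contraction via the matrix determinant lemma are mechanical.
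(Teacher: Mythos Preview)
Your proof is correct and follows the same core strategy as the paper: express $\det(\X\X^\top)\,\X^\dagger\y$ as a difference of determinants of products of i.i.d.-row matrices, then invoke Lemma~\ref{l:normalization} and a rank-one determinant identity. The packaging differs slightly. The paper works coordinate-wise with $\v=\e_j$ and rewrites $\X\X^\top+\y\f_j^\top$ as $[\X,\y][\X,\f_j]^\top$ with $K\times(d{+}1)$ factors, so that Lemma~\ref{l:normalization} yields a $(d{+}1)\times(d{+}1)$ determinant which is then reduced via a block/Schur identity. You instead keep the factors $K\times d$ by absorbing the test vector into a row perturbation $\tilde\X=\X+\y\v^\top$, so Lemma~\ref{l:normalization} gives a $d\times d$ determinant directly and only the ordinary matrix determinant lemma is needed at the end. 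Your handling of the full Poisson support via the adjugate (including the $K=d{+}1$ nullspace argument) is also correct; the paper achieves the same effect by observing that both determinants vanish whenever $\det(\X\X^\top)=0$.
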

\begin{proof}
   Let $\X\sim\mu^K$ for $K\sim\Poisson(\gamma_n)$ and denote
   $y(\x_i)$ as $y_i$.
  Note that when $\det(\X\X^\top)>0$, then
  the $j$th entry of $\X^\dagger\y$ equals
  $\f_j^\top(\X\X^\top)^{-1}\y$, where $\f_j$ is the $j$th
  column of $\X$, so:
\begin{align*}
  \det(\X\X^\top)\,(\X^\dagger\y)_j
  &= \det(\X\X^\top)\, \f_j^\top(\X\X^\top)^{-1}\y \\
  &=
  \det(\X\X^\top+\y\f_j^\top) - \det(\X\X^\top).
\end{align*}
If $\det(\X\X^\top)=0$, then also
$\det(\X\X^\top+\y\f_j^\top)=0$, so we can write:
\begin{align*}
Z_\mu^n\cdot\E\big[(\Xb^\dagger\ybb)_j\big]
  &=  \E\big[\det(\X\X^\top)(\X^\dagger\y)_j\big] \\
  &= \E\big[\det(\X\X^\top+\y\f_j^\top)-\det(\X\X^\top)\big]  \\
  &=\E\big[\det\!\big([\X,\y][\X,\f_j]^\top\big)\big] - \E\big[\det(\X\X^\top)\big]\\
  &\overset{(a)}{=}\ee^{-\gamma_n}\det\!\bigg(\I +
    \gamma_n\,\E_{\mu,y}\bigg[\begin{pmatrix}\x\x^\top&
      \!\!\x\, y(\x)\\ x_j\,\x^\top&\!\!
      x_j\,y(\x)\end{pmatrix}\bigg]\bigg)
                          -\ee^{-\gamma_n}\det(\I+\gamma_n\Sigmab_\mu)\\
  &\overset{(b)}{=}\ee^{-\gamma_n}\det(\I+\gamma_n\Sigmab_\mu) \\
    &\qquad \times \Big(\E_{\mu,y}\big[\gamma_n
    x_j\,y(\x)\big] - \E_{\mu}\big[\gamma_n
    x_j\,\x^\top\big](\I+\gamma_n\Sigmab_\mu)^{-1}\E_{\mu,y}\big[\gamma_n\x\,
    y(\x)\big]\Big),
\end{align*}
where $(a)$ uses Lemma \ref{l:normalization} twice, with the first
application involving two different matrices $\A=[\X,\y]$ and
$\B=[\X,\f_j]$, whereas $(b)$ is a standard determinantal identity
\cite[see Fact 2.14.2 in][]{matrix-mathematics}.
  Dividing both sides by $Z_\mu^n$ and letting $\v_{\mu,y}=\E_{\mu,y}[y(\x)\,\x]$, we obtain that:
  \begin{align*}
    \E\big[\Xb^\dagger\ybb\big]
    &= \gamma_n\v_{\mu,y} - \gamma_n^2\Sigmab_\mu(\I+\gamma_n\Sigmab_\mu)^{-1}\v_{\mu,y}\\
&=\gamma_n\big(\I - \gamma_n\Sigmab_\mu (\I+\gamma_n\Sigmab_\mu)^{-1}\big)\v_{\mu,y}
=\gamma_n(\I+\gamma_n\Sigmab_\mu)^{-1}\v_{\mu,y},
  \end{align*}
  which completes the proof.
\end{proof}
We return to Lemma \ref{l:proj}, regarding the expected orthogonal
projection onto the complement of the row-span of $\Xb$, i.e.,
$\E[\I-\Xb^\dagger\Xb]$, which follows as a corollary of Lemma~\ref{l:ridge-under}.
\begin{proofof}{Lemma}{\ref{l:proj}}
  We let $y(\x)=x_j$ where $j\in[d]$ and apply Lemma
  \ref{l:ridge-under} for each $j$, obtaining:
  \begin{align*}
    \I - \E\big[\Xb^\dagger\Xb] = \I -
    (\Sigmab_\mu+\tfrac1{\gamma_n}\I)^{-1}\Sigmab_\mu,
  \end{align*}
  from which the result follows by simple algebraic manipulation.
\end{proofof}

We move on to the over-determined case, where the ridge regularization
of adding the identity to $\Sigmab_\mu$ vanishes. Recall that we
assume throughout the paper that $\Sigmab_\mu$ is invertible.
\begin{lemma}\label{l:ridge-over}
  If $\Xb\sim S_\mu^n$ and $n>d$, then for any real-valued random function $y(\cdot)$
  such that $\E_{\mu,y}[y(\x)\,\x]$ is well-defined,
denoting $\yb_i$ as $y(\xbb_i)$, we have
 \begin{align*}
  \E\big[\Xb^\dagger \ybb\big]
  &=\Sigmab_\mu^{-1}\E_{\mu,y}\big[y(\x)\,\x\big].
\end{align*}
\end{lemma}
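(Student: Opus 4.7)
The plan is to mirror the under-determined proof of Lemma \ref{l:ridge-under}, swapping $\det(\X\X^\top)$ for $\det(\X^\top\X)$ in the normalization template from \eqref{eq:cases}. With $\X\sim\mu^K$, $K\sim\Poisson(\gamma_n)$, $\gamma_n = n-d$, the first step is to write
\[
\E\big[(\Xb^\dagger\ybb)_j\big] \;=\; \frac{\E\big[\det(\X^\top\X)\,(\X^\dagger\y)_j\big]}{\E\big[\det(\X^\top\X)\big]},
\]
and then handle numerator and denominator via a suitable rank-one determinant identity.

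The key algebraic step is to apply the rank-one adjugate identity $\v^\top\!\adj(\A)\u = \det(\A + \u\v^\top) - \det(\A)$ with $\A = \X^\top\X$, $\u = \X^\top\y$, $\v = \e_j$, which gives (whenever $\X^\top\X$ is invertible)
\[
\det(\X^\top\X)(\X^\dagger\y)_j \;=\; \det(\X^\top\widetilde{\X}_j) - \det(\X^\top\X), \qquad \widetilde{\X}_j := \X + \y\e_j^\top.
\]
Here $\widetilde{\X}_j$ is $\X$ with $\y$ added to its $j$-th column, so the pair of rows $(\x_i,\, \x_i + y_i\e_j)$ forms an i.i.d.~sequence of joint random vectors. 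I will verify that this identity extends to the degenerate case $K<d$: the left-hand side vanishes because $\det(\X^\top\X)=0$, while the right-hand side vanishes because $\rank(\X^\top\widetilde{\X}_j) \le K < d$ forces $\det(\X^\top\widetilde{\X}_j)=0$ as well.

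With this identity in hand, Lemma \ref{l:poisson} applied to the pairs $(\X,\X)$ and $(\X,\widetilde{\X}_j)$ yields
\[
\E\big[\det(\X^\top\X)\big] = \det(\gamma_n\Sigmab_\mu), \qquad \E\big[\det(\X^\top\widetilde{\X}_j)\big] = \det\!\big(\gamma_n\Sigmab_\mu + \gamma_n\v_{\mu,y}\e_j^\top\big),
\]
where $\v_{\mu,y} = \E_{\mu,y}[y(\x)\,\x]$. A final application of the matrix determinant lemma reduces the difference of these determinants to $\det(\gamma_n\Sigmab_\mu)\,\e_j^\top\Sigmab_\mu^{-1}\v_{\mu,y}$, and dividing by $\det(\gamma_n\Sigmab_\mu)$ yields $\E[(\Xb^\dagger\ybb)_j] = \e_j^\top\Sigmab_\mu^{-1}\v_{\mu,y}$, as desired. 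The main obstacle is the careful bookkeeping for the regime $K<d$, where $\X^\dagger$ takes the alternative form $\X^\top(\X\X^\top)^{-1}$; the rank argument above ensures both sides of the key identity vanish there, so no separate correction term is needed. Note that, in contrast to the under-determined case, Lemma \ref{l:poisson} supplies clean identities here without the $\ee^{-\gamma_n}$ factor that arose from Lemma \ref{l:normalization}, which is precisely why the implicit regularization $\lambda_n = 1/\gamma_n$ disappears and only $\Sigmab_\mu^{-1}$ remains in the answer.
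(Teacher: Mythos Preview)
Your proposal is correct and follows essentially the same route as the paper's proof: the same rank-one determinant identity with $\widetilde{\X}_j=\X+\y\e_j^\top$, the same handling of the degenerate regime $K<d$ via a rank bound, the same two applications of Lemma~\ref{l:poisson}, and the same matrix determinant lemma to finish. The only cosmetic difference is that the paper writes the intermediate expression as $\det(\X^\top\X+\X^\top\y\e_j^\top)$ before factoring it as $\det(\X^\top\widetilde{\X}_j)$, whereas you go there directly.
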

\begin{proof}
   Let $\X\sim\mu^K$ for $K\sim\Poisson(\gamma_n)$ and denote
   $y_i=y(\x_i)$. Similarly as in the proof of
   Lemma~\ref{l:ridge-under}, we note that when $\det(\X^\top\X)>0$,
   then
  the $j$th entry of $\X^\dagger\y$ equals
  $\e_j^\top(\X^\top\X)^{-1}\X^\top\y$, where $\e_j$ is the $j$th
standard basis vector, so:
\begin{align*}
  \det(\X^\top\X)\,(\X^\dagger\y)_j =
  \det(\X^\top\X)\, \e_j^\top(\X^\top\X)^{-1}\X^\top\y =
  \det(\X^\top\X+\X^\top\y\e_j^\top) - \det(\X^\top\X).
\end{align*}
If $\det(\X^\top\X)=0$, then also
$\det(\X^\top\X+\X^\top\y\e_j^\top)=0$. We proceed to compute the
expectation:
\begin{align*}
Z_\mu^n\cdot\E\big[(\Xb^\dagger\ybb)_j\big]
  &=  \E\big[\det(\X^\top\X)(\X^\dagger\y)_j\big] \\
  &= \E\big[\det(\X^\top\X+\X^\top\y\e_j^\top)-\det(\X^\top\X)\big]  \\
  &=\E\big[\det\!\big(\X^\top(\X+\y\e_j^\top)\big)\big] - \E\big[\det(\X^\top\X)\big]\\
  &\overset{(*)}{=}\det\!\Big(
    \gamma_n\,\E_{\mu,y}\big[\x(\x+ y(\x)\e_j)^\top\big]\Big)
    -\det(\gamma_n\Sigmab_\mu)\\
  &=\det\!\big(\gamma_n\Sigmab_\mu + \gamma_n\E_{\mu,y}[\x\,y(\x)]\e_j^\top\big)
    -\det(\gamma_n\Sigmab_\mu)\\
  &=\det(\gamma_n\Sigmab_\mu)\cdot
    \gamma_n\e_j^\top(\gamma_n\Sigmab_\mu)^{-1}\E_{\mu,y}\big[y(\x)\,\x\big],
\end{align*}
where $(*)$ uses Lemma \ref{l:poisson} twice (the first time, with
$\A=\X$ and $\B=\X+\y\e_j^\top$). Dividing both sides by
$Z_\mu^n=\det(\gamma_n\Sigmab_\mu)$ concludes the proof.
\end{proof}

\noindent
We combine Lemmas \ref{l:ridge-under} and \ref{l:ridge-over} to obtain
the proof of Theorem \ref{t:unbiased}.
\begin{proofof}{Theorem}{\ref{t:unbiased}}
The case of $n=d$ follows directly from Theorem~2.10 of
\cite{correcting-bias}.
Assume that $n<d$. Then we have
$\gamma_n=\frac1{\lambda_n}$, so the result follows
from Lemma \ref{l:ridge-under}.
If $n>d$, then the result follows from Lemma \ref{l:ridge-over}.
\end{proofof}

\section{Proof of Theorem \ref{t:asymptotic}}
\label{sec:proof-of-t-asymptotic}

The proof of Theorem \ref{t:asymptotic} follows the standard decomposition of
MSE in Equation~\ref{eq:mse-derivation}, and in the process,
establishes consistency of the variance and bias terms
independently. To this end, we introduce the following two useful
lemmas that capture the limiting behavior of the 
variance and bias terms, respectively.

\begin{lemma}\label{c:wishart}
  Under the setting of Theorem~\ref{t:asymptotic}, we have, as $n,d \to \infty$
with $n/d \to \bar c \in (0, \infty) \setminus \{ 1 \}$ that
\begin{align}
\begin{cases}
  \E\big[\tr((\X^\top \X)^\dagger)\big] - (1-\alpha_n) \lambda_n^{-1} \to 0, & \text{for }\bar c < 1, \\
  \E\big[\tr((\X^\top \X)^\dagger)\big] - \frac{1-\beta_n}{n - d} \cdot \tr \Sigmab^{-1} \to 0, & \text{for }\bar c > 1
\end{cases}
\label{eq:wishart}
\end{align}
where $\lambda_n\geq 0$ is the unique solution to
$n=\tr(\Sigmab(\Sigmab+\lambda_n\I)^{-1})$,
$\alpha_n=\det(\Sigmab(\Sigmab+\lambda_n\I)^{-1})$,
and $\beta_n = e^{d-n}$.
\end{lemma}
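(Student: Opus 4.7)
The plan is to derive a deterministic equivalent for $\E[\tr((\X^\top\X)^\dagger)]$ in each regime and verify that it agrees, up to $o(1)$, with the surrogate expressions appearing on the right-hand sides. The unifying structural observation is that $\lambda_n$, defined through $n = \tr(\Sigmab(\Sigmab + \lambda_n \I)^{-1})$, is precisely the self-consistent parameter that arises in the anisotropic Marchenko--Pastur (MP) law for the sample covariance matrix at the ridge-less limit. Matching this fixed-point characterization with the surrogate formula is what unifies both cases.

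In the over-determined case ($\bar c > 1$), I would first dispatch Gaussian $\Z$ using the inverse-Wishart moment identity $\E[(\X^\top\X)^{-1}] = \Sigmab^{-1}/(n-d-1)$. A direct computation then yields
\[
\frac{\tr(\Sigmab^{-1})}{n-d-1} - \frac{(1-\beta_n)\tr(\Sigmab^{-1})}{n-d} = \tr(\Sigmab^{-1})\!\left[\frac{1}{(n-d-1)(n-d)} + \frac{e^{d-n}}{n-d}\right] = O(1/d),
\]
which goes to zero since $\tr(\Sigmab^{-1}) = O(d)$ by $\Sigmab \succeq c\I$. For general sub-Gaussian entries I would appeal to the anisotropic local MP law (in the style of Knowles--Yin), which provides expectation convergence of resolvent traces under our bounded-condition-number and sub-Gaussian tail assumptions.

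In the under-determined case ($\bar c < 1$), I would use the identity $\tr((\X^\top\X)^\dagger) = \tr((\X\X^\top)^{-1})$ together with the push-through formula $\tr((\X\X^\top + \eta\I_n)^{-1}) = \tr((\X^\top\X + \eta\I_d)^{-1}) - (d-n)/\eta$ and pass to the ridge-less limit $\eta \downarrow 0$. Invoking the anisotropic deterministic equivalent of $(\X^\top\X + \eta\I_d)^{-1}$ produces a self-consistent equation whose unique positive solution at $\eta = 0^+$ is $\lambda_n$, yielding the limit $1/\lambda_n$. Finally, since $\lambda_n$ is bounded below by a constant depending only on $c$, $C$, and $\bar c$, the quantity $\alpha_n = \det(\Sigmab(\Sigmab+\lambda_n\I)^{-1}) \leq (C/(C+\lambda_n))^d$ is exponentially small in $d$, so $(1-\alpha_n)/\lambda_n$ and $1/\lambda_n$ differ by $O(e^{-\Omega(d)})$.

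The principal obstacle is the under-determined case for non-Gaussian $\Z$: no clean analog of the inverse-Wishart identity is available for $\Z\Sigmab\Z^\top$, so one must either invoke the full anisotropic local law (which simultaneously provides almost-sure convergence, expectation convergence, and the uniform integrability needed to pass between them) or carry out a direct Schur-complement / leave-one-out analysis of the resolvent to establish a Silverstein-type fixed-point equation whose $\eta \downarrow 0$ limit recovers $n = \tr(\Sigmab(\Sigmab+\lambda_n\I)^{-1})$.
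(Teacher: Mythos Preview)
Your proposal is correct in its essential structure and very close to the paper's own argument: both recognize that $\lambda_n$ is exactly the Marchenko--Pastur self-consistent parameter at zero regularization, both dispose of $\alpha_n$ and $\beta_n$ as exponentially small corrections, and both reduce the problem to resolvent asymptotics for $\X\X^\top$ (under-determined) or $\X^\top\X$ (over-determined).

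The paper differs from your plan in two respects worth noting. First, it does not split off the Gaussian case: rather than invoking the inverse-Wishart identity for $\bar c>1$ and then upgrading to sub-Gaussian via a local law, it treats all sub-Gaussian $\Z$ uniformly by citing the classical global Stieltjes-transform results of Silverstein (for $\bar c<1$, working directly with the $n\times n$ resolvent of $\X\X^\top/n$) and Bai (for $\bar c>1$, working with the co-resolvent of $\X^\top\X/n$). Your Gaussian shortcut is pleasant and yields an explicit $O(1/d)$ rate, but it is not needed for the statement as written. Second---and this is the substantive point---the paper is explicit about the step you correctly flag as the ``principal obstacle,'' namely passing from the resolvent at $z>0$ to the ridge-less limit $z\downarrow 0$. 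Rather than invoking the full anisotropic local law of Knowles--Yin or carrying out a leave-one-out analysis, the paper takes a lighter route: it bounds the smallest eigenvalue of $\X\X^\top/n$ away from zero almost surely via Bai--Yin (this is where $\bar c\neq 1$ is used), deduces that $\delta(z)=\frac1n\tr\big(\frac1n\X\X^\top+z\I_n\big)^{-1}$ and its derivative are uniformly bounded in $z>0$, applies Arzel\`a--Ascoli to upgrade pointwise to uniform convergence, and then exchanges the $z\to 0^+$ and $n,d\to\infty$ limits via the Moore--Osgood theorem. This is more elementary than the local-law machinery you propose and resolves the limit interchange without a separate uniform-integrability argument. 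Your push-through identity is correct but becomes unnecessary once one works directly with the $n\times n$ resolvent as the paper does.
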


\noindent

The second term in the MSE derivation \eqref{eq:mse-derivation}, $\E[\I -
\X^\dag \X]$, involves the expectation of a projection onto the orthogonal
complement of a sub-Gaussian general position sample $\X$.
This term is zero when $n > d$, and for $n < d$
we prove in \cref{sec:proof-of-c-projection} that the surrogate design's
bias $\mathcal{B}(\Sigmab, n)$ provides an asymptotically consistent approximation
to all of the eigenvectors and eigenvalues:

\begin{lemma}\label{c:projection}
Under the setting of Theorem~\ref{t:asymptotic}, for $\w \in \R^d$ of bounded Euclidean norm (i.e., $\| \w \| \le C'$ for all $d$), we have, as $n,d \to \infty$ with $n/d \to \bar c \in (0,1)$ that
\begin{align}
  \w^\top\E[\I-\X^\dagger\X]\w - \lambda_n \w^\top (\Sigmab + \lambda_n \I)^{-1}\w \to 0
  \label{eq:projection}
\end{align}
while $\I - \X^\dagger \X = 0$ for $\bar c > 1$.
\end{lemma}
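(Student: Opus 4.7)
The plan is to handle the two regimes separately. For $\bar c > 1$, sub-Gaussian concentration of $\X^\top\X/n$ around $\Sigmab$ combined with $\Sigmab \succeq c \I$ forces $\X$ to have full column rank almost surely for all sufficiently large $n,d$, so $\X^\dagger\X = \I$ and both sides of \eqref{eq:projection} vanish identically. The substantive case is $\bar c < 1$, where the row span of $\X$ has growing codimension $d - n$, and one must establish a genuine asymptotic identity for the expected projection onto its complement, matching what Lemma~\ref{l:proj} produces exactly for the surrogate design.

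For $\bar c < 1$, I would begin from the resolvent representation
\begin{align*}
  \I - \X^\dagger\X \ =\ \lim_{z \downarrow 0}\, z(\X^\top\X + z\I)^{-1},
\end{align*}
which holds almost surely whenever $\X$ has full row rank. Taking the bilinear form against $\w$ and exchanging limit with expectation (justified by Bai--Yin type control on the smallest nontrivial singular value of $\X = \Z\Sigmab^{1/2}$ using $\Sigmab \succeq c\I$), the task reduces to evaluating
\begin{align*}
  \lim_{z \downarrow 0}\, z\,\w^\top \E\big[(\X^\top\X + z\I)^{-1}\big]\w.
\end{align*}
The inner expectation is a bilinear form of an anisotropic sample covariance resolvent, and the appropriate tool is the anisotropic Marchenko--Pastur deterministic equivalent for sub-Gaussian $\Z$, which gives $\E[(\X^\top\X + z\I)^{-1}] \approx (c(z)\Sigmab + z\I)^{-1}/c(z)$ for a deterministic scalar $c(z)$ characterized by a self-consistent fixed-point equation involving $\tr(\Sigmab(c(z)\Sigmab + z\I)^{-1})$.

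The final step is to identify the $z \downarrow 0$ limit of the deterministic equivalent with the surrogate expression $\lambda_n(\Sigmab + \lambda_n\I)^{-1}$. Under the reparameterization $\lambda_n \leftrightarrow z/c(z)|_{z \to 0^+}$, multiplying through by $z$ and passing to the limit collapses the Marchenko--Pastur fixed-point equation to precisely the defining relation $n = \tr(\Sigmab(\Sigmab + \lambda_n\I)^{-1})$ used in Theorem~\ref{t:mse}, so the two sides of \eqref{eq:projection} agree in the limit. Concentration of the bilinear form about its mean for each $\w$ with $\|\w\| \le C'$ is then supplied by a Hanson--Wright inequality applied to the sub-Gaussian rows of $\Z$. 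The principal obstacle is the passage $z \downarrow 0$: standard anisotropic Marchenko--Pastur laws are cleanest on compacta away from the spectral edge, so the argument depends crucially on the bounded condition number $C\I \succeq \Sigmab \succeq c\I$, together with Bai--Yin, to guarantee that $\lambda_{\min}(\X^\top\X)$ restricted to the row span stays bounded below on the scale of $n$, licensing both the interchange of limit and expectation and the smoothness of $c(z)$ at $z = 0^+$.
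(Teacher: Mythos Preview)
Your proposal is correct and follows essentially the same route as the paper: a resolvent representation of $\I-\X^\dagger\X$ via $z(\X^\top\X+z\I)^{-1}$ as $z\downarrow 0$, an anisotropic Marchenko--Pastur deterministic equivalent for the bilinear form (the paper cites \cite{hachem2013bilinear}), Bai--Yin control on the smallest nontrivial singular value to justify the passage $z\to 0^+$, and identification of the self-consistent fixed-point equation with the defining relation for $\lambda_n$. The only notable difference is that the paper is explicit about the double-limit exchange, invoking Arzel\`a--Ascoli for uniform convergence and then Moore--Osgood to swap $\lim_{z\to 0^+}$ with $\lim_{n,d\to\infty}$, whereas you fold this into a single remark about Bai--Yin licensing the interchange; your Hanson--Wright step is also unnecessary since the cited bilinear-form result already delivers almost sure convergence.
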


\subsection{Proof of \cref{c:wishart}}
\label{sec:proof-of-c-wishart}

\subsubsection{The $\bar c \in (0,1)$ case}

For $n < d$, we first establish (1) $\liminf_n \lambda_n > 0$ and (2) $\alpha_n \to 0$.
To prove (1), by hypothesis $\Sigmab \succeq c \I$ for all $d$. Since $\frac{n}{d} < 1$,
we have (by definition of $\lambda_n$) for some $\delta > 0$
\begin{align*}
  1 - \delta
  > \frac{n}{d}
  = \frac{1}{d} \tr(\Sigmab(\Sigmab + \lambda_n \I)^{-1})
  > \frac{c}{c + \lambda_n}
\end{align*}
Rearranging, we have $\lambda_n > \frac{\delta c}{1 - \delta} > 0$.
For (2), let $(\tau_i)_{i \in [d]}$ denote the eigenvalues of $\Sigmab$.
Since $1 - x \leq e^{-x}$ and $C \I \succeq \Sigmab \succeq c \I$ for all $d$,
\begin{align*}
  \alpha_n
  = \prod_{i=1}^d \frac{\tau_i}{\tau_i + \lambda_n}
  \leq \left(\frac{C}{C + \lambda_n}\right)^d
  = \left( 1 - \frac{\lambda_n}{C + \lambda_n}\right)^d
  \leq \exp\left(-d \frac{\lambda_n}{C + \lambda_n}\right)
\end{align*}
and since $\lambda_n > 0$ eventually as $d \to \infty$ we have
$\alpha_n \to 0$ so that $(1-\alpha_n) \lambda_n^{-1} - \lambda_n^{-1} \to 0$.

As a consequence of (2) and Slutsky's theorem, it suffices to show
$\tr(\X^\top \X)^\dag - \lambda_n^{-1} \overset{d}{\to} 0$ as $n,d \to \infty$.
To do this, we consider the limiting behavior of $\tr
(\X^\top \X)^\dagger/n = \tr (\X \X^\top)^\dagger/n$ as $n/d \to \bar c \in
(0,1)$, for $\X = \Z \Sigmab^{\frac12}$ with $\Z \in \mathbb R^{n \times d}$
having i.i.d.~zero mean, unit variance sub-Gaussian entries, i.e., the behavior
of
\begin{equation}\label{eq:limit1}
  \lim_{n,d \to \infty} \lim_{z \to 0^+} \frac1n \tr \left( \frac1n \X \X^\top + z \I_n \right)^{-1}
\end{equation}
by definition of the pseudo-inverse.

The proof comes in three steps: (i) for fixed $z > 0$, consider the limiting behavior of $ \delta(z) \equiv \tr (\X \X^\top/n + z \I_n)^{-1}/n$ as $n,d \to \infty$ and state
\begin{equation}\label{eq:limit2}
  \lim_{n,d \to \infty} \delta(z) - m(z) \to 0
\end{equation}
almost surely for some $m(z)$ to be defined; (ii) show that both $\delta(z)$ and its derivate $\delta'(z)$ are uniformly bounded (by some quantity independent of $z>0$) so that by Arzela-Ascoli theorem, $\delta(z)$ converges uniformly to its limit and we are allowed to take $z \to 0^+$ in \eqref{eq:limit2} and state
\begin{equation}\label{eq:limit3}
  \lim_{z \to 0^+} \lim_{n,d \to \infty} \delta(z) - \lim_{z \to 0^+} m(z) \to 0
\end{equation}
almost surely, given that the limit $\lim_{z \to 0^+} m(z) \equiv m(0)$ exists and eventually (iii) exchange the two limits in \eqref{eq:limit3} with Moore-Osgood theorem, to reach
\[
  \lim_{n,d \to \infty} \lim_{z \to 0^+} \frac1n \tr \left(\frac1n \X \X^\top + z \I_n \right)^{-1} - m(0) \to 0.
\]

Step (i) follows from \cite{silverstein1995empirical} that, we have, for $z > 0$ that
\[
  \delta(z) \equiv \frac1n \tr \left( \frac1n \X \X^\top  + z \I_n \right)^{-1}  - m(z) \to 0
\]
almost surely as $n,d \to \infty$, for $m(z)$ the unique positive solution to
\begin{equation}\label{eq:def-m}
  m(z) = \left( z + \frac1n \tr \Sigmab (\I + m(z) \Sigmab)^{-1} \right)^{-1}.
\end{equation}

For the above step (ii), we use the assumption $\Sigmab \succeq c \I \succ 0$
for all $d$ large, so that with $\X = \Z \Sigmab^{\frac12}$, we have for large enough $n,d$ that
\[
  \lambda_{\min} (\X \X^\top/n) \ge \lambda_{\min} (\Z \Z^\top/n) \lambda_{\min} (\Sigmab) \ge \frac{c}2 (\sqrt{\bar c} - 1 )^2
\]
almost surely, where we used Bai-Yin theorem \cite{bai1993limit}, which states that the minimum eigenvalue of $\Z \Z^\top/n$ is almost surely larger than $( \sqrt{\bar c} - 1 )^2/2$ for $n<d$ sufficiently large. Note that here the case $\bar c = 1$ is excluded.

Observe that
\[
  |\delta(z)| = \left| \frac1n \tr \left( \frac1n \X \X^\top  + z \I_n \right)^{-1} \right| \le \frac1{ \lambda_{\min} (\X \X^\top/n) }
\]
and similarly for its derivative, so that we are allowed to take the $z \to 0^+$ limit. Note that the existence of the $\lim_{z \to 0^+} m(z)$ for $m(z)$ defined in \eqref{eq:def-m} is well known, see for example \cite{ledoit2011eigenvectors}. Then, by Moore-Osgood theorem we finish step (iii) and by concluding that
\[
  \tr (\X^\top \X)^\dagger - m(0) \to 0
\]
for $m(0) = \lambda_n^{-1}$ the unique solution to $\lambda_n^{-1} = \left( \frac1n \tr \Sigmab (\I + \lambda_n^{-1} \Sigmab)^{-1} \right)^{-1}$, or equivalently, to
\[
  n = \tr \Sigmab (\Sigmab + \lambda_n \I)^{-1}
\]
as desired.

\subsubsection{The $\bar c \in (1, \infty)$ case}

First note that as $n, d \to \infty$ with $n > d$, we have $\beta_n = e^{d-n} \to 0$ and it
it suffices to show
\[
  \tr (\X^\top \X)^\dagger - \frac1{n-d} \tr \Sigmab^{-1} \to 0
\]
almost surely to conclude the proof.

In the $\bar c \in (1, \infty)$ case, it is more convenient to work on the following co-resolvent
\[
  \lim_{n,d \to \infty} \lim_{z \to 0^+} \frac1n \tr \left( \frac1n \X^\top \X + z \I_d \right)^{-1}
\]
where we recall $\X^\top \X = \Sigmab^{\frac12} \Z^\top \Z \Sigmab^{\frac12} \in \R^{d \times d}$ and following the same three-step procedure as in the $\bar c < 1$ case above. The only difference is in step (i) we need to assess the asymptotic behavior of $\delta \equiv \tr (\X^\top \X/n + z \I_d)^{-1}/n$. This was established in \cite{bai1998no} where it was shown that, for $z > 0$ we have
\[
  \frac1n \tr (\X^\top \X/n + z \I_d)^{-1} - \frac{d}n m(z) \to 0
\]
almost surely as $n,d \to \infty$, for $m(z)$ the unique solution to
\[
  m(z) = \frac1d \tr \left( \left(1- \frac{d}n - \frac{d}n z m(z) \right) \Sigmab - z \I_d \right)^{-1}
\]
so that for $d < n$ by taking $z = 0$ we have
\[
  m(0) = \frac{n}d \frac1{n-d} \tr \Sigmab^{-1}.
\]
The steps (ii) and (iii) follow exactly the same line of arguments as the $\bar c < 1$ case and are thus omitted.

\subsection{Proof of \cref{c:projection}}
\label{sec:proof-of-c-projection}

Since $\X^\dagger \X = \X^\top (\X \X^\top)^\dagger \X$, to prove
\cref{c:projection}, we are interested in the limiting behavior of the
following quadratic form
\[
  \lim_{n,d \to \infty} \lim_{z \to 0^+} \frac1n \w^\top \X^\top \left( \frac1n \X \X^\top + z \I_n \right)^{-1} \X \w
\]
for deterministic $\w \in \mathbb R^{d}$ of bounded Euclidean norm (i.e., $\| \w \| \le C'$ as $n,d \to \infty$), as $n,d \to \infty$ with $n/d \to \bar c \in (0,1)$. The limiting behavior of the above quadratic form, or more generally, bilinear form of the type $\frac1n \w_1^\top \X^\top \left( \frac1n \X \X^\top + z \I_n \right)^{-1} \X \w_2$ for $\w_1, \w_2 \in \mathbb R^{d}$ of bounded Euclidean norm are widely studied in random matrix literature, see for example \cite{hachem2013bilinear}.

For the proof of \Cref{c:projection} we follow the same protocol as that of
\Cref{c:wishart}, namely: (i) we consider, for fixed $z > 0$, the limiting
behavior of $\frac1n \w^\top \X^\top \left( \frac1n \X \X^\top + z \I_n
\right)^{-1} \X \w$. Note that
\begin{align*}
  \delta(z) &\equiv \frac1n \w^\top \X^\top \left( \frac1n \X \X^\top + z \I_n \right)^{-1} \X \w = \w^\top \left( \frac1n \X^\top \X + z \I_d \right)^{-1} \frac1n \X^\top \X \w \\
  &= \| \w \|^2 - z \w^\top \left( \frac1n \X^\top \X + z \I_d \right)^{-1} \w
\end{align*}
and it remains to work on the second $z \w^\top \left( \frac1n \X^\top \X + z \I_d \right)^{-1} \w$ term.
It follows from \cite{hachem2013bilinear} that
\[
   %\delta(z) = \| \w \|^2 - z \w^\top \left( \frac1n \X^\top \X + z \I_d \right)^{-1} \w  \to \| \w \|^2 - \w^\top (\I_d + m(z) \Sigmab)^{-1} \w^\top
   z \w^\top \left( \frac1n \X^\top \X + z \I_d \right)^{-1} \w - \w^\top (\I_d + m(z) \Sigmab)^{-1} \w^\top \to 0
\]
almost surely as $n,d \to \infty$, where we recall $m(z)$ is the unique solution to \eqref{eq:def-m}.

We move on to step (ii), under the assumption that $c \le \lambda_{\min} (\Sigmab) \le \lambda_{\max} (\Sigmab) \le C$ and $\| \w \| \le C'$, we have
\begin{align*}
  \lambda_{\max} \left( \frac1n \X^\top \left( \frac1n \X \X^\top + z \I_n \right)^{-1} \X \right) &\le \frac{ \lambda_{\max} (\X \X^\top/n) }{ \lambda_{\min} (\X \X^\top/n) + z } \le \frac{ \lambda_{\max} (\Z \Z^\top/n) \lambda_{\max} (\Sigmab) }{ \lambda_{\min} (\Z \Z^\top/n) \lambda_{\min} (\Sigmab) } \\
  &\le 4 \frac{ (\sqrt{\bar c} + 1)^2 C }{ (\sqrt{\bar c} -1)^2 c }
\end{align*}
so that $\delta(z)$ remains bounded and similarly for its derivative
$\delta'(z)$, which, by Arzela-Ascoli theorem, yields uniform
convergence and we are allowed to take the $z \to 0^+$
limit. Ultimately, in step (iii) we exchange the two limits with
Moore-Osgood theorem, concluding the proof.

\subsection{Finishing the proof of Theorem~\ref{t:asymptotic}}

To finish the proof of Theorem~\ref{t:asymptotic}, it remains to write
\[
  \MSE{\X^\dagger\y} = \sigma^2\E\big[\tr\big((\X^\top\X)^{\dagger}\big)\big] +
    \w^{*\top}\E\big[\I-\X^\dagger\X\big]\w^*
\]
Since $\lambda_n = \frac{d-n}{ \tr (\Sigmab + \lambda_n \I)^{-1} }$,
by \Cref{c:wishart} and \Cref{c:projection} we have
$\MSE{\X^\dagger\y} - \Mc(\Sigmab, \w^*, \sigma^2, n) \to 0$ as $n,d
\to \infty$ with $n/d \to \bar c \in (0,\infty) \setminus \{ 1 \}$,
which concludes the proof of Theorem~\ref{t:asymptotic}.

\section{Additional details for empirical evaluation}
\label{a:empirical}

Our empirical investigation of the rate of asymptotic convergence in \Cref{t:asymptotic} (and, more
specifically, the variance and bias discrepancies defined in
Section~\ref{sec:asymp-conj-details}), in the context of
Gaussian random matrices, is related to open problems which have been
extensively studied in the literature. Note that when
$\X=\Z\Sigmab^{1/2}$ were $\Z$ has i.i.d.~Gaussian entries (as in
Section \ref{sec:asymp-conj-details}), then $\W=\X^\top\X$ is known as the
pseudo-Wishart distribution (also called the singular Wishart),
denoted as $\W \sim \Pc\Wc(\Sigmab, n)$, and the variance
term from the MSE can be written as $\sigma^2\E[\tr(\W^\dagger)]$.
\cite{srivastava2003} first derived the probability density function of the
pseudo-Wishart distribution, and
\cite{cook2011} computed the first and second moments of generalized
inverses. However, for the Moore-Penrose inverse and arbitrary 
covariance $\Sigmab$, \cite{cook2011} claims that the quantities required to
express the mean ``do not have tractable closed-form representation.''
The bias term, $\w^{*\top}\E[\I-\X^\dagger\X]\w^*$, has connections to
directional statistics.  Using the SVD, 
we have the equivalent representation $\X^\dagger \X = \V \V^\top$ where $\V$
is an element of the Stiefel manifold $V_{n,d}$ (i.e., orthonormal $n$-frames
in $\R^d$).  The distribution of $\V$ is known as the matrix angular central
Gaussian (MACG) distribution \citep{chikuse1990matrix}. While prior work has
considered high dimensional limit theorems \citep{CHIKUSE1991145} as well as
density estimation and hypothesis testing \citep{CHIKUSE1998188} on $V_{n,d}$,
they only analyzed the invariant measure (which corresponds in our setting to
$\Sigmab = \I$), and to our knowledge a closed form expression of
$\E[\V\V^\top]$ where $\V$ is distributed according to MACG with arbitrary
$\Sigmab$ remains an open question.

For analyzing the rate of decay of variance and bias discrepancies (as
defined in Section \ref{sec:asymp-conj-details}), it suffices to only consider diagonal
covariance matrices $\Sigmab$.  This is because if $\Sigmab = \Q \D \Q^\top$ is
its eigendecomposition and $\X\sim\Nc_{n,d}(\zero, \I_n \otimes \Q\D\Q^\top)$,
then we have for $\W \sim \Pc\Wc(\Sigmab, n)$ that $\W \overset{d}{=} \X^\top
\X$ and hence, defining $\Xt\sim\Nc_{n,d}(\zero,\I_n \otimes \D)$, by linearity
and unitary invariance of trace,
\begin{align*}
  \E[\tr(\W^\dagger)]
  &= \tr\big( \E[(\X^\top\X)^\dagger] \big)
  = \tr\Big( \Q\E\big[(\Xt^\top\Xt)^\dagger\big]\Q^\top \Big)
  = \tr\Big( \E\big[(\Xt^\top\Xt)^\dagger\big] \Big)
  = \E\left[\tr \big((\Xt^\top\Xt)^\dagger\big) \right].
\end{align*}
Similarly, we have that $\E[\X^\dagger\X]=\Q\E\big[\Xt^\dagger\Xt\big]\Q^\top$,
and a simple calculation shows that the bias discrepancy is
also independent of the choice of matrix $\Q$.

In our experiments, we increase $d$ while keeping the aspect ratio $n/d$
fixed and examining the rate of decay of the discrepancies.
We estimate $\E\big[\tr(\W^\dagger)\big]$ (for the variance) and
$\E[\I-\X^\dagger\X]$ (for the bias) through Monte Carlo sampling.
Confidence intervals are constructed using ordinary bootstrapping for
the variance. We rewrite the supremum over $\w$ in bias discrepancy as
a spectral norm: 
\[\big\|\Bc(\Sigmab,n)^{-\frac12}\E[\I-\X^\dagger\X]\Bc(\Sigmab,n)^{-\frac12} -
  \I\big\|,\]
and apply existing methods for constructing bootstrapped operator
norm confidence intervals described in \cite{lopes2019bootstrapping}.  To
ensure that estimation noise is sufficiently small, we continually increase the
number of Monte Carlo samples until the bootstrap confidence intervals are
within $\pm 12.5\%$ of the measured discrepancies.  We found that while
variance discrepancy required a relatively small number of trials (up
to one thousand), estimation noise was much larger for the bias
discrepancy, and it necessitated over two million trials to obtain
good estimates near $d=100$. 

\subsection{Eigenvalue decay profiles}
\label{sec:eig-decay-details}

Letting $\lambda_i(\Sigmab)$ be the $i$th largest eigenvalue of
$\Sigmab$, we consider the
following eigenvalue profiles (visualized in Figure~\ref{fig:eig-decays}):
\begin{itemize}
  \item \texttt{diag\_linear}: linear decay, $\lambda_i(\Sigmab)= b-a i$;
  \item \texttt{diag\_exp}: exponential decay, $\lambda_i(\Sigmab) = b\,10^{- a i} $;
  \item \texttt{diag\_poly}: fixed-degree polynomial decay, $\lambda_i(\Sigmab) = (b-a i)^2$;
  \item \texttt{diag\_poly\_2}: variable-degree polynomial decay, $\lambda_i(\Sigmab) = b i^{-a}$.
\end{itemize}
The constants $a$ and $b$ are chosen to ensure $\lambda_{\text{max}}(\Sigmab) = 1$ and
$\lambda_{\text{min}}(\Sigmab) = 10^{-4}$ (i.e., the condition number
$\kappa(\Sigmab) = 10^{4}$ remains constant).

\end{document}